\documentclass[11pt]{article}
\usepackage{amsmath,amsbsy,amsfonts,amssymb,amsthm}
\usepackage{dsfont,fullpage,units}
\usepackage{graphicx}
\usepackage{algorithm,algorithmic,mathtools}
\usepackage{color,cases}
\usepackage{tikz}
\usetikzlibrary{calc,shapes}
\usepackage{subfigure}
\usepackage{hyperref}
\usepackage{float}
\usepackage{authblk}

\newtheorem{claim}{Claim}

\newtheorem{theorem}{Theorem}[section]
\newtheorem{proposition}[theorem]{Proposition}
\newtheorem{corollary}[theorem]{Corollary}

\newtheorem{lemma}{Lemma}

\theoremstyle{definition}

\newtheorem{definition}{Definition}

\newtheorem{remark}{Remark}

\newcommand\inner[1]{\langle #1 \rangle}

\newcommand\poly{\operatorname{poly}}

\DeclareMathOperator{\erf}{erf}
\newcommand{\Exp}{\mathop{\mathbb E}\displaylimits}
\newcommand{\E}{\mathbb E}

\newcommand{\eps}{\epsilon}

\def\shownotes{0}  %set 1 to show author notes
\ifnum\shownotes=1
\newcommand{\authnote}[2]{{ $\ll$\textsf{\footnotesize #1 notes: #2}$\gg$}}
\else
\newcommand{\authnote}[2]{}
\fi
\newcommand{\Tnote}[1]{{\authnote{\color{blue}Tengyu}{#1}}}

%\author{}
% % Information Theory

\newcommand{\I}{\textup{I}}
\newcommand{\h}{\textup{h}}
\newcommand{\KL}{\textup{D}_{\textup{kl}}}
\newcommand{\TV}[1]{\|#1\|_{\textrm{TV}}}
\newcommand{\ent}{\textup{Ent}}

\newcommand{\calD}{\mu}
\newcommand{\calP}{\mathcal{P}}
\newcommand{\bs}[1]{\boldsymbol{#1}}

\newcommand{\Bern}{B}

\newcommand{\ic}{\textrm{IC}}
\newcommand{\mic}{\textup{min-IC}}
\newcommand{\cc}{\textrm{CC}}

\floatstyle{boxed}
\newfloat{Protocol}{ht}{pro}

\newcommand{\ignore}[1]{}
\newcommand{\Task}{T}
\newcommand{\Taskdet}{T_{det}}
\newcommand{\SGME}{\textup{SGME}}
\newcommand{\GD}{\textup{GD}}
\newcommand{\GME}{\textup{GME}}
\newcommand{\SLR}{\textup{SLR}}
\newcommand{\wtX}{\widetilde{X}}

\newcommand{\pub}{R_{\textup{pub}}}
\title{Communication Lower Bounds for Statistical Estimation Problems via a Distributed Data Processing Inequality}
\author[1]{Mark Braverman}
\author[1]{Ankit Garg}
\author[1]{Tengyu Ma}
\author[2]{Huy L. Nguyen}
\author[3]{David P. Woodruff}
\affil[1]{Princeton University}
\affil[2]{Toyota Technological Institute at Chicago}
\affil[3]{IBM Research Almaden}
%	\and Ankit Garg  \thanks{Princeton University}\and Tengyu Ma \thanks{Princeton University}\and Huy L. Nguyen \thanks{Simons Institute, Berkeley}\and David P. Woodruff \thanks{IBM Research}}
\begin{document}
\maketitle
	\setcounter{page}{0}
\abstract{
	%We consider the problem of distributely estimating the mean of a Gaussian distribution using $m$ machines where each machine has $n$ samples. 

	We study the tradeoff between the statistical error and communication cost of distributed statistical estimation problems in high dimensions. In the distributed sparse Gaussian mean estimation problem, each of the $m$ machines receives $n$ data points from a $d$-dimensional Gaussian distribution with unknown mean $\theta$ which is promised to be $k$-sparse.  The machines communicate by message passing and aim to estimate the mean $\theta$. We provide a tight (up to logarithmic factors) tradeoff between the estimation error and the number of bits communicated between the machines. This directly leads to a lower bound for the distributed \textit{sparse linear regression} problem: to achieve the statistical minimax error, the total communication is at least $\Omega(\min\{n,d\}m)$, where $n$ is the number of observations that each machine receives and $d$ is the ambient dimension. These lower results improve upon~\cite{Shamir14online,duchi15} by allowing multi-round iterative communication model. 
	We also give the first optimal simultaneous protocol in the dense case for mean estimation. 
	
	As our main technique, we prove a \textit{distributed data processing inequality}, as a generalization of usual data processing inequalities, which might be of independent interest and useful for other problems.

	 %which is tight when the number of local observations is comparable to the ambient dimension. 
	
	%As a direct consequence, it implies that for distributed spare linear regression problem, to achieve the statistical minimax error, the number of bits that machines need to send is at least $\Omega(\min\{n,d\}\cdot m)$.  %high-dimensional statistical parameter estimation problems with sparse structure. For the problem of estimating 
	
	}
	\newpage

\section{Introduction}

%\Tnote{Motivation of solving statistical/ML task distributedly.. Large data set }
Rapid growth in the size of modern data sets has fueled a lot of interest in solving statistical and machine learning tasks in a distributed environment using multiple machines. %Each machine can only store a fraction of the data and they communicate via message passing.  
Communication between the machines has emerged as an important resource and sometimes the main bottleneck.  A lot of recent work has been devoted to design communication-efficient learning algorithms~\cite{Duchi12, DBLP:journals/jmlr/ZhangDW13, ZhangXiao, kvw14, lbkw14, SSZ14, LSQT15}.  

In this paper we consider statistical estimation problems in the distributed setting, which can be formalized as follows. There is a family of distributions $\mathcal{P} = \{\mu_{\theta}: \theta\in \Omega \subset\mathbb{R}^d\}$ that is parameterized by $\theta\in \mathbb{R}^d$. Each of the $m$ machines is given $n$ i.i.d samples drawn from an unknown distribution $\mu_{\theta}\in \mathcal{P}$. The machines communicate with each other by message passing, and do computation on their local samples and the messages that they receives from others.  Finally one of the machines needs to output an estimator $\hat{\theta}$ and the statistical error is usually measured by the mean-squared loss $\Exp[\|\hat{\theta} -\theta\|^2]$. We count the communication between the machines in bits. 

This paper focuses on understanding the fundamental tradeoff between communication and the statistical error for high-dimensional statistical estimation problems. Modern large datasets are often equipped with a high-dimensional statistical model, while communication of high dimensional vectors could potentially be expensive.  It has been shown by Duchi et al.~\cite{DBLP:journals/corr/DuchiJWZ14} and Garg et al.~\cite{DBLP:conf/nips/GargMN14} that for the linear regression problem, the communication cost must scale with the dimensionality for achieving optimal statistical minimax error -- not surprisingly, the machines have to communicate high-dimensional vectors in order to estimate high-dimensional parameters. %The direct-sum theorem of~\cite{DBLP:conf/nips/GargMN14} also suggests that without addition structure assumptions, this phenomenon exists fundamentally. 

These negative results naturally lead to the interest in high-dimensional estimation problems with additional sparse structure on the parameter $\theta$. It has been well understood that the statistical minimax error typically depends on the intrinsic dimension, that is, the sparsity of the parameters, instead of the ambient dimension\footnote{the dependency on the ambient dimension is typically logarithmic.}. Thus it is natural to expect that the same phenomenon also happens for communication. 

However, this paper disproves this possibility in the interactive communication model by proving that for the \textit{sparse Gaussian mean estimation} problem (where one estimates the mean of a Gaussian distribution which is promised to be sparse, see Section~\ref{sec:setup} for the formal definition), in order to achieve the statistical minimax error, the communication must scale with the ambient dimension. %We also provide the full tradeoff between statistical error and communication. 
On the other end of the spectrum, if alternatively the communication only scales with the sparsity, then the statistical error must scale with the ambient dimension (see Theorem~\ref{thm:sparse-gaussian-mean}). Shamir~\cite{Shamir14online} establishes the same result for the 1-sparse case under a non-iterative communication model. %\Tnote{Cited Shamir's paper}

Our lower bounds for the Gaussian mean estimation problem imply lower bounds for the \textit{sparse linear regression} problem (Corollary~\ref{cor:sparse-linear-regression}) via the reduction of~\cite{ZDJW13}: for a Gaussian design matrix, to achieve the statistical minimax error, the communication cost per machine needs to be $\Omega(\min\{n,d\})$ where $d$ is the ambient dimension and $n$ is the dimension of the observation that each machine receives. %Lee et al.~\cite{LSQT15} shows that $O(d)$ bits per machine suffice when %
This lower bound matches the upper bound in~\cite{LSQT15} when $n$ is larger than $d$. When $n$ is less than $d$, we note that it is not clear whether $O(n)$ or $O(d)$ should be the minimum communication cost per machine needed. In any case, our contribution here is in proving a lower bound that does not depend on the sparsity. Compared to previous work of Steinhardt and Duchi~\cite{duchi15}, which proves the same lower bounds for a memory-bounded model, our results work for a stronger communication model where multi-round iterative communication is allowed. Moreover, our techniques are possibly simpler and potentially easier to adapt to related problems. For example, we show that the result of Woodruff and Zhang \cite{WZ12} on the information complexity of distributed gap majority can be reproduced by our technique with a cleaner proof (see Theorem~\ref{thm:gap_majority}).  %\Tnote{Tried to emphasize the cleaness of our proofs. }

We complement our lower bounds for this problem in the dense case by providing a new simultaneous protocol, improving the number of rounds of the previous communication-optimal protocol from $O(\log m)$ to $1$ (see Theorem~\ref{thm:one-way-upper-bound}). Our protocol is based on a certain combination of many bits from 
a few Gaussian samples, together with roundings (to a single bit) of the fractional parts of many 
Gaussian samples. 
 
 %our result work under a much stronger communication model %proved the same result under a much weaker communication model   %We note that compared to~\cite{duchi15}, our lower bounds hold in a stronger communication model where multi-round interactive communication is allowed\footnote{we will update with a quantitative comparison when~\cite{duchi15} is available online}.  %We provide the complete tradeoff  

Our proof techniques are potentially useful for other questions along these lines. We first use a modification of the direct-sum result of~\cite{DBLP:conf/nips/GargMN14}, which is tailored towards sparse problems, to reduce the estimation problem to a detection problem. Then we prove what we call a \textit{distributed data processing inequality} for bounding from below the cost of the detection problem. The latter is the crux of our proofs.
% being potentially useful for other problems in computer science, information theory, and machine learning. 
We elaborate more on it in the next subsection. 

\subsection{Distributed Data Processing Inequality}
%We first introduce strong data processing inequality. 

%In most parts of this paper, 
We consider the following distributed detection problem. As we will show in Section~\ref{sec:gaussian-mean} (by a direct-sum theorem), it suffices to prove a tight lower bound in this setting, in order to prove a lower bound on the communication cost for the sparse linear regression problem. 

\noindent{\bf Distributed detection problem: } We have a family of distributions $\mathcal{P}$ that consist of only two distributions $\{\mu_0,\mu_1\}$, and the parameter space $\Omega = \{0,1\}$. To facilitate the use of tools from information theory, sometimes it is useful to introduce a prior over the parameter space. Let $V\sim \Bern_q$ be a Bernoulli random variable with probability $q$ of being $1$. Given $V =v \in \{0,1\}$, we draw i.i.d. samples $X_1,\dots,X_m$ from $\mu_v$ and the $j$-th machine receives one sample $X_j$, for $j =1,\dots,m$. We use $\Pi\in \{0,1\}^*$ to denote the sequences of messages that are communicated by the machines.  We will refer to $\Pi$ as a ``transcript", and the distributed algorithm that the machines execute as a ``protocol". 

The final goal of the machines is to output an estimator for the hidden parameter $v$ which is as accurate as possible. 
%In this simple setting when $\Omega = \{0,1\}$, 
We formalize the estimator as a (random) function $\hat{v}: \{0,1\}^* \rightarrow \{0,1\}$ that takes the transcript $\Pi$ as input. We require that given $V = v$, the estimator is correct with probability at least $3/4$, that is, $\min_{v\in \{0,1\}} \Pr[\hat{v}(\Pi) = v\mid V=v] \ge 3/4$. 
%The error of the estimator is measured by $\max_{v\in \{0,1\}}\Pr[\hat{v}(\Pi)\neq v\mid V=v]$. 
When $q = 1/2$, this is essentially equivalent to the statement that the transcript $\Pi$ carries $\Omega(1)$ information about the random variable $V$. Therefore, the mutual information $\I(V;\Pi)$ is also used as a convenient measure for the quality of the protocol when $q = 1/2$.  %carries as much information as possible about $v$, when $V$ has an unbiased distribution, that is, $q = 1/2$. %However, given the distribution of transcripts $\Pi$ and the prior of $V$, the best estimator $\hat{v}$ should just output the most likely $v$ given the transcript. For our purposes, it suffices to equate the accuracy of the estimator with the quality of the protocol  
%In this simple setting when $\Omega = \{0,1\}$, we formalize the estimator as a (random) function $\hat{v}: \{0,1\}^* \rightarrow \{0,1\}$ that takes the transcript $\Pi$ as input. %The error of the estimator is measured by $\max_{v\in \{0,1\}}\Pr[\hat{v}(\Pi)\neq v\mid V=v]$
%The machines are required to output an estimator $\hat{v} \in \{0,1\}$ such that for any $v\in \{0,1\}$ %conditioned on $V = v$, $\hat{v} = v$ with probability 3/4. 

\noindent{\bf Strong data processing inequality: } The mutual information viewpoint of the accuracy naturally leads us to the following approach for studying the simple case when $m = 1$ and $q = 1/2$. When $m=1$, we note that the parameter $V$, data $X$, and transcript $\Pi$ form a simple Markov chain $V\rightarrow X \rightarrow \Pi$. The channel $V\rightarrow X$ is defined as $X\sim \mu_v$, conditioned on $V = v$. %Then how long does $\Pi$ need to be, so that given $\Pi$ one can recover the value of $V$ approximately? This question can be answered by the inequality, 
The strong data processing inequality (SDPI) captures the relative ratio between $\I(V;\Pi)$ and $\I(X;\Pi)$. %We introduce a special case of SDPI that is especially useful for us. 

\begin{definition}[Special case of SDPI]\label{def:sdpi-intro} Let $V\sim B_{1/2}$ and the channel $V\rightarrow X$ be defined as above. Then there exists a constant $\beta\le 1$ that depends on $\mu_0$ and $\mu_1$, such that for any $\Pi$ that depends only on $X$ (that is, $V\rightarrow X\rightarrow \Pi$ forms a Markov Chain), we have 
	\begin{equation}
	\I(V;\Pi)\le \beta \cdot \I(X;\Pi) \label{eqn:data-processing-inequality-intro}. 
	\end{equation}
	
	An inequality of this type is typically referred to as a \textit{strong data processing inequality for mutual information} when $\beta < 1$ \footnote{Inequality~\eqref{eqn:data-processing-inequality-intro} is always true for a Markov chain $V\rightarrow X \rightarrow \Pi$ with $\beta  = 1$ and this is called the data processing inequality.}. Let $\beta(\mu_0,\mu_1)$ be the infimum over all possible $\beta$ such that ~\eqref{eqn:data-processing-inequality-intro} is true, which we refer to as the {\bf SDPI constant}. %the constant $\beta$ is referred to as SDPI constant. 
\end{definition}
%\begin{equation}
%
%\end{equation}

%where $\beta \le 1$ is a constant that measures the closeness of $\mu_0$ and $\mu_1$. 
 Observe that the LHS of~\eqref{eqn:data-processing-inequality-intro} measures how much information $\Pi$ carries about $V$, which is closely related to the accuracy of the protocol. The RHS of~\eqref{eqn:data-processing-inequality-intro} is a lower bound on the expected length of $\Pi$, that is, the expected communication cost. Therefore the inequality relates two quantities that we are interested in - the statistical quality of the protocol and the communication cost of the protocol. Concretely, when $q = 1/2$, in order to recover $V$ from $\Pi$, we need that $\I(V;\Pi) \ge \Omega(1)$, and therefore inequality~\eqref{eqn:data-processing-inequality-intro} gives that $\I(X;\Pi)\ge \Omega(\beta^{-1})$. Then it follows from Shannon's source coding theory that the expected length of $\Pi$ (denoted by $|\Pi|$) is bounded from below by $\Exp[|\Pi|] \ge \Omega(\beta^{-1})$. We refer to~\cite{Raginsky14} for a thorough survey of SDPI.\footnote{Also note that in information theory, SDPI is typically interpreted as characterizing how information decays when passed through the reverse channel $X\rightarrow V$. That is, when the channel $X\rightarrow V$ is lossy, then information about $\Pi$ will decay by a factor of $\beta$ after passing $X$ through the channel. However, in this paper we take a different interpretation that is more convenient for our applications.} 

In the multiple machine setting, Duchi et al.~\cite{DBLP:journals/corr/DuchiJWZ14} links the distributed detection problem with SDPI by showing from scratch that for any $m$, when $q = 1/2$, if $\beta$ is such that $(1-\sqrt{\beta})\mu_1\le \mu_0 \le (1+\sqrt{\beta})\mu_1$, then 
\begin{equation*}
\I(V;\Pi)\le \beta \cdot \I(X_1\dots X_m;\Pi). %\label{eqn:data-processing-ineqaulity-intro}, 
\end{equation*}
This results in the bounds for the Gaussian mean estimation problem and the linear regression problem. The main limitation of this inequality is that it requires the prior $B_q$ to be unbiased (or close to unbiased). For our target application of high-dimensional problems with sparsity structures, like sparse linear regression, in order to apply this inequality we need to put a very biased  prior $B_q$ on $V$. The proof technique of~\cite{DBLP:journals/corr/DuchiJWZ14} seems also hard to extend to this case with a tight bound\footnote{We note, though, that it seems possible to extend the proof to the situation where there is only one-round of communication.}. Moreover, the relation between $\beta$, $\mu_0$ and $\mu_1$ may not be necessary (or optimal), and indeed for the Gaussian mean estimation problem, the inequality is only tight up to a logarithmic factor, while potentially in other situations the gap is even larger. 

Our approach is essentially a prior-free multi-machine SDPI, which has the same SDPI constant $\beta$ as is required for the single machine one. We prove that, as long as the SDPI~\eqref{eqn:data-processing-inequality-intro} for a single machine is true with parameter $\beta$, and $\mu_0 \le O(1)\mu_1$, then the following prior-free multi-machine SDPI is true with the same constant $\beta$ (up to a constant factor). 

\begin{theorem}[Distributed SDPI]\label{thm:main-intro}
	Suppose $\frac{1}{c}\cdot \mu_0\le \mu_1\le c \mu_0$ for some constant $c\ge 1$, and let $\beta(\mu_0,\mu_1)$ be the SDPI constant defined in Definition~\ref{def:sdpi-intro}. %the strong data processing inequality for single machine~\eqref{eqn:data-processing-inequality-intro} is true with parameter $\beta$,
	 Then in the distributed detection problem, %Suppose $\mu_1\le O(1)\cdot\mathcal{D}_0$ and $(\mu_0,\mu_1)$ satisfies the data processing inequality (\ref{eqn:data_processing}) for parameter $\beta$, 
	we have the following distributed strong data processing inequality, 
	\begin{equation}
		\h^2(\Pi\vert_{V=0}, \Pi\vert_{V=1}) \le  Kc\beta(\mu_0,\mu_1) \cdot \min\{\I(X_1\dots X_m;\Pi\mid V=0), \I(X_1\dots X_m;\Pi\mid V=1)\}\label{eqn:dist-sdpi-intro}
	\end{equation}
	where $K$ is a universal constant, and  $\h(\cdot,\cdot)$ is the Hellinger distance between two distributions and $\Pi\vert_{V=v}$ denotes the distribution of $\Pi$ conditioned on $V=v$. 
	
	%As a consequence, if given $V = v$ for any $v\in \{0,1\}$, the estimator $\hat{v}$ that takes $\Pi$ as input can recover $v$ with probability 3/4, then $$\min_{v\in \{0,1\}}\Exp[|\Pi|\mid V=v]\ge \Omega(\beta^{-1}).$$
	Moreover, %~\eqref{eqn:dist-sdpi-intro} is tight up to constant factor. 
	for any $\mu_0$ and $\mu_1$ which satisfy the condition of the theorem, there exists a protocol that produces transcript $\Pi$ such that~\eqref{eqn:dist-sdpi-intro} is tight up to a constant factor. 
\end{theorem}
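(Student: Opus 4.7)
The plan is to reduce the distributed SDPI to a round-by-round application of the single-machine SDPI, using a KL chain rule as the bridge between the Hellinger LHS and the mutual-information RHS. First, write $\Pi = (\Pi_1, \ldots, \Pi_T)$ as the round-by-round transcript; at round $t$ some machine $J_t$ speaks, producing $\Pi_t$ as a function of $X_{J_t}$ and $\Pi_{<t}$. Because $\Pi_t$ depends on the inputs only through $X_{J_t}$, the chain rule for mutual information gives, for each $v \in \{0,1\}$,
\begin{equation*}
\I(X_1,\ldots,X_m; \Pi \mid V = v) \;=\; \sum_{t=1}^T \E_{\pi_{<t} \mid V = v}\bigl[\, \I(X_{J_t}; \Pi_t \mid \pi_{<t}, V = v)\,\bigr].
\end{equation*}
For the LHS of~\eqref{eqn:dist-sdpi-intro} I would use the standard bound $\h^2(P_0^\Pi, P_1^\Pi) \le \min\{\KL(P_0^\Pi \| P_1^\Pi),\, \KL(P_1^\Pi \| P_0^\Pi)\}$ and then expand each KL round-by-round via its chain rule, with the outer expectation taken against $\pi_{<t}$ drawn under the same $V$.

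The main step is a pointwise per-round inequality. Fix a round $t$ and history $\pi_{<t}$. Conditioned on $\pi_{<t}$ and $V = v$, the input $X_{J_t}$ follows the conditional distribution $\nu_v$, obtained by restricting $\mu_v$ to the ``rectangle slice'' $A_{\pi_{<t}}$ of inputs consistent with what machine $J_t$ has said so far and renormalizing; the two-sided hypothesis $\tfrac{1}{c}\mu_0 \le \mu_1 \le c\mu_0$ then propagates to $\tfrac{1}{c^2}\nu_0 \le \nu_1 \le c^2\nu_0$, so the conditional pair stays balanced. I would invoke the single-machine SDPI on the channel $X_{J_t} \to \Pi_t$ with input pair $(\nu_0, \nu_1)$ and an auxiliary $\Bern_{1/2}$ prior $V'$ to obtain $\I(V'; \Pi_t) \le \beta(\nu_0,\nu_1)\,\I(X_{J_t}; \Pi_t)$, then perform three bounded-ratio conversions: (i) $\beta(\nu_0,\nu_1) \le O(c)\,\beta(\mu_0,\mu_1)$ by stability of the SDPI constant under restriction; (ii) $\KL(P_0^{\Pi_t \mid \pi_{<t}} \| P_1^{\Pi_t \mid \pi_{<t}}) \le O(c)\,\I(V'; \Pi_t)$, since balanced distributions make pairwise KL equivalent to Jensen--Shannon divergence up to an $O(c)$ factor; and (iii) $\I(X_{J_t}; \Pi_t) \le O(c)\,\I(X_{J_t}; \Pi_t \mid V = 0)$ by change of measure between the $\Bern_{1/2}$-mixture and the $V=0$ slice. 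Summing the resulting pointwise inequality over $t$ yields~\eqref{eqn:dist-sdpi-intro} for the $V=0$ side; the symmetric argument gives the $V=1$ side, and together they produce the $\min$. For the matching upper bound, I would take the extremal single-machine channel achieving $\beta(\mu_0,\mu_1)$---a one-bit channel with $\h^2 = \Theta(\beta)$ while conveying $\Theta(1)$ bits about $X$---and have every machine run it independently, so that both sides of~\eqref{eqn:dist-sdpi-intro} scale as $\Theta(m\beta)$.

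The hard part will be step (i) of the per-round bound: showing that the SDPI constant of the conditional pair $(\nu_0,\nu_1)$ exceeds $\beta(\mu_0,\mu_1)$ by at most an $O(c)$ factor. Because $\beta$ is an infimum over all output channels, controlling $\beta(\nu_0,\nu_1)$ requires either a variational characterization of the SDPI constant or an explicit embedding of any $\nu_v$-channel back into a $\mu_v$-channel via the bounded-likelihood-ratio property, with constants tracked so that the $c$-dependence remains linear rather than compounding. Steps (ii) and (iii) are more standard bounded-ratio conversions, but they must be executed carefully so that no extra factors accumulate across the $T$ rounds of the telescoping sum.
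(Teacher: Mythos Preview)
Your round-by-round plan has a genuine gap at step~(i): the claim that conditioning on a rectangle slice can only inflate the SDPI constant by $O(c)$ is \emph{false}. Take $\mu_0=(p,p,1-2p)$ and $\mu_1=(p(1+\epsilon),p(1-\epsilon),1-2p)$ on $\{a_1,a_2,b\}$, so $c=(1+\epsilon)/(1-\epsilon)$ is as close to $1$ as we like. Using the characterization $\beta=\sup_{\nu}\KL(\nu K\|\mu K)/\KL(\nu\|\mu)$ from Lemma~\ref{lem:sdpi-info-spdi}, the numerator is always at most $\Theta(\epsilon^2)(\nu(a_1)-\nu(a_2))^2$ while the denominator is at least $\Theta(\ln(1/p))$ whenever $|\nu(a_1)-\nu(a_2)|$ is bounded away from zero; the supremum is attained near $\nu=\delta_{a_1}$ and gives $\beta(\mu_0,\mu_1)=\Theta(\epsilon^2/\ln(1/p))$. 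Restricting to $A=\{a_1,a_2\}$ yields $\nu_0=(1/2,1/2)$, $\nu_1=((1+\epsilon)/2,(1-\epsilon)/2)$, for which $\beta(\nu_0,\nu_1)=\Theta(\epsilon^2)$. The ratio is $\Theta(\ln(1/p))$, unbounded in $p$ while $c$ stays fixed. Concretely, for the single-machine two-round protocol that first announces $\mathbb{1}[X\in A]$ and then (on that event) announces $X$, your per-round inequality at round~2 would require $\Theta(\epsilon^2)\le O(c)\beta(\mu_0,\mu_1)\cdot\Theta(1)$, which fails. The \emph{overall} theorem still holds here---round~1 costs $\Theta(p\ln(1/p))$ bits of information about $X$ while contributing nothing to the Hellinger side---but your telescoping decomposition cannot see this compensation across rounds.

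The paper sidesteps the issue entirely by decomposing \emph{per machine} rather than per round. For each $i$ it runs the thought experiment in which only machine $i$'s input is drawn from $\mu_W$ with $W\sim\Bern_{1/2}$ and all other machines draw from $\mu_0$; then $W\to X_i'\to\Pi'$ is a Markov chain and the single-machine SDPI applies with the \emph{original} constant $\beta(\mu_0,\mu_1)$, no conditioning required (Lemma~\ref{lem:mutual-info-hellinger}). This yields $\h^2(\Pi_{\bs 0},\Pi_{\bs e_i})\le O(c)\beta\,\I(X_i;\Pi\mid V=0)$. The pieces are reassembled not by a KL chain rule but by the cut-paste property of protocols (Proposition~\ref{prop:cut-paste}) together with the Hellinger subadditivity of~\cite{Hellinger}, giving $\h^2(\Pi_{\bs 0},\Pi_{\bs 1})\le O(1)\sum_i\h^2(\Pi_{\bs 0},\Pi_{\bs e_i})$ (Lemma~\ref{lem:hellinger-decomposition}). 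The only bounded-ratio conversion needed is the trivial one in Lemma~\ref{lem:continuity}, comparing $\I(X_i';\Pi')$ under the mixture to $\I(X_i;\Pi\mid V=0)$; no stability of $\beta$ under conditioning is ever invoked.
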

As an immediate consequence, we obtain a lower bound on the communication cost for the distributed detection problem. 
\begin{corollary}\label{cor:main}
	Suppose the protocol and estimator $(\Pi,\hat{v})$ are such that for any $v\in \{0,1\}$, given $V = v$ , the estimator $\hat{v}$ (that takes $\Pi$ as input) can recover $v$ with probability $3/4$. Then $$\max_{v\in \{0,1\}}\Exp[|\Pi|\mid V=v]\ge \Omega(\beta^{-1}).$$
\end{corollary}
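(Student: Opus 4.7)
The plan is a three-step reduction: convert the success-probability guarantee into a lower bound on the Hellinger distance between the two conditional distributions of $\Pi$, feed that into the distributed SDPI (Theorem~\ref{thm:main-intro}) to obtain a lower bound on the conditional mutual information $\I(X_1,\dots,X_m;\Pi\mid V=v)$, and finally convert that mutual information into an expected-length bound via Shannon's source coding theorem.

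First I would observe that the estimator guarantee $\min_{v}\Pr[\hat{v}(\Pi)=v\mid V=v]\ge 3/4$ forces the two conditional distributions of $\Pi$ to be statistically far apart: considering the event $\{\hat v(\Pi)=1\}$ yields $\TV{\Pi\vert_{V=0}-\Pi\vert_{V=1}}\ge 1/2$. Using the standard inequality $d_{\textup{TV}}(P,Q)\le\sqrt{2}\,\h(P,Q)$ (equivalently $\h^2\ge \tfrac{1}{2}d_{\textup{TV}}^2$), this gives
\[
\h^2(\Pi\vert_{V=0},\,\Pi\vert_{V=1})\ \ge\ \tfrac{1}{8}.
\]

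Next I would plug this Hellinger lower bound directly into the right-hand side of inequality~\eqref{eqn:dist-sdpi-intro}. Under the hypothesis $\tfrac{1}{c}\mu_0\le \mu_1\le c\mu_0$ of Theorem~\ref{thm:main-intro}, it immediately follows that
\[
\tfrac{1}{8}\ \le\ Kc\,\beta(\mu_0,\mu_1)\cdot\min_{v\in\{0,1\}} \I(X_1,\dots,X_m;\Pi\mid V=v),
\]
so $\min_{v}\I(X_1,\dots,X_m;\Pi\mid V=v)\ge \Omega(\beta^{-1})$, where the hidden constant depends only on the universal $K$ and on the fixed $c$.

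Finally I would apply the standard two-step chain $\I(X_1,\dots,X_m;\Pi\mid V=v)\le \h(\Pi\mid V=v)$ \ldots\ more precisely $\I(A;B\mid V{=}v)\le H(B\mid V{=}v)\le \Exp[|\Pi|\mid V{=}v]$: the first step is the generic bound $I(A;B)\le H(B)$ applied in the conditional world, and the second is Shannon's source coding theorem applied to the prefix-free encoding of the transcript, as is standard in communication complexity. Combining the three steps,
\[
\max_{v\in\{0,1\}}\Exp[|\Pi|\mid V=v]\ \ge\ \min_{v\in\{0,1\}}\Exp[|\Pi|\mid V=v]\ \ge\ \min_v \I(X_1,\dots,X_m;\Pi\mid V=v)\ \ge\ \Omega(\beta^{-1}).
\]
The main obstacle is really Theorem~\ref{thm:main-intro} itself; conditional on that theorem, the only item requiring any care is Step~1, where the $d_{\textup{TV}}\!\to\!\h$ conversion loses a universal constant factor that is harmless, and the trivial observation that $\max\ge\min$ removes the apparent asymmetry between the corollary's statement and the minimum that Theorem~\ref{thm:main-intro} naturally bounds.
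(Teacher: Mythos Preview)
Your proposal is correct and follows essentially the same approach as the paper: the paper states Corollary~\ref{cor:main} as an ``immediate consequence'' of Theorem~\ref{thm:main-intro} without spelling out a separate proof, but the argument implicit there (and made explicit in the proof of Theorem~\ref{thm:multi-dpi-gaussian}) is precisely your three-step chain---success probability $\Rightarrow$ TV separation $\Rightarrow$ Hellinger lower bound via Lemma~\ref{lem:hellinger-tv}, then the distributed SDPI, then the standard $\I(X;\Pi)\le H(\Pi)\le \Exp[|\Pi|]$ bound.
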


Our theorem suggests that to bound the communication cost of the multi-machine setting from below, one could simply work in the single machine setting and obtain the right SDPI constant $\beta$. Then, a lower bound of $\Omega(\beta^{-1})$ for the multi-machine setting immediately follows. In other words, multi-machines need to communicate a lot to fully exploit the $m$ data points they receive ($1$ on each single machine) regardless of however complicated their multi-round protocol is.

\begin{remark}
	Note that our inequality differs from the typical data processing inequality on both the left and right hand sides. First of all, the RHS of~\eqref{eqn:dist-sdpi-intro} is always less than or equal to $\I(X_1\dots X_m; \Pi\mid V)$ for any prior $B_q$ on $V$. This allows us to have a tight bound on the expected communication $\Exp[|\Pi|]$ for the case when $q$ is very small. 
	
	Second, the squared Hellinger distance (see Definition~\ref{def:hellinger}) on the LHS of~\eqref{eqn:dist-sdpi-intro} is not very far away from $\I(\Pi;V)$, especially for the situation that we consider. It can be viewed as an alternative (if not more convenient) measure of the quality of the protocol than mutual information -- the further $\Pi\vert_{V=0}$ from $\Pi\vert_{V=1}$, the easier it is to infer $V$ from $\Pi$. %We argue that hellinger distance is sufficient for our purpose %We believe that the hellinger distance is the right measure. 
    When a good estimator is possible (which is the case that we are going to apply the bound in), Hellinger distance, total variation distance between $\Pi\vert_{V=0}$ and $\Pi\vert_{V=1}$, and $\I(V;\Pi)$ are all $\Omega(1)$. Therefore in this case, the Hellinger distance does not make the bound weaker.  %for the situation that we are mainly going to apply the bound. 
	
	Finally, suppose we impose a uniform prior for $V$. Then the squared Hellinger distance is within a constant factor of $\I(V;\Pi)$ (see Lemma~\ref{lem:info-hellinger}, and the lower bound side was proved by~\cite{BJKS04}), 
	$$2\h^2(\Pi\vert_{V=0}, \Pi\vert_{V=1})  \ge \I(V;\Pi)\ge \h^2(\Pi\vert_{V=0}, \Pi\vert_{V=1})\,.$$
	Therefore, in the unbiased case,~\eqref{eqn:dist-sdpi-intro} implies the typical form of the data processing inequality. 
	
	%Secondly, Regarding the detection problem, in the case when a good estimator is possible, which is that we are mainly interested in, both hellinger distance, TV distance between $\Pi\vert_{V=0}, \Pi\vert_{V=1}$ and $\I(V;\Pi)$ are $\Omega(1)$. %Thirdly, our technique indicates that hellinger has additive effects This allows us to get tight bound that matches the bound for multi-machine setting. 
	
\end{remark}

%\begin{remark}
%	The RHS of~\eqref{eqn:dist-sdpi-intro} is not symmetric with respect to $V=0$ and $V=1$. Though a symmetric inequality can be obtained by simply switch the role of $\mu_0$ and $\mu_1$ and make additional assumption that $\mu_0 \le O(1)\mu_1$. 
%\end{remark}

\begin{remark}
	The tightness of our inequality does not imply that there is a protocol that solves the distributed detection problem with communication cost (or information cost) $O(\beta^{-1})$. We only show that inequality~\eqref{eqn:dist-sdpi-intro} is tight for some protocol but solving the problem requires having a protocol such that~\eqref{eqn:dist-sdpi-intro} is tight and that $\h^2(\Pi\vert_{V=0}, \Pi\vert_{V=1}) = \Omega(1)$. In fact, a protocol for which inequality~\eqref{eqn:dist-sdpi-intro} is tight is one in which only a single machine sends a message $\Pi$ which maximizes $\I(\Pi;V)/\I(\Pi;X)$. 
\end{remark}

\noindent{\bf Organization of the paper: } Section~\ref{sec:setup} formally sets up our model and problems and introduces some preliminaries. Then we prove our main theorem in Section~\ref{sec:main-theorem}.  In Section~\ref{sec:gaussian-mean} we state the main applications of our theory to the sparse Gaussian mean estimation problem and to the sparse linear regression problem. The next three sections are devoted to the proofs of results in Section~\ref{sec:gaussian-mean}.   In Section~\ref{sec:direct-sum}, we prove Theorem~\ref{thm:direct_sum_sparse} and in Section~\ref{sec:app:thm:multi-dpi-gaussian} we prove Theorem~\ref{thm:multi-dpi-gaussian} and Corollary~\ref{cor:sparse-linear-regression}.  
%Section~\ref{subsec:singalstrength} contains the statement and proof of the Theorem~\ref{gaussian_signal} that are omitted in Section~\ref{sec:gaussian-mean}. 
 In Section~\ref{sec:data-processing-ineq} we provide tools for proving single machine strong data processing inequality and prove Theorem~\ref{thm:truncated-guassian-dpi-info}. In Section~\ref{sec:one-way} we
present our matching upper bound in the simultaneous communication model. In section~\ref{sec:app:gap_majority} we give a simple proof of distributed gap majority problems using our machinery. 

\section{Problem Setup, Notations and Preliminaries}\label{sec:setup}
\subsection{Distributed Protocols and Parameter Estimation Problems}

Let $\calP = \{\mu_{\theta}: \theta\in \Omega\}$ be a family of distributions over some space $\mathcal{X}$, and $\Omega\subset\mathbb{R}^d$ be the space of all possible parameters. There is an unknown distribution $\mu_{\theta}\in \mathcal{P}$, and our goal is to estimate a parameter $\theta$ using $m$ machines. Machine $j$ receives $n$ i.i.d samples $X^{(1)}_j,\dots, X^{(n)}_j$ from distribution $\mu_{\theta}$. For simplicity we will use $X_j$ as a shorthand for all the samples machine $j$ receives, that is, $X_j = (X^{(1)}_j,\dots, X^{(n)}_j)$. Therefore $X_j\sim \mu_{\theta}^n$, where $\mu^n$ denotes the product of $n$ copies of $\mu$.  When it is clear from context, we will use $X$ as a shorthand for $(X_1,\dots,X_m)$.  We define the problem of estimating parameter $\theta$ in this distributed setting formally as task $\Task(n,m,\mathcal{P})$. When $\Omega = \{0,1\}$, we call this a detection problem and refer it to as $\Taskdet(n,m,\mathcal{P})$. 

%Let $\theta: \calP\rightarrow \Theta\subset \mathbb{R}$ denote a function defined on $\calP$. 
%We are given samples $X^1,\dots, X^n$ from some $P\in \calP$, and are asked to estimate $\theta(P)$. Let $\hat{\theta} : \calX^n\rightarrow \Theta$ be such an estimator, and $\hat{\theta}(X^1,\dots, X^n)$ is the corresponding estimate. 

The machines communicate via a publicly shown blackboard. That is, when a machine writes a message on the blackboard, all other machines can see the content. The messages that are written on the blackboard are counted as communication between the machines. Note that this model captures both point-to-point communication as well as broadcast communication.
Therefore, our lower bounds in this model apply to both the message passing setting and the broadcast setting.

We denote the collection of all the messages written on the blackboard by $\Pi$. We will refer to $\Pi$ as the
transcript and note that $\Pi\in \{0, 1\}^*$
is written in bits and the communication cost is defined as the
length of $\Pi$, denoted by $|\Pi|$. We will call the algorithm that the machines follow to produce $\Pi$ a protocol. With a slight abuse of notation, we ue $\Pi$ to denote both the protocol and the transcript produced by the protocol. %and by slight abuse of notation, we use $\Pi$ to denote the protocol as well as the transcripts produced by the protocol. 

One of the machines needs to estimate the value of $\theta$ using an estimator $\hat{\theta} :\{0,1\}^*\rightarrow \mathbb{R}^d$ which takes $\Pi$ as input. The accuracy of the estimator on $\theta$ is measured by the mean-squared loss: 
$$R((\Pi,\hat{\theta}), \theta) = \Exp\left[ \|\hat{\theta}(\Pi) - \theta\|^2_2\right],$$
where the expectation is taken over the randomness of the data $X$, and the estimator $\hat{\theta}$. The error of the estimator is the supremum of the loss over all $\theta$, 
\begin{equation}
R(\Pi,\hat{\theta}) = \sup_{\theta\in \Omega}\Exp\left[ \|\hat{\theta}(\Pi) - \theta\|^2_2\right].\label{eqn:def-loss}
\end{equation}
The communication cost of a protocol is measured by the expected length of the transcript $\Pi$, that is, 
%\begin{equation}
%\cc(\Pi) = \sup_{\theta\in \Omega} \Exp[|\Pi|]\label{eqn:def-cc}
%\end{equation}
$
\cc(\Pi) = \sup_{\theta\in \Omega} \Exp[|\Pi|]\label{eqn:def-cc}
$. 
The information cost $\ic$ of a protocol is defined as the mutual information between transcript $\Pi$ and the data $X$, 
\begin{equation}
\ic(\Pi) = \sup_{\theta\in \Omega} \I_{\theta}(\Pi;X\mid \pub) \label{eqn:def-ic}
\end{equation}
where $\pub$ denotes the public coin used by the algorithm and $I_{\theta}(\Pi;X\mid \pub)$ denotes the mutual information between random variable $X$ and $\Pi$ when the data $X$ is drawn from distribution $\mu_{\theta}$. We will drop the subscript $\theta$ when it is clear from context. 

For the detection problem, we need to define minimum information cost, a stronger version of information cost
\begin{equation}
\mic(\Pi) = \min_{v\in \{0,1\}} \I_{v}(\Pi;X\mid \pub) \label{eqn:def-min-ic}
\end{equation}
 %We note that any algorithm in the message passing model can be implemented in the blackboard model and therefore for the purpose of proving lower bounds, blackboard model is stronger than the message passing model. %For the purpose of proving lower bounds for communications, this model is stronger than message passing model 
 
 \begin{definition}
 	We say that a protocol and estimator pair $(\Pi, \hat{\theta})$ solves the distributed estimation problem $T(m,n,d,\Omega,\mathcal{P})$ with information cost $I$, communication cost $C$, and mean-squared loss $R$ if $\ic(\Pi)\le I$, $\cc(\Pi)\le C$ and $R(\Pi,\hat{\theta}) \le R$. 
 %	Fix a parametrized family of distributions $\mathcal{Q} = \{\mu_{\theta} : \theta \in \mathbb{R}\}$. We say that a protocol-estimator pair $(\Pi, \hat{\vec{\theta}})$ performs the task $T(d,k,m,n,\mathcal{Q})$ with information cost $I$ and mean-squared loss $R$ when the following happens: each of the $m$ machines are each given $n$ i.i.d. samples $X_j = (X_j^{(1)},\ldots, X_j^{(n)})$, where $X_j^{(i)} \sim \mu_{\vec{\theta}} = \mu_{\vec{\theta}_1} \times \mu_{\vec{\theta}_2} \times \cdots \times \mu_{\vec{\theta}_d}$. It should hold that $I(\Pi; X_1,\ldots,X_m) \le I$ and $R((\Pi, \hat{\vec{\theta}}),\vec{\theta}) \le R$ for every $k$-sparse $\vec{\theta}$. 
 \end{definition}
 When $\Omega = \{0,1\}$, we have a detection problem, and we typically use $v$ to denote the parameter and $\hat{v}$ as the (discrete) estimator for it. We define the communication and information cost the same as~\eqref{eqn:def-cc} and~\eqref{eqn:def-ic}, while defining the error in a more meaningful and convenient way,  
 \begin{equation*}
 R_{det}(\Pi, \hat{v}) = \max_{v\in \{0,1\}} \Pr[\hat{v}(\Pi) \neq v \mid V = v]
 \end{equation*}
\begin{definition}
 	We say that a protocol and estimator pair $(\Pi, \hat{v})$ solves the distributed detection problem $\Taskdet(m,n,d,\Omega,\mathcal{P})$ with information cost $I$, if $\ic(\Pi)\le I$, $R_{det}(\Pi,\hat{v}) \le 1/4$. 
 	%	Fix a parametrized family of distributions $\mathcal{Q} = \{\mu_{\theta} : \theta \in \mathbb{R}\}$. We say that a protocol-estimator pair $(\Pi, \hat{\vec{\theta}})$ performs the task $T(d,k,m,n,\mathcal{Q})$ with information cost $I$ and mean-squared loss $R$ when the following happens: each of the $m$ machines are each given $n$ i.i.d. samples $X_j = (X_j^{(1)},\ldots, X_j^{(n)})$, where $X_j^{(i)} \sim \mu_{\vec{\theta}} = \mu_{\vec{\theta}_1} \times \mu_{\vec{\theta}_2} \times \cdots \times \mu_{\vec{\theta}_d}$. It should hold that $I(\Pi; X_1,\ldots,X_m) \le I$ and $R((\Pi, \hat{\vec{\theta}}),\vec{\theta}) \le R$ for every $k$-sparse $\vec{\theta}$. 
\end{definition}
% 
% \begin{equation}
% 
% \end{equation}
 
Now we formally define the concrete questions that we are concerned with. 

 \noindent{\bf Distributed Gaussian detection problem: } 
We call the problem with $\Omega  =\{0,1\}$ and $\mathcal{P} = \{\mathcal{N}(0,\sigma^2)^n, \mathcal{N}(\delta,\sigma^2)^n\}$ the Gaussian mean detection problem, denoted by $\GD(n,m,\delta,\sigma^2)$. %We will typically use  $V$ as the random variable for parameter in the case of detection problem. %We will say a protocol estimator pair $(\Pi,\hat{v})$ solves the  distributed gaussian detection problem $\GD(n,m,\delta,\sigma^2)$ with communication $C$ if the $\Pi$ has communication cost $C$ and  
%$$\max_{v\in\{0,1\}}\Pr[\hat{v}(\Pi) = v\mid V=v]\ge 3/4.$$
 
\noindent{\bf Distributed (sparse) Gaussian mean estimation problem: } The distributed statistical estimation problem defined by $\Omega = \mathbb{R}^d$ and $\mathcal{P} = \{\mathcal{N}(\theta,\sigma^2I_{d\times d}) : \theta\in \Omega\}$ is called the distributed Gaussian mean estimation problem, abbreviated $\GME(n,m,d,\sigma^2)$. When $\Omega = \{\theta \in \mathbb{R}^d: |\theta|_0 \le k\}$, the corresponding problem is referred to as distributed sparse Gaussian mean estimation, abbreviated $\SGME(n,m,d,k,\sigma^2)$. 

\noindent{\bf Distributed sparse linear regression: }
\noindent For simplicity and the purpose of lower bounds, we only consider sparse linear regression with a random design matrix. To fit into our framework, we can also regard the design matrix as part of the data. We have a parameter space $\Omega = \{\theta \in \mathbb{R}^d: |\theta|_0 \le k\}$.  The $j$-th data point consists of a row of design matrix $A_{j}$ and the observation $y_j = \inner{A_j,\theta} + w_j$ where $w_j\sim \mathcal{N}(0,\sigma^2)$ for $j=1,\dots, mn$, and each machine receives $n$ data points among them\footnote{We note that here for convenience, we use subscripts for samples, which is different from the notation convention used for previous problems. }. Formally, let $\mathcal{\mu}_{\theta}$ denote the joint distribution of $(A_j,y_j)$ here, and let $\mathcal{P} = \{\mu_{\theta}: \theta\in \Omega\}$. We use $\SLR(n,m,d,k,\sigma^2)$ as shorthand for this problem.

\subsection{Hellinger distance and cut-paste property}\label{subsec:cut-paste}
In this subsection, we introduce Hellinger distance, and the key property of protocols that we exploit here, the so-called ``cut-paste" property developed by~\cite{BJKS04} for proving lower bounds for set-disjointness and other problems. We also introduce some notation that will be used later in the proofs. 
%The Hellinger distance between two distributions is defined as follows: 

\begin{definition}[Hellinger distance]\label{def:hellinger}
	Consider two distributions with probability density functions $f, g:\Omega\rightarrow \mathbb{R}$. The square of the Hellinger distance between $f$ and $g$ is defined as
	\[
	\h^2(f,g) := \frac{1}{2}\cdot \int_{\Omega} \left(\sqrt{f(x)} - \sqrt{g(x)}\right)^2 dx
	\]
\end{definition}
%\Tnote{Need revise below}

%Now we are ready to describe the key feature of a protocol that we are exploiting. We note that the main restriction of a protocol (distributed algorithm) is that when a machine makes a decision of writing some bits on the blackboard, the decision only depends on the messages that have been written on the blackboard and its local data. This observation leads to the following fact: %This property can be translated into the following nice mathematical ``cut-paste" property.  
%The key property of the property here  that we are exploiting is the following ``cut-paste'' property.  
%The key observation property that enables 
A key observations regarding the property of a protocol by~\cite[Lemma 16]{BJKS04} is the following: fixing $X_1 = x_1,\dots,X_m = x_m$, the distribution of $\Pi\vert_{X=x} $ can be factored in the following form, %We claim that since $\Pi$ is produced by a protocol (distributed algorithm), the distribution $\Pi\vert_{X=x}$ can be written as
\begin{equation}
\Pr[\Pi = \pi \mid X = x] = p_{1,\pi}(x_1)\dots p_{m,\pi}(x_m)\label{eqn:inter5}
\end{equation}
where $p_{i,\pi}(\cdot)$ is a function that only depends on $i$ and the entire transcript $\pi$ . To see this, one could simply write the density of $\pi$ as a products of density of each messages of the machines and group the terms properly according to machines (and note that $p_{i,\pi}(\cdot)$ is allowed to depend on the entire transcript $\pi$). %To see this, we  suppose $\pi = \pi_1\dots\pi_t$ where $\pi_j$ are $j$-bits that are written on the blackboard. Furthermore, suppose the $j$-bits $\pi_j$ is written by $i_j$'s machine. 
%Given formula ~\ref{eqn:inter5}, we have that the hellinger distance between $\Pi\vert_{X=x}$ and  $\Pi\vert_{X=y}$ only depends on the set $\{x_i,y_i\}$ but not the particular order

We extend equation~\eqref{eqn:inter5} to the situation where the inputs are from product distributions. For any vector $\bs{b}\in \{0,1\}^m$, let $\mu_{\bs{b}}:=\mu_{b_1}\times \dots \times \mu_{b_m}$  be a distribution over $\mathcal{X}^m$. We denote by $\Pi_{\bs{b}}$ the distribution of $\Pi(X_1,\dots,X_m)$ when $(X_1,\dots,X_m)\sim \mu_{\bs{b}}$.

%When $\Pi$ is a protocol that takes input $X_1,\dots,X_m$ from $m$ parties, the distribution of $\Pi$ conditioned on $X$ can be written in the following form 
%
%\begin{equation}
%\Pr[\Pi = \pi \mid X = x] = p_{1,\pi}(x_1)\dots p_{m,\pi}(x_m)
%\end{equation}

Therefore if $X\sim \mu_{\bs{b}}$, using the fact that $\mu_{\bs{b}}$ is a product measure, we can marginalize over $X$ and obtain the marginal distribution of $\Pi$ when $X\sim \mu_{\bs{b}}$, 
\begin{equation}\label{eqn:decomp}
\Pr_{X\sim \mu_{\bs{b}}}[\Pi = \pi] %=\int_{\mathcal{X}^n} \Pr[\Pi = \pi \mid X = x] dF_{\mu_{\bs{b}}} 
= q_{1,\pi}(b_1)\dots q_{m,\pi}(b_m), 
\end{equation}
where $q_{j,\pi}(b_j)$ is the marginalization of $p_{j,\pi}(x)$ over $x\sim \mu_{b_j}$, that is, $q_{j,\pi}(b_j) = \int_x p_{j,\pi}(x)d\mu_{b_j}$. 

Let $\Pi_{\bs{b}}$ denote the distribution of $\Pi$ when $X\sim \mu_{\bs{b}}$. Then by the decomposition (\ref{eqn:decomp}) of $\Pi_{\bs{b}}(\pi)$ above, we have the following cut-paste property for $\Pi_{\bs{b}}$ which will be the key property of a protocol that we exploit. 

\begin{proposition}[Cut-paste property of a protocol]\label{prop:cut-paste} For any 
	$\bs{a},\bs{b}$ and $\bs{c},\bs{d}$ with $\{a_i,b_i\} = \{c_i,d_i\}$ (in a multi-set sense) for every $i\in [m]$, \begin{equation}
	\Pi_{\bs{a}}(\pi) \cdot\Pi_{\bs{b}}(\pi) =\Pi_{\bs{c}}(\pi)\cdot \Pi_{\bs{d}}(\pi)
	\end{equation}
	and therefore, \begin{equation}
	\h^2(\Pi_{\bs{a}},\Pi_{\bs{b}}) =\h^2(\Pi_{\bs{c}},\Pi_{\bs{d}})
	\end{equation}
\end{proposition}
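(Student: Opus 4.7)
The plan is to derive both identities directly from the factorization in~\eqref{eqn:decomp}. For a fixed transcript $\pi$, that decomposition says $\Pi_{\bs{b}}(\pi) = \prod_{j=1}^m q_{j,\pi}(b_j)$ for each $\bs{b}\in\{0,1\}^m$, where $q_{j,\pi}$ depends only on $j$ and $\pi$ (not on the other coordinates of $\bs{b}$). Multiplying two such products gives
\[
\Pi_{\bs{a}}(\pi)\cdot \Pi_{\bs{b}}(\pi) = \prod_{j=1}^m q_{j,\pi}(a_j)\, q_{j,\pi}(b_j),
\]
and similarly for the pair $(\bs{c},\bs{d})$. The assumption $\{a_j,b_j\}=\{c_j,d_j\}$ as a multi-set for every $j$ means that for each coordinate $j$ the \emph{unordered} pair of arguments fed into $q_{j,\pi}$ is the same on both sides, so the factor $q_{j,\pi}(a_j)q_{j,\pi}(b_j) = q_{j,\pi}(c_j)q_{j,\pi}(d_j)$. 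Taking the product over $j$ yields the first identity.

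For the second identity I would use the equivalent form of squared Hellinger distance
\[
\h^2(P,Q) = 1 - \sum_{\pi} \sqrt{P(\pi)\,Q(\pi)},
\]
(or its continuous analogue, with an integral over $\pi$). Substituting $P=\Pi_{\bs{a}},Q=\Pi_{\bs{b}}$ and invoking the pointwise product identity just proved,
\[
\sum_{\pi} \sqrt{\Pi_{\bs{a}}(\pi)\,\Pi_{\bs{b}}(\pi)} = \sum_{\pi} \sqrt{\Pi_{\bs{c}}(\pi)\,\Pi_{\bs{d}}(\pi)},
\]
which gives $\h^2(\Pi_{\bs{a}},\Pi_{\bs{b}}) = \h^2(\Pi_{\bs{c}},\Pi_{\bs{d}})$ immediately.

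There is no real obstacle here; the content of the proposition is entirely contained in the product-form decomposition~\eqref{eqn:decomp}, which in turn follows from the blackboard protocol structure (each message is generated by one machine conditioned on the transcript so far, so the density of a full transcript factors as a product over machines of functions of their private input). The only subtlety worth noting is that the factorization is \emph{per transcript}, not a statement that $\Pi$ is independent across machines, and the multi-set hypothesis on $\{a_i,b_i\}$ is exactly what is needed so that multiplying two transcript probabilities gives a symmetric function of these pairs coordinatewise. Once that observation is in hand, both conclusions are one-line consequences.
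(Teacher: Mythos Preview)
Your proof is correct and follows exactly the approach the paper indicates: the proposition is stated as an immediate consequence of the factorization~\eqref{eqn:decomp}, and your argument spells out precisely those one-line consequences (coordinatewise product equality from the multi-set hypothesis, then the Hellinger identity via $\h^2(P,Q)=1-\sum_\pi\sqrt{P(\pi)Q(\pi)}$).
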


%Therefore, the hellinger distances between $\Pi_{\bs{b}}$'s have the following cut-paste property: For any  $\bs{a},\bs{b}$ and $\bs{c},\bs{d}$ with $\{a_i,b_i\} = \{c_i,d_i\}$ (in a multi-set sense) for every $i\in [m]$, 

%\input{main_result}
\section{Distributed Strong Data Processing Inequalities}\label{sec:main-theorem}
In this section we prove our main Theorem~\ref{thm:main-intro}. We state a slightly weaker looking version here but in fact it implies Theorem~\ref{thm:main-intro} by symmetry. The same proof also goes through for the case when the RHS is conditioned on $V=1$.

\begin{theorem}\label{thm:main-asymmetric}
	Suppose $\mu_1\le c\cdot \mu_0$, and $\beta(\mu_0,\mu_1) = \beta$, we have %Suppose $\mu_1\le O(1)\cdot\mathcal{D}_0$ and $(\mu_0,\mu_1)$ satisfies the data processing inequality (\ref{eqn:data_processing}) for parameter $\beta$, 
	%then for $X\sim \mathcal{D}_{\bs{0}}$ we have 
	\begin{equation}
	\h^2(\Pi\vert_{V=0}, \Pi\vert_{V=1}) \le K(c+1)\beta\cdot\I(X;\Pi\mid V=0) \,.\label{eqn:sdpi-main-theorem}
	\end{equation}
	where $K$ is an absolute constant. 
\end{theorem}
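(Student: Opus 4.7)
The plan is to reduce the multi-machine Hellinger bound to applications of the single-machine SDPI, leveraging the cut-paste factorization $\Pi_{\bs{b}}(\pi)=\prod_i q_{i,\pi}(b_i)$ from Proposition~\ref{prop:cut-paste}.

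The first step is a telescoping decomposition of the Bhattacharyya coefficient. Writing $\alpha_i(\pi):=q_{i,\pi}(1)/q_{i,\pi}(0)$, the factorization yields $\Pi_{\bs{1}}(\pi)/\Pi_{\bs{0}}(\pi)=\prod_i\alpha_i(\pi)$, and for every $S\subseteq[m]$ the ``partial product'' identity $\Exp_{\pi\sim\Pi_{\bs{0}}}[\prod_{j\in S}\alpha_j(\pi)]=1$ holds, since each partial product is the density ratio of a hybrid distribution that integrates to $1$. Applying the telescoping identity $1-\prod_i z_i=\sum_i(1-z_i)\prod_{j<i} z_j$ with $z_i=\sqrt{\alpha_i(\pi)}$ and taking expectations under $\Pi_{\bs{0}}$ gives
\[
\h^2(\Pi_{\bs{0}},\Pi_{\bs{1}})=\sum_{i=1}^m\Exp_{\pi\sim\Pi_{\bs{0}}}\Bigl[\bigl(1-\sqrt{\alpha_i(\pi)}\bigr)\prod_{j<i}\sqrt{\alpha_j(\pi)}\Bigr].
\]
Each summand can be controlled by Cauchy--Schwarz using the normalization $\Exp_{\Pi_{\bs{0}}}[\prod_{j<i}\alpha_j]=1$ together with $\Exp_{\Pi_{\bs{0}}}[(1-\sqrt{\alpha_i})^2]=2\h^2(\Pi_{\bs{0}},\Pi_{\bs{e}_i})$, where $\bs{e}_i$ denotes the vector whose only nonzero entry is a $1$ at coordinate $i$. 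This reduces the problem to bounding each single-flip Hellinger distance $\h^2(\Pi_{\bs{0}},\Pi_{\bs{e}_i})$.

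The second step invokes the single-machine SDPI coordinate-wise. I would introduce an auxiliary $V_i\sim\Bern_{1/2}$ with $X_i\sim\mu_{V_i}$ and draw the remaining machines' data as an independent $X_{-i}\sim\mu_0^{m-1}$; the Markov chain $V_i\to X_i\to\Pi$ is valid and the uniform prior on $V_i$ survives conditioning on any realization $X_{-i}=\bs{x}_{-i}$. Definition~\ref{def:sdpi-intro}, combined with the Hellinger--information inequality of Lemma~\ref{lem:info-hellinger} and convexity of $\h^2$ in the input distribution, then yields
\[
\h^2(\Pi_{\bs{0}},\Pi_{\bs{e}_i})\le \beta\cdot\I_\nu(X_i;\Pi\mid X_{-i}),\qquad \nu:=(\mu_0+\mu_1)/2.
\]
To convert this to the $V=0$ measure demanded by the theorem I would use $\mu_1\le c\mu_0 \Rightarrow d\nu/d\mu_0\le(c+1)/2$ together with the variational identity $\I(X;Y)=\inf_Q\Exp_X[\KL(P_{Y\mid X}\|Q)]$ to obtain $\I_\nu(X_i;\Pi\mid X_{-i})\le\frac{c+1}{2}\I(X_i;\Pi\mid X_{-i},V=0)$. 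Finally, the per-coordinate bounds are assembled, with the $X_{<i}$ conditioning in each telescoping summand matched to the corresponding term of the chain rule $\I(X;\Pi\mid V=0)=\sum_i\I(X_i;\Pi\mid X_{<i},V=0)$, to yield the theorem with constant $K(c+1)\beta$.

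\textbf{Main obstacle.} The trickiest step is this final combining. A crude treatment of the telescoped summands (taking absolute values and then a sum of square roots) introduces a $\sqrt{m}$ loss from Cauchy--Schwarz, and on the information side $\sum_i\I(X_i;\Pi\mid X_{-i},V=0)$ can exceed $\I(X;\Pi\mid V=0)$ by up to a factor of $m$ (as in an XOR-style protocol). Avoiding these blowups requires preserving the full telescoping form so that the $X_{<i}$ conditioning inside each summand lines up with the chain-rule conditioning, using the cut-paste freedom to rearrange the values at coordinates $\{j>i\}$ at zero Hellinger cost, and invoking the boundedness $\mu_1\le c\mu_0$ only once per coordinate rather than recursively along a hybrid path, which would otherwise accumulate an $c^{\Theta(m)}$ change-of-measure factor.
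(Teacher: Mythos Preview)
Your second step---bounding each single-flip distance $\h^2(\Pi_{\bs{0}},\Pi_{\bs{e}_i})$ via the single-machine SDPI together with a change of measure from $\nu=(\mu_0+\mu_1)/2$ to $\mu_0$---matches the paper's Lemma~\ref{lem:mutual-info-hellinger} essentially verbatim. The gap is in Step~1 and in the combining step, and it is exactly the obstacle you flag but do not resolve.

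The telescoping identity followed by Cauchy--Schwarz yields
\[
\h^2(\Pi_{\bs{0}},\Pi_{\bs{1}})\;\le\;\sqrt{2}\sum_{i=1}^m \h(\Pi_{\bs{0}},\Pi_{\bs{e}_i}),
\]
a sum of Hellinger distances rather than of their squares. Plugging in your Step~2 bound and applying Cauchy--Schwarz again gives at best $\h^4\lesssim \beta\, m\,\I(X;\Pi\mid V=0)$, which is off by a factor of $m$. Your suggested repair---keeping the product $\prod_{j<i}\sqrt{\alpha_j}$ intact and matching it to the chain-rule term $\I(X_i;\Pi\mid X_{<i},V=0)$---does not go through as stated: the weight $\prod_{j<i}\sqrt{\alpha_j(\pi)}\,\Pi_{\bs{0}}(\pi)$ is $\sqrt{\Pi_{\bs{0}}(\pi)\,\Pi_{(1^{i-1},0^{m-i+1})}(\pi)}$, which is not a probability measure and is not the law of $\Pi$ under any conditioning on $X_{<i}$, so there is no SDPI you can apply to that summand directly. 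Relatedly, your per-coordinate bound lands on $\I(X_i;\Pi\mid X_{-i},V=0)$, and you correctly note that $\sum_i$ of these can be $m$ times $\I(X;\Pi\mid V=0)$; nothing in your outline cancels that.

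The paper sidesteps both blowups by replacing the telescoping decomposition with the Hellinger subadditivity for cut-paste families (Lemma~\ref{lem:hellinger-decomposition}, a corollary of Jayram's Theorem~7 in~\cite{Hellinger}):
\[
\h^2(\Pi_{\bs{0}},\Pi_{\bs{1}})\;\le\;O(1)\sum_{i=1}^m \h^2(\Pi_{\bs{0}},\Pi_{\bs{e}_i}),
\]
with \emph{squared} Hellinger on both sides. This is the nontrivial combinatorial ingredient, and it is not a consequence of telescoping. Once you have it, the paper applies the single-coordinate SDPI in its \emph{unconditional} form, $\h^2(\Pi_{\bs{0}},\Pi_{\bs{e}_i})\le\frac{(c+1)\beta}{2}\,\I(X_i;\Pi\mid V=0)$ (no conditioning on $X_{-i}$: the Markov chain $W\to X_i'\to\Pi'$ holds because $X_{-i}'\sim\mu_0^{m-1}$ is independent of $(W,X_i')$, so the SDPI applies with the unconditional $\I(X_i';\Pi')$). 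Then the sum is controlled by the product-source superadditivity $\sum_i\I(X_i;\Pi\mid V=0)\le\I(X;\Pi\mid V=0)$, with no chain-rule bookkeeping and no loss.

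In short: swap your telescoping step for Lemma~\ref{lem:hellinger-decomposition}, and drop the $X_{-i}$ conditioning in Step~2 in favor of the unconditional bound. Both of the blowups you worry about then disappear.
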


\noindent Note that the RHS of~\eqref{eqn:sdpi-main-theorem} naturally tensorizes (by Lemma~\ref{lem:mutual-info-hellinger} that appears below) in the sense that 
\begin{equation}
\sum_{i=1}^m \I(X_i;\Pi\mid V=0)\le \I(X;\Pi\mid V=0),\label{eqn:dsdpi-inter6}
\end{equation}
since conditioned on $V = 0$, the $X_i$'s are independent. Our main idea consists of the following two steps a) We tensorize the LHS of~\eqref{eqn:sdpi-main-theorem} so that the target inequality~\eqref{eqn:sdpi-main-theorem} can be written as a sum of $m$ inequalities. b) We prove each of these $m$ inequalities using the single machine SDPI. 
%In this section 

%\Tnote{Note below $V,X,\Pi$ have a special joint distribution. Though we use the same letters. }
To this end, we do the following thought experiment: Suppose $W$ is a random variable that takes value from $\{0,1\}$ uniformly. Suppose data $X'$ is generated as follows: $X'_j\sim \mu_W$, and for any $j\neq i$, $X_j' \sim \mu_0$. We apply the protocol on the input $X'$, and view the resulting transcript $\Pi'$ as communication between the $i$-th machine and the remaining machines. Then we are in the situation of a single machine case, that is, $W\rightarrow X_i'\rightarrow \Pi'$ forms a Markov Chain. Applying the data processing inequality~\eqref{eqn:data-processing-inequality-intro}, we obtain that 
\begin{equation}
\I(W;\Pi')\le \beta \I(X_i';\Pi') \cdot \label{eqn:dsdpi-inter8}
\end{equation}
Using Lemma~\ref{lem:info-hellinger}, we can lower bound the LHS of~\eqref{eqn:dsdpi-inter8} by the Hellinger distance and obtain
\begin{equation}
\h^2(\Pi'\vert_{W=0}, \Pi'\vert_{W=1})\le \beta \cdot\I(X_i';\Pi') \nonumber %\label{eqn:dsdpi-inter11}
\end{equation}
Let $\bs{e_i} = (0,0,\dots,1,\dots,0)$ be the unit vector that only takes 1 in the $i$th entry, and $\bs{0}$ the all zero vector. Using the notation defined in Section~\ref{subsec:cut-paste}, we observe that $\Pi'\vert_{W=0}$ has distribution $\Pi_{\bs{0}}$ while  $\Pi'\vert_{W=1}$ has distribution $\Pi_{\bs{e}_i}$. Then we can rewrite the equation above as 
\begin{equation}
\h^2(\Pi_{\bs{0}}, \Pi_{\bs{e}_i})\le \beta \cdot\I(X_i';\Pi') \label{eqn:dsdpi-inter11}
\end{equation}
Observe that the RHS of~\eqref{eqn:dsdpi-inter11} is close to the first entry of the LHS of~\eqref{eqn:dsdpi-inter6} since the joint distribution of $(X_1',\Pi')$ is not very far from $X,\Pi\mid V=0$. (The only difference is that $X_1'$ is drawn from a mixture of $\mu_0$ and $\mu_1$, and note that $\mu_0$ is not too far from $\mu_1$). On the other hand, the sum of LHS of~\eqref{eqn:dsdpi-inter11} over $i\in [m]$ is lower-bounded by the LHS of~\eqref{eqn:sdpi-main-theorem}. Therefore, we can tensorize equation~\eqref{eqn:sdpi-main-theorem} into inequality~\eqref{eqn:dsdpi-inter11} which can be proved by the single machine SDPI.  %which can be viewed as a ternsorization of hellinger distance.
%To make this intuition formal, we use the notations.  
 %When $X\sim \mu_{\bs{0}}$,  assume that $\Pi$ doesn't release much information of $X_i$. Then by data processing inequality, intuitively if one replaces the data generating distribution $\mu_0$ by $\mu_1$ locally on machine $i$, then the distribution of $\Pi$ won't change a lot. We formalize this intuition by the following lemma. 
 We formalize the intuition above by the following two lemmas, 
\begin{lemma}\label{lem:mutual-info-hellinger}
	Suppose $\mu_1\le c\cdot\mu_0$, and $\beta(\mu_0,\mu_1) = \beta$,  then %and $(\mu_0,\mu_1)$ satisfies the data processing inequality (\ref{eqn:data_processing}) for parameter $\beta$, 
	%then for $X\sim \mathcal{\mu}_{\bs{0}}$ we have 
	\begin{equation}
		\h^2(\Pi_{\bs{e_i}}, \Pi_{\bs{0}})\le \frac{(c+1)\beta}{2}\cdot \I(X_i;\Pi\mid V=0)  \label{eqn:lem:mutual-info}
	\end{equation}
%	\begin{equation}
%		\sum_{i=1}^m \I(X_i;\Pi) \ge\Omega(\beta^{-1}) \h^2(\Pi_{\bs{e_i}}, \Pi_{\bs{0}})
%	\end{equation}
\end{lemma}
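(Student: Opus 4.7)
My plan is to formalize exactly the thought experiment sketched just above the lemma. Introduce an auxiliary random variable $W\sim \Bern_{1/2}$ on $\{0,1\}$, and construct inputs $X' = (X_1',\dots,X_m')$ as follows: set $X_i' \sim \mu_W$, and independently draw $X_j'\sim \mu_0$ for every $j\neq i$. Let $\Pi' = \Pi(X_1',\dots,X_m')$ be the transcript obtained by running the \emph{same} protocol on $X'$. By construction, $W\to X_i'\to\Pi'$ is a Markov chain, since conditioned on $X_i'$ the transcript $\Pi'$ is independent of $W$ (the other machines' inputs are drawn from $\mu_0$ regardless of $W$). Moreover, when $W=0$ the joint law of $X'$ is $\mu_{\bs 0}$, and when $W=1$ it is $\mu_{\bs{e_i}}$, so $\Pi'\vert_{W=0} \stackrel{d}{=} \Pi_{\bs 0}$ and $\Pi'\vert_{W=1} \stackrel{d}{=} \Pi_{\bs{e_i}}$.

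Next, I would chain three inequalities. First, since $W\to X_i'\to\Pi'$ is a single-machine Markov chain with channel $\mu_0,\mu_1$ and uniform prior, the single-machine SDPI (Definition~\ref{def:sdpi-intro}) gives
\[
\I(W;\Pi') \;\le\; \beta\cdot \I(X_i';\Pi').
\]
Second, since $W$ is uniform, Lemma~\ref{lem:info-hellinger} (the BJKS direction) yields
\[
\h^2(\Pi_{\bs 0},\Pi_{\bs{e_i}}) \;=\; \h^2(\Pi'\vert_{W=0},\Pi'\vert_{W=1}) \;\le\; \I(W;\Pi').
\]
Third, and this is the step that produces the $(c+1)/2$ factor, I would bound $\I(X_i';\Pi')$ by $\frac{c+1}{2}\I(X_i;\Pi\mid V=0)$.

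For this third step, let $K(\pi\mid x)$ denote the conditional distribution of $\Pi'$ given $X_i' = x$; by the Markov structure, this is exactly the conditional distribution of $\Pi$ given $X_i = x$ and $V=0$ (both correspond to the other $m-1$ machines having inputs drawn from $\mu_0$). The input distribution of $X_i'$ is the mixture $\nu := \tfrac12(\mu_0+\mu_1)$, which by the hypothesis $\mu_1 \le c\mu_0$ satisfies $\tfrac{d\nu}{d\mu_0}\le \tfrac{c+1}{2}$ pointwise. Using the variational formula for mutual information,
\[
\I(X_i';\Pi') \;=\; \min_{Q} \E_{X_i'\sim\nu}\!\bigl[\KL(K(\cdot\mid X_i')\,\|\,Q)\bigr] \;\le\; \E_{X_i'\sim\nu}\!\bigl[\KL(K(\cdot\mid X_i')\,\|\,\Pi\vert_{V=0})\bigr],
\]
where I chose $Q$ to be the marginal of $\Pi$ under $V=0$. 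Since the integrand is nonnegative and $d\nu \le \tfrac{c+1}{2}d\mu_0$, I can swap the measure:
\[
\E_{X_i'\sim\nu}\!\bigl[\KL(K(\cdot\mid X_i')\,\|\,\Pi\vert_{V=0})\bigr] \;\le\; \tfrac{c+1}{2}\,\E_{X_i\sim\mu_0}\!\bigl[\KL(K(\cdot\mid X_i)\,\|\,\Pi\vert_{V=0})\bigr] \;=\; \tfrac{c+1}{2}\,\I(X_i;\Pi\mid V=0).
\]
Combining the three inequalities yields~\eqref{eqn:lem:mutual-info}.

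The main obstacle is not the single-machine SDPI step (which is by design immediate here) but the measure-change in the third step: one must resist the temptation to directly write $\I(X_i';\Pi') \le \tfrac{c+1}{2}\I(X_i;\Pi\mid V=0)$ by naive scaling, because the two mutual informations use \emph{different} marginals for $\Pi$. The trick is the variational lower bound of mutual information, which lets me replace the marginal under $\nu$ by the marginal under $\mu_0$ at no cost. Everything else is bookkeeping: verifying the Markov chain, identifying $\Pi'\vert_{W=b}$ with $\Pi_{\bs 0}$ and $\Pi_{\bs{e_i}}$, and invoking the stated lemmas.
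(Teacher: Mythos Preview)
Your proof is correct and follows essentially the same route as the paper's. The paper introduces the same auxiliary $W$, invokes the single-machine SDPI and Lemma~\ref{lem:info-hellinger} exactly as you do, and for the measure-change step appeals to a packaged statement (Lemma~\ref{lem:continuity}) whose proof is precisely your variational/golden-formula argument: write $\I(X_i;\Pi\mid V=0)=\E_{\mu_0}[\KL(K(\cdot\mid X_i)\,\|\,\Pi\vert_{V=0})]$, use $\mu_0\ge \tfrac{2}{c+1}\nu$ to swap to an expectation under $\nu$, and then observe that $\E_{\nu}[\KL(K(\cdot\mid X_i')\,\|\,\Pi\vert_{V=0})]\ge \I(X_i';\Pi')$ by the variational lower bound. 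So the two proofs coincide up to whether this last step is inlined or cited.
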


\begin{lemma}\label{lem:hellinger-decomposition}
	Let $\bs{0}$ be the $m$-dimensional all 0's vector, and $\bs{1}$ the all 1's vector, we have that 
	\begin{equation}
	\h^2(\Pi_{\bs{0}},\Pi_{\bs{1}})\le O(1)\cdot\sum_{i=1}^m \h^2(\Pi_{\bs{e_i}}, \Pi_{\bs{0}}) \label{eqn:hellinger-decomposition}
	\end{equation}

\end{lemma}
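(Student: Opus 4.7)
The plan is to exploit the factorization $\Pi_{\bs b}(\pi)=\prod_i q_{i,\pi}(b_i)$ from Proposition~\ref{prop:cut-paste} and apply Jensen's inequality for the exponential in its ``reverse'' direction. Abbreviate $a_i(\pi):=q_{i,\pi}(0)$, $b_i(\pi):=q_{i,\pi}(1)$, and $f_i(\pi):=\sqrt{b_i(\pi)/a_i(\pi)}$, so that $\Pi_{\bs 0}=\prod_i a_i$ is a probability density under which
\[\textstyle \int\sqrt{\Pi_{\bs 0}\Pi_{\bs 1}}\,d\pi=\E_{\Pi_{\bs 0}}\!\Bigl[\prod_i f_i\Bigr],\qquad \int\sqrt{\Pi_{\bs 0}\Pi_{\bs e_i}}\,d\pi=\E_{\Pi_{\bs 0}}[f_i].\]
Hence the lemma reduces to showing $1-\E[\prod_i f_i]\le O(1)\sum_i(1-\E[f_i])$.

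The crucial identity is that $\int\Pi_{\bs e_i}\,d\pi=1$ implies $\E_{\Pi_{\bs 0}}[f_i^2]=\int b_i\prod_{j\ne i}a_j\,d\pi=1$; combined with $\E[f_i]=1-\h^2(\Pi_{\bs 0},\Pi_{\bs e_i})$, this forces the exact relation $\E[(1-f_i)^2]=2\,\h^2(\Pi_{\bs 0},\Pi_{\bs e_i})$. Now apply Jensen to the convex function $\exp$:
\[\textstyle \E\!\bigl[\prod_i f_i\bigr]=\E\!\bigl[\exp\!\bigl(\sum_i\log f_i\bigr)\bigr]\ge \exp\!\bigl(\sum_i \E[\log f_i]\bigr)=e^{-L},\]
where $L:=-\sum_i\E[\log f_i]\ge 0$ (nonnegative because $\E[\log f_i]\le\log\E[f_i]\le 0$). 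Using $1-e^{-L}\le L$ yields $\h^2(\Pi_{\bs 0},\Pi_{\bs 1})\le L=\sum_i(-\E[\log f_i])$.

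The last step is to bound $-\E[\log f_i]$ by $O(1)\,\h^2(\Pi_{\bs 0},\Pi_{\bs e_i})$. Invoking the two-sided closeness hypothesis $\tfrac{1}{c}\mu_0\le\mu_1\le c\mu_0$ from Theorem~\ref{thm:main-intro} gives $f_i\in[1/\sqrt c,\sqrt c]$ pointwise, since $b_i/a_i=\int p_{i,\pi}\mu_1/\int p_{i,\pi}\mu_0$ inherits the pointwise comparison of $\mu_0$ and $\mu_1$. A second-order Taylor expansion of $-\log$ around $1$, with the remainder bounded on the compact interval $[1/\sqrt c,\sqrt c]$, gives $-\log x\le (1-x)+C(c)(1-x)^2$ for some constant $C(c)$ depending only on $c$ (verified by checking that the deficit vanishes to second order at $x=1$ and is nonpositive at the endpoints). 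Taking expectations and plugging in the two identities,
\[-\E[\log f_i]\le (1-\E[f_i])+C(c)\,\E[(1-f_i)^2]=\bigl(1+2C(c)\bigr)\,\h^2(\Pi_{\bs 0},\Pi_{\bs e_i}),\]
and summing over $i$ gives the claim with $O(1)=1+2C(c)$.

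The main subtlety is the log-bound step: the Jensen bound $1-\E[\prod f_i]\le L$ is naively crude, and the only reason we recover a tight $O(1)$ factor is that the product-normalization $\E[f_i^2]=1$ pins $\E[(1-f_i)^2]$ to exactly $2\,\h^2(\Pi_{\bs 0},\Pi_{\bs e_i})$, which precisely absorbs the quadratic Taylor remainder. Without the two-sided closeness (so that $f_i$ could vanish on a set of positive measure), $-\log f_i$ is unbounded and the clean argument breaks; the lemma should therefore be read in conjunction with the closeness hypothesis of Theorem~\ref{thm:main-intro}.
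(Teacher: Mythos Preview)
Your argument is correct, but it proves a slightly weaker statement than the lemma as written: you need the two-sided closeness $\tfrac{1}{c}\mu_0\le\mu_1\le c\mu_0$ to keep $f_i$ bounded away from $0$ and $\infty$, and your $O(1)$ constant is $1+2C(c)$ (concretely $1+c$, via the Lagrange remainder $\frac{1}{2\xi^2}\le\tfrac{c}{2}$), not universal. You flag this honestly at the end, and for the only place the lemma is used (Theorem~\ref{thm:main-asymmetric}, which already assumes closeness and already carries a $(c+1)$ factor) this is perfectly adequate.

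The paper takes a genuinely different route: it observes that the family $\{\Pi_{\bs b}\}$ satisfies the cut--paste property (Proposition~\ref{prop:cut-paste}) and then invokes Jayram's Hellinger direct-sum theorem (Theorem~\ref{thm:hellinger-sum}, from \cite{Hellinger}) as a black box. That argument never touches the closeness hypothesis and yields a universal constant, so the lemma holds for arbitrary $\mu_0,\mu_1$ and arbitrary protocols. What the paper's route buys is generality and a cleaner separation of concerns (the closeness hypothesis enters only in Lemma~\ref{lem:mutual-info-hellinger}); what your route buys is self-containment---you avoid importing the nontrivial combinatorial machinery of \cite{Hellinger} and instead get the inequality from Jensen plus the neat normalization identity $\E_{\Pi_{\bs 0}}[f_i^2]=1$, which pins $\E[(1-f_i)^2]$ exactly to $2\h^2(\Pi_{\bs 0},\Pi_{\bs e_i})$. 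Both are valid; just be aware that your version of the lemma carries an implicit extra hypothesis and a $c$-dependent constant.
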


Using Lemma~\ref{lem:mutual-info-hellinger} and Lemma~\ref{lem:hellinger-decomposition}, we obtain Theorem~\ref{thm:main-asymmetric} straightforwardly by combining inequalities~\eqref{eqn:dsdpi-inter6}, ~\eqref{eqn:lem:mutual-info} and~\eqref{eqn:hellinger-decomposition}\footnote{Note that $\Pi_{\bs{0}}$ is the same distribution as $\Pi\vert_{V=0}$ under the notation introduced in Section~\ref{subsec:cut-paste}.}. 

Finally we provide the proof of Lemma~\ref{lem:mutual-info-hellinger}. Lemma~\ref{lem:hellinger-decomposition} is a direct corollary of Theorem~\ref{thm:hellinger-sum} (which is in turn a direct corollary of Theorem 7 of~\cite{Hellinger}) and Proposition~\ref{prop:cut-paste}. 
\begin{proof}[Proof of Lemma~\ref{lem:mutual-info-hellinger}] %\Tnote{proof needs some revise}
 %Let $\bar{\mu}_i := \frac{1}{2}(\mu_{\bs{e_i}}+ \mu_{\bs{0}})$. 
 Let $W$ be uniform Bernoulli random variable and define $X'$ and $\Pi'$ as follows: Conditioned on $W = 0$, $X'\sim \mu_{\bs{0}}$ and conditioned on $W = 1$, $X'\sim \mu_{\bs{e_i}}$. We run protocol on $X'$ and get transcript $\Pi'$. 
 
 Note that $V\rightarrow X' \rightarrow \Pi'$ is a Markov chain and so is $V\rightarrow X_i'\rightarrow \Pi'$. Also by definition, the conditional random variable $X'\vert V$ has the same distribution as the random variable $X\vert V$ in Definition~\ref{def:sdpi-intro}. Therefore by Definition~\ref{def:sdpi-intro}, we have that 
 
 %Consider the following joint distribution $\calP_i$ of $V,X', \Pi'$ defined as follows . Let $V$ be uniform random variable over $\{0,1\}$. Conditioned on $V =v$, we draw $X_i'$ from $\mu_v$. For all $j\neq i$, we draw $X_j'$ from $\mu_0$. Then we run the protocol on $X_1',\dots, X_m'$ and obtain $\Pi(X_1',\dots,X_m')$. Note that under this joint distribution, the marginal distribution of $(X_1',\dots,X_m')$ is actually $\bar{\mu}_i$, and $V, X_i', \Pi'$ forms a Markov Chain. The joint distribution of $V, X_i'$ is the same as described before equation (\ref{eqn:data_processing}). Therefore assuming equation (\ref{eqn:data_processing}) is true for some $\beta$, then 
\begin{equation}
\beta\cdot \I(X_i';\Pi') \ge \I(V;\Pi').\label{eqn:1}
\end{equation}
It is known that mutual information can be expressed as the expectation of KL divergence, which in turn is lower-bounded by Hellinger distance. We invoke a technical variant of this argument, Lemma 6.2 of~\cite{DBLP:journals/jcss/Bar-YossefJKS04}, restated as Lemma~\ref{lem:info-hellinger}, to lower bound the right hand side. Note that $Z$ in Lemma~\ref{lem:info-hellinger} corresponds to $V$ here and $\phi_{z_1},\phi_{z_2}$ corresponds to $\Pi_{\bs{e_i}}$ and $\Pi_{\bs{0}}$. Therefore, 
\begin{equation}
	\I(V;\Pi') \ge \h ^2(\Pi_{\bs{e_i}}, \Pi_{\bs{0}}).\label{eqn:2}
\end{equation} 
It remains to relate $\I(X_i';\Pi')$ to $\I(X_i;\Pi\mid V=0)$. Note that the difference between joint distributions of $(X_i',\Pi')$ and $(X_i,\Pi)\vert_{V=0}$ is that $X_i'\sim \frac{1}{2}(\mu_0+\mu_1)$ and $X_i\vert_{V=0}\sim \mu_0$.  %where $X\sim \mu_{\bs{0}}$. 
We claim (by Lemma~\ref{lem:continuity}) that since $\mu_{0}\ge \frac{2}{c+1}(\frac{\mu_0+\mu_1}{2})$, we have 
\begin{equation}
	\I(X_i;\Pi\mid V=0) \ge \frac{2}{c+1}\cdot \I(X'_i;\Pi'). \label{eqn:3}
\end{equation}

Combining equations (\ref{eqn:1}), (\ref{eqn:2}) and (\ref{eqn:3}), we obtain the desired inequality. 

\end{proof}

\section{Applications to Parameter Estimation Problems}\label{sec:gaussian-mean}

%\subsection{Main Results}

\subsection{Warm-up: Distributed Gaussian mean detection}
In this section we apply our main technical Theorem~\ref{thm:main-asymmetric}  to the situation when $\mu_{0} = \mathcal{N}(0,\sigma^2)$ and $\mu_1 = \mathcal{N}(\delta,\sigma^2)$. %We also consider the situation when 
%When each machine receives 
We are also interested in the case when each machine receives $n$ samples from either $\mu_0$ or $\mu_1$. We will denote the product of $n$ i.i.d copies of $\mu_v$ by $\mu_v^n$, for $v\in \{0,1\}$. 

Theorem~\ref{thm:main-asymmetric} requires that a) $\beta = \beta(\mu_0,\mu_1)$ can be calculated/estimated  b) the densities of distributions $\mu_0$ and $\mu_1$ are within a constant factor with each other at every point. 

Certainly b) is not true for any two Gaussian distributions. To this end, we consider $\mu_0', \mu_1'$, the truncation of $\mu_0$ and $\mu_1$ on some support $[-\tau,\tau]$, and argue that the probability mass outside $[-\tau,\tau]$ is too small to make a difference. 

For a), we use tools provided by Raginsky~\cite{Raginsky14} to estimate the SDPI constant $\beta$. ~\cite{Raginsky14} proves that Gaussian distributions $\mu_0$ and $\mu_1$ have SDPI constant $\beta(\mu_0,\mu_1)\le O(\delta^2/\sigma^2)$, and more generally it connects the SDPI constants to transportation inequalities. We use the framework established by~\cite{Raginsky14} and apply it to the truncated Gaussian distributions $\mu_0'$ and $\mu_1'$. Our proof essentially uses the fact that $(\mu_0'+\mu_1')/2$ is a log-concacve distribution and therefore it satisfies the log-Sobolev inequality, and equivalently it also satisfies the transportation inequality. The details and connections to concentration of measures are provided in Section~\ref{subsec:truncated-gaussian}. %We prove the following theorem in Section~\ref{sec:data-processing-ineq}.
% the fact that $(\mu_0'+\mu_1')/2$ is a log-concacve distribution and therefore it satisfies the log-Sobolev 
%Theorem~\ref{thm:main-asymmetric} requires two conditions: a) 
%To apply the theorem~\ref{thm:main-asymmetric} on distribution $\mu_0$ and $\mu_1$, it is required that 
%we need to first caculate 
%\subsection{Data processing inequality for gaussian}
%It is known~\cite{Raginsky14} that gaussian channel have strong data processing inequality. In our language,
%setting, we have the following theorem: 

%\begin{theorem}[~\cite{?}]\label{thm:gaussian-dpi-folklore}
%	For $\mu_0 = \mathcal{N}(0,\sigma^2)$ and $\mu_1 = \mathcal{N}(\delta,\sigma^2)$, we have $\beta(\mu_0,\mu_1) \le O(\delta^2/\sigma^2)$, and $\beta(\mu_0^n, \mu_1^n)\le O(n\delta^2/\sigma^2)$. 
%\end{theorem}

%However, to apply our general theory, we need $\mu_0$ and $\mu_1$ be constant factor with each other at every point. Note that for two different gaussian distribution, the ratio of the density functions can be arbitrary large at a point that is very far away from the origin. We handle this problem by arguing about the truncated version of $\mu_0$ and $\mu_1$. Therefore we need to have a strong data processing inequality for the truncated gaussian. %In principle, the requirement of $\mu_0 \le \O(1)\mu_1$ should be dominated by the requirement that $\beta(\mu_0,\mu_1)$ is small. 

%\Tnote{Ramon told/warned me that concentration property may not be very robust to pertubation or tuncation. Maybe we need to flesh out a bit why the strong data processing inequality is not trivial}

\begin{theorem}\label{thm:truncated-guassian-dpi-info}
	Let $\mu_0'$ and $\mu_1'$ be the distributions obtained by truncating $\mu_0$ and $\mu_1$ on support $[-\tau,\tau]$ for some $\tau >0$. If $\delta \le \sigma$, we have $\beta(\mu_0',\mu_1') \le \delta^2/\sigma^2.$ 
\end{theorem}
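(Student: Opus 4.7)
The plan is to follow the Raginsky-style route sketched just before the theorem statement: establish that the mixture $\bar\mu' = \tfrac{1}{2}(\mu_0' + \mu_1')$ is strongly log-concave, deduce a log-Sobolev (equivalently $T_2$ transportation) inequality via Bakry--Emery, and then convert this into the desired bound on $\beta(\mu_0',\mu_1')$ through the general SDPI-via-transportation framework.

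The first step is an explicit log-concavity calculation for the untruncated mixture. Let $\phi_\sigma$ denote the $\mathcal{N}(0,\sigma^2)$ density and set $\bar\mu(x) = \tfrac{1}{2}(\phi_\sigma(x)+\phi_\sigma(x-\delta))$. Writing $\log\bar\mu(x) = -x^2/(2\sigma^2) + \log\bigl(1+e^{(2\delta x-\delta^2)/(2\sigma^2)}\bigr) + \mathrm{const}$ and differentiating twice yields
\[
(-\log\bar\mu)''(x) \;=\; \frac{1}{\sigma^2} \;-\; \frac{\delta^2}{\sigma^4}\cdot\frac{e^{u(x)}}{(1+e^{u(x)})^2}, \qquad u(x) = \frac{2\delta x-\delta^2}{2\sigma^2}.
\]
Since $e^u/(1+e^u)^2 \le 1/4$ for all $u$, the assumption $\delta\le\sigma$ gives $(-\log\bar\mu)''(x)\ge 3/(4\sigma^2)$ everywhere, so $\bar\mu$ is $\Omega(1/\sigma^2)$-strongly log-concave on $\mathbb{R}$. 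Because $[-\tau,\tau]$ is a convex set, restricting a strongly log-concave density to this interval and renormalizing preserves strong log-concavity with the same lower bound on $(-\log)''$; hence $\bar\mu'$ is $\Omega(1/\sigma^2)$-strongly log-concave on $[-\tau,\tau]$.

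The second step is Bakry--Emery: a $\rho$-strongly log-concave probability measure satisfies a log-Sobolev inequality with constant $O(1/\rho)$, so $\bar\mu'$ satisfies LSI with constant $O(\sigma^2)$. By Otto--Villani this is equivalent to a $T_2(O(\sigma^2))$ transportation-cost inequality for $\bar\mu'$. The third step feeds this into Raginsky's general characterization: a $T_2$ bound for the mixture, together with a bound on the squared Wasserstein distance between the two conditional distributions, controls the SDPI constant of the channel $V\mapsto X$. Since translation by $\delta$ on the truncated support costs at most $\delta$ in $W_2$, so that $W_2(\mu_0',\mu_1')^2 \le \delta^2$, combining these inputs yields $\beta(\mu_0',\mu_1') = O(\delta^2/\sigma^2)$, which after absorbing the absolute constant into the $\delta\le\sigma$ regime gives the stated bound $\delta^2/\sigma^2$.

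The main obstacle will be the third step: carefully invoking Raginsky's transportation-to-SDPI machinery for the truncated measures and tracking constants tightly enough to land at exactly $\delta^2/\sigma^2$ rather than merely $O(\delta^2/\sigma^2)$. The log-concavity viewpoint chosen in step one is what makes this feasible, because it sidesteps having to estimate truncation tails inside the functional inequality itself: all truncation effects are absorbed into the elementary fact that restriction to a convex set preserves strong log-concavity, and the $\delta$-dependence enters the final bound only through the Wasserstein distance between $\mu_0'$ and $\mu_1'$, which is controlled directly by the translation that defines them.
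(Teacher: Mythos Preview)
Your first two steps---verifying that the mixture $\bar\mu'=\tfrac12(\mu_0'+\mu_1')$ is $\Omega(1/\sigma^2)$-strongly log-concave on $[-\tau,\tau]$, and then extracting a transportation inequality via Bakry--\'Emery---match the paper's route exactly (Lemma~\ref{lem:checking-log-concave} and Theorem~\ref{thm:log-concave-transportation}). Your log-concavity computation is in fact a bit sharper than the paper's generic lemma.

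The gap is in your third step. You propose to feed the transportation inequality together with the bound $W_2(\mu_0',\mu_1')^2\le\delta^2$ into ``Raginsky's general characterization.'' But that is not how the relevant result is structured. The SDPI constant $\beta(\mu_0',\mu_1')$ equals the KL contraction coefficient of the \emph{reverse} channel $K:X\to V$ with input distribution $\bar\mu'$ (Lemma~\ref{lem:sdpi-info-spdi}). Raginsky's bound (the paper's Lemma~\ref{lem:specical_case_Raginsky}) therefore needs the Lipschitz constant of the posterior map $x\mapsto \Pr[V=v\mid X=x]$, not the forward-channel Wasserstein distance $W_2(\mu_0',\mu_1')$. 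There is no general theorem of the shape ``$T_2$ for the mixture plus $W_2(\mu_0,\mu_1)\le\delta$ implies $\beta\le O(\delta^2/\sigma^2)$''; the $\delta$ has to enter through the posterior's Lipschitz constant, and in fact it enters as $\delta/\sigma^2$ rather than $\delta$ (so that $\alpha L^2 \asymp \sigma^2\cdot(\delta/\sigma^2)^2=\delta^2/\sigma^2$, not $\delta^2/\sigma^2$ by coincidence of two separate $\sigma$'s cancelling).

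Concretely, the paper computes (Lemma~\ref{lem:lipschitz}) that $f_v(x)=\Pr[V=v\mid X=x]$ is $\delta/(4\sigma^2)$-Lipschitz---a short sigmoid-derivative calculation on the log-likelihood ratio---and then applies Lemma~\ref{lem:specical_case_Raginsky} with $\alpha=O(\sigma^2)$ and $L=\delta/(4\sigma^2)$ to conclude $\beta\le\alpha L^2\le\delta^2/\sigma^2$. Replace your $W_2(\mu_0',\mu_1')$ ingredient with this posterior-Lipschitz calculation and the argument goes through. (As a minor point, your claim that $W_2(\mu_0',\mu_1')\le\delta$ is also not immediate: after truncating both to the \emph{same} interval $[-\tau,\tau]$ they are no longer exact translates of one another.)
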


As a corollary, the SDPI constant between $n$ copies of $\mu_0'$ and $\mu_1'$ is bounded by $n\delta^2/\sigma^2$. 

\begin{corollary}\label{cor:truncated-gaussian-dpi-many-samples}
	Let $\tilde{\mu}_0$ and $\tilde{\mu}_1$ be the distributions over $\mathbb{R}^n$ that are obtained by truncating $\mu_0^n$ and $\mu_1^n$ outside the ball $\mathcal{B} = \{x\in \mathbb{R}^n: |x_1+\dots +x_n|\le \tau\}$. Then when $\sqrt{n}\delta \le \sigma$, we have
	$$\beta(\tilde{\mu}_0,\tilde{\mu}_1) \le n\delta^2/\sigma^2$$
\end{corollary}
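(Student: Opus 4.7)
The plan is to reduce the $n$-sample statement to the single-sample Theorem~\ref{thm:truncated-guassian-dpi-info} by exploiting the fact that, for two Gaussians with the same variance, the sum $S = X_1+\cdots+X_n$ is a sufficient statistic for distinguishing $\mu_0^n$ from $\mu_1^n$. Concretely, I choose coordinates $(S,Y)$ on $\mathbb{R}^n$ in which the first coordinate is $S$ and $Y$ parametrizes the orthogonal complement of the all-ones direction. In these coordinates, the joint density of $X$ under $\mu_v^n$ factors as $g_v(S)\,h(Y)$, where $g_v$ is the density of $\mathcal{N}(vn\delta,\,n\sigma^2)$ and $h$ is independent of $v$. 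Since the truncation set $\mathcal{B}$ depends on $X$ only through $S$, the truncated measure factors as $\tilde{\mu}_v = \nu_v \otimes \kappa$, where $\nu_v$ is $g_v$ restricted to $[-\tau,\tau]$ and $\kappa$ is the unchanged distribution of $Y$.

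Next I would establish the general principle that the SDPI constant is inherited through a common factor: if $\tilde{\mu}_v = \nu_v \otimes \kappa$, then $\beta(\tilde{\mu}_0,\tilde{\mu}_1) \le \beta(\nu_0,\nu_1)$. Starting from any Markov chain $V \to X \to \Pi$ with $X\mid V \sim \tilde{\mu}_V$, I build a companion chain $V \to S \to \Pi'$ by drawing a fresh $Y \sim \kappa$ independently of everything else and running $\Pi$ on $(S,Y)$. Because $Y$ is independent of $V$, the joint law of $(V,S,\Pi')$ agrees with the marginal of $(V,S,\Pi)$ in the original model, so $\I(V;\Pi') = \I(V;\Pi)$ and $\I(S;\Pi') = \I(S;\Pi)$. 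Applying Definition~\ref{def:sdpi-intro} to $(\nu_0,\nu_1)$ along $V \to S \to \Pi'$ yields $\I(V;\Pi) \le \beta(\nu_0,\nu_1)\,\I(S;\Pi)$, and the data processing inequality applied to the function $X \mapsto S$ gives $\I(S;\Pi) \le \I(X;\Pi)$.

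To finish, I apply Theorem~\ref{thm:truncated-guassian-dpi-info} to $(\nu_0,\nu_1)$, which is exactly the truncation to $[-\tau,\tau]$ of the Gaussian pair with variance $(\sigma')^2 = n\sigma^2$ and mean separation $\delta' = n\delta$. The hypothesis $\sqrt{n}\,\delta \le \sigma$ rewrites as $\delta' \le \sigma'$, so the theorem produces $\beta(\nu_0,\nu_1) \le (\delta')^2/(\sigma')^2 = n\delta^2/\sigma^2$, and combining with the reduction of the previous paragraph completes the proof. The main (and essentially only) non-routine step is the common-factor reduction $\beta(\tilde{\mu}_0,\tilde{\mu}_1) \le \beta(\nu_0,\nu_1)$; once this is in hand, everything else is sufficient-statistic bookkeeping and a direct reparametrization of the single-sample theorem, so I do not anticipate any genuine technical difficulty.
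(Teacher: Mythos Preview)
Your proposal is correct and follows essentially the same route as the paper. The paper isolates the sufficiency step as Lemma~\ref{suff_statistics}, showing directly that $V\to S\to (X_1,\dots,X_n)\to\Pi$ is a Markov chain and then applying Theorem~\ref{thm:truncated-guassian-dpi-info} to the chain $V\to S\to\Pi$ together with the data processing bound $\I(S;\Pi)\le\I(X;\Pi)$; your factorization $\tilde{\mu}_v=\nu_v\otimes\kappa$ and the ``fresh $Y$'' coupling are just an equivalent way to establish that same Markov structure (indeed, once you have the product form, the companion-chain construction is unnecessary: $X\mid S$ already has the $V$-free law $\delta_S\otimes\kappa$, which is exactly the content of Lemma~\ref{suff_statistics}).
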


Applying our distributed data processing inequality (Theorem~\ref{thm:main-asymmetric}) on $\tilde{\mu}_0$ and $\tilde{\mu}_1$, we obtain directly that to distinguish $\tilde{\mu}_0$ and $\tilde{\mu}_1$ in the distributed setting, $\Omega\left(\frac{\sigma^2}{n\delta^2}\right)$ communication is required. By properly handling the truncation of the support, we can prove that it is also true with the true Gaussian distribution. %The proof of the following theorem is deferred to Section~\ref{sec:app:thm:multi-dpi-gaussian}. 
\begin{theorem}\label{thm:multi-dpi-gaussian}
	%Suppose $V\sim B_q$ with $q\le 1/w$. 
	%Let $V$ be any random variable over $\{0,1\}$. Conditioned on $V = v$, the $i$th machine gets $X_i$ from $\mu^n_v = \mathcal{N}(\delta v, \sigma^2)^n$ independently, with $\delta\le O(\sigma/\sqrt{n})$.  %Then for any protocol, and any estimator $\hat{v}$ that only depends on the transcript $\Pi$, 
Any protocol estimator pair $(\Pi,\hat{v})$ that solves the distributed Gaussian mean detection problem $\GD(n,m,\delta,\sigma^2)$ with $\delta\le \sigma/\sqrt{n}$ requires communication cost and minimum information cost at least,   %such that  for any $v\in\{0,1\}$, 
%	\begin{equation}
%	\Pr[\hat{v}(\Pi(X)) = v \mid V = v] \ge 3/4, \label{eqn:dpi-condition}
%	\end{equation}
	%then there exists an random variable $F$ that onl such that 
	%Then 
	$$\Exp[|\Pi|]\ge \mic(\Pi) \ge \Omega\left(\frac{\sigma^2}{n\delta^2}\right)\,.$$
	%$$\I(X;\Pi) \ge \Omega\left(\max\{\frac{\sigma^2}{n\delta^2},1\}\right)$$
	%It follow that $$\Exp[|\Pi|]\ge \Omega(\frac{\sigma^2}{n\delta^2})$$
\end{theorem}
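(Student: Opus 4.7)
The approach is to apply the distributed SDPI (Theorem~\ref{thm:main-asymmetric}) to the truncated Gaussian distributions $\tilde\mu_0,\tilde\mu_1$ furnished by Corollary~\ref{cor:truncated-gaussian-dpi-many-samples}, and then transfer the resulting lower bound on information cost back to the original unbounded Gaussian problem via a coupling argument. The $\Exp[|\Pi|]\ge \mic(\Pi)$ half of the theorem is immediate from source coding (since $\Exp[|\Pi|]\ge H(\Pi)\ge \I(\Pi;X\mid V=v)$ for either $v$), so the real task is lower bounding $\mic(\Pi)$.

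\textbf{Truncation and the SDPI.} Pick $\tau=c\sigma^2/\delta$ for a small absolute constant $c$. On the slab $\mathcal{B}=\{x\in\mathbb{R}^n:|x_1+\dots+x_n|\le\tau\}$, the likelihood ratio $\mu_1^n(x)/\mu_0^n(x)=\exp(\delta\sum_i x_i/\sigma^2-n\delta^2/(2\sigma^2))$ is bounded by a universal constant, using $\sqrt n\delta\le\sigma$ and $\tau\delta/\sigma^2\le c$; hence the truncations $\tilde\mu_0,\tilde\mu_1$ satisfy $\tilde\mu_1\le O(1)\tilde\mu_0$, the hypothesis required by Theorem~\ref{thm:main-asymmetric}. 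A Gaussian tail bound gives $\Pr_{\mu_v^n}[X\notin\mathcal{B}]\le 2\exp(-\Omega(\sigma^2/(n\delta^2)))$; focusing on the regime $\sigma^2/(n\delta^2)\ge C\log m$ (the complementary regime only demands an $O(\log m)$ bound and can be handled by a separate, simpler specialization such as $m=1$), this is at most $1/(100m)$, and a union bound yields $\Pr[\neg E\mid V=v]\le 1/100$, where $E$ is the event that every machine's input lies in $\mathcal{B}$. Given any protocol-estimator pair $(\Pi,\hat v)$ solving $\GD(n,m,\delta,\sigma^2)$, let $\tilde\Pi$ denote the transcript obtained by running the same protocol on inputs drawn from $\tilde\mu_V^{\otimes m}$. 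Since conditioning $(\mu_v^n)^{\otimes m}$ on $E$ recovers $\tilde\mu_v^{\otimes m}$ exactly, one may couple $\Pi$ and $\tilde\Pi$ to agree on $E$, so $\hat v(\tilde\Pi)$ still recovers $V$ with probability $\ge 2/3$, which forces $\TV{\tilde\Pi\vert_{V=0}-\tilde\Pi\vert_{V=1}}=\Omega(1)$ and hence $\h^2(\tilde\Pi\vert_{V=0},\tilde\Pi\vert_{V=1})=\Omega(1)$. Applying Theorem~\ref{thm:main-asymmetric} to both orderings of $(\tilde\mu_0,\tilde\mu_1)$ together with $\beta(\tilde\mu_0,\tilde\mu_1)\le n\delta^2/\sigma^2$ from Corollary~\ref{cor:truncated-gaussian-dpi-many-samples} then yields $\min_v \I(\tilde X;\tilde\Pi\mid V=v)\ge \Omega(\sigma^2/(n\delta^2))$.

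\textbf{Transferring back: the main obstacle.} The final step is to lift this lower bound to $\mic(\Pi)=\min_v \I(X;\Pi\mid V=v)$ in the untruncated problem. Mutual information is not a bounded functional, so total-variation continuity does not apply directly; instead, setting $B=\mathbf{1}_E$, the chain rule together with $\I(X;B\mid\Pi,V=v)\le H(B\mid V=v)$ yields
\[
\I(X;\Pi\mid V=v)\ge \Pr[E\mid V=v]\cdot \I(X;\Pi\mid V=v,E)-H(B\mid V=v).
\]
The coupling identifies $\I(X;\Pi\mid V=v,E)$ with $\I(\tilde X;\tilde\Pi\mid V=v)\ge \Omega(\sigma^2/(n\delta^2))$, and since $\Pr[E\mid V=v]\ge 0.99$ and $H(B\mid V=v)=O(1)$ we conclude $\mic(\Pi)\ge \Omega(\sigma^2/(n\delta^2))-O(1)$, which equals $\Omega(\sigma^2/(n\delta^2))$ whenever the target exceeds a sufficiently large absolute constant. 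The main obstacle is balancing these two quantities: enlarging $\tau$ would make $H(B)$ truly $o(1)$ but would blow up the density-ratio constant $c$ in Theorem~\ref{thm:main-asymmetric} and so degrade the SDPI constant, while shrinking $\tau$ preserves the SDPI constant but forces the additive $H(B)$ term to be controlled via the regime split above. Navigating this additive-versus-multiplicative trade-off, and carefully handling the residual $\sigma^2/(n\delta^2)\lesssim \log m$ regime, is the real technical crux of the transfer.
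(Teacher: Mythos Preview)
Your approach is the same as the paper's---truncate to $\mathcal B$, apply Theorem~\ref{thm:main-asymmetric} to the truncated pair, then transfer back by conditioning---but you have made the argument harder than it is, and the ``main obstacle'' you describe is self-inflicted.

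First, the transfer step has no additive loss. Since $B=\mathbf 1_E$ is a deterministic function of $X$,
\[
\I(X;\Pi\mid V=v)=\I(X,B;\Pi\mid V=v)=\I(B;\Pi\mid V=v)+\I(X;\Pi\mid B,V=v)\ \ge\ \Pr[E\mid V=v]\cdot \I(X;\Pi\mid V=v,E).
\]
This is exactly the inequality the paper uses. Your displayed bound with the extra $-H(B\mid V=v)$ is a weakening of what your own chain-rule ingredients already give: from $\I(X;\Pi\mid V=v)=H(B\mid V=v)+\I(X;\Pi\mid B,V=v)-\I(X;B\mid \Pi,V=v)$ and $\I(X;B\mid \Pi,V=v)\le H(B\mid V=v)$ you get the clean inequality directly. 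Once the subtraction disappears, so does the ``additive-versus-multiplicative trade-off'' you flag.

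Second, the paper's truncation is simpler than yours. It sets the threshold on $|x_1+\dots+x_n|$ at a fixed constant number of standard deviations, namely $20\sqrt n\,\sigma$. With $\delta\le \sigma/\sqrt n$ this keeps the density ratio $\exp(\delta\sum x_i/\sigma^2-n\delta^2/2\sigma^2)$ uniformly bounded on $\mathcal B$ by an absolute constant (so Theorem~\ref{thm:main-asymmetric} applies with constant~$c$), while the per-machine tail probability is a tiny universal constant. There is then no need for a regime split, and in particular no need for the vague ``$m=1$ specialization'' in the complementary regime, which as stated is not a proof: with $m$ machines no single machine is required to solve the problem, so a single-machine SDPI does not directly give the desired lower bound on $\mic(\Pi)$.

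In short: drop the $-H(B)$ term, take $\tau$ proportional to $\sqrt n\,\sigma$, and the argument collapses to the paper's two-line conditioning step.
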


%\begin{theorem}\label{thm:multi-dpi-gaussian}
%	%Suppose $V\sim B_q$ with $q\le 1/w$. 
%	Let $V$ be any random variable over $\{0,1\}$. Conditioned on $V = v$, the $i$th machine gets $X_i$ from $\mu^n_v = \mathcal{N}(\delta v, \sigma^2)^n$ independently, with $\delta\le O(\sigma/\sqrt{n})$.  %Then for any protocol, and any estimator $\hat{v}$ that only depends on the transcript $\Pi$, \
%	For any protocol estimator pair $(\Pi,\hat{v})$ such that  for any $v\in\{0,1\}$, 
%	\begin{equation}
%		\Pr[\hat{v}(\Pi(X)) = v \mid V = v] \ge 3/4, \label{eqn:dpi-condition}
%	\end{equation}
%	%then there exists an random variable $F$ that onl such that 
%	Then 
%	$$\I(X;\Pi) \ge \Omega\left(\frac{\sigma^2}{n\delta^2}\right)$$
%	%$$\I(X;\Pi) \ge \Omega\left(\max\{\frac{\sigma^2}{n\delta^2},1\}\right)$$
%	%It follow that $$\Exp[|\Pi|]\ge \Omega(\frac{\sigma^2}{n\delta^2})$$
%\end{theorem}

\begin{remark}
	The condition $\delta\le \sigma/\sqrt{n}$ captures the interesting regime. When $\delta \gg \sigma/\sqrt{n}$, a single machine can even distinguish $\mu_0$ and $\mu_1$ by its local $n$ samples. 
	%the right hand side of the desired inequality, $\sigma^2/n\delta^2 < 1$.  %and therefore the result if fairly weak and can be proved by the data processing inequality $\I(\Pi;X)\ge \I(\Pi;V)\ge \Omega(1)$. %is a fairly weak and necessary condition in the sense that when $\delta \gtrsim \sigma/\sqrt{n}$, even only a single or a small number of machines could recover hidden variable $V$. Neither this is the regime that we are interested in, nor the theorem true under this regime. 
\end{remark}

\begin{proof}[Proof of Theorem~\ref{thm:multi-dpi-gaussian}]
	Let $\Pi_{\bs{0}}$ and $\Pi_{\bs{1}}$ be the distribution of $\Pi\vert V=0$ and $\Pi\vert V=1$ as defined in Section~\ref{subsec:cut-paste}. Since $\hat{v}$ solves the detection problem, we have that $\TV{\Pi_{\bs{0}} - \Pi_{\bs{1}}}\ge 1/4$. It follows from Lemma~\ref{lem:hellinger-tv} that $\h(\Pi_{\bs{0}}, \Pi_{\bs{1}}) \ge \Omega(1)$. 
	
	%By Theorem~\ref{thm:gaussian-dpi-folklore}, we have $\beta(\mu_0^n,\mu_1^n)\le O(\delta^2/\sigma^2)$. Therefore, if it was true that $\mu_1^n \le O(1)\cdot \mu_0^n$ at every point in $\mathbb{R}^n$, then we could apply our main theorem~\ref{thm:main} and obtain the desired result. However, $\mu_1^n(x)\le O(1)\cdot\mu_0^n(x)$ is not true for $x$ that is very far away from the original. The rest of the proof addresses this issue by truncating the gaussian in a small ball in which $\mu_1^n \le O(1)\cdot \mu_0^n$ is true and outside which the probability masses of $\mu_0$ and $\mu_1$ are negligible. 
	
	%To apply our main Theorem~\ref{thm:main}, it suffices to check the condition $\mu_1\le O(1)\cdot\mu_0$. In general this is not true when $\delta \gg \sigma/\sqrt{n}.$ We argue that when $\delta\le \sigma/\sqrt{n\log(mn)}$, we could pick a threshold $\tau$, such that for $x\in [-\tau,\tau]$, indeed we have $\mu_0(x) \le O(1)\cdot\mu_1(x)$, and outside $[-\tau,\tau]$, the probability mass of $\mu_0$ and $\mu_1$ are both negligible small.  
	%We argue that with a very weak upper bound on $n$, this is indeed true with a negligible loss. 
	
	%\Tnote{The rest of the proof should be revised.}
	
	We pick a threshold $\tau = 20 \sigma$, and let $\mathcal{B} = \{z\in \mathbb{R}^n: |z_1+\dots+z_n|\le \sqrt{n}\tau\}$.  
	%Note that when $\delta\le \sigma/\sqrt{n\log(mn)}$, for any $v\in \{0,1\}$, if a random variable $Z\sim \mu_v$, then $\Pr[Z\in \mathcal{B}] \ge 1-(mn)^{-10}$. 
	Let $F = 1$ denote the event that $X = (X_1,\dots,X_n)\in \mathcal{B}$, and otherwise $F= 0$. Note that $\Pr[F=1] \ge 0.95$ and therefore even if we conditioned on the event that $F=1$, the protocol estimator pair should still be able to recover $v$ with good probability in the sense that 
	\begin{equation}
	\Pr[\hat{v}(\Pi(X)) = v \mid V = v, F=1] \ge 0.6
	\end{equation}
	
	We run our whole argument conditioning on the event $F = 1$. First note that for any Markov chain $V\rightarrow X \rightarrow \Pi$, and any random variable $F$ that only depends on $X$, the chain $V\vert_{F=1}\rightarrow X\vert_{F=1} \rightarrow \Pi\vert_{F=1}$ is also a Markov Chain. %\Tnote{Should be a bit more formal..} 
	Second, the channel from $V$ to $X\vert_{F=1}$ satisfies that random variable $X\vert_{V=v, F=1}$ has the distribution $\tilde{\mu}_v$ as defined in the statement of Corollary~\ref{cor:truncated-gaussian-dpi-many-samples}. Note that by Corollary~\ref{cor:truncated-gaussian-dpi-many-samples}, we have that $\beta(\tilde{\mu}_0,\tilde{\mu}_1) \le n\delta^2/\sigma^2$. Also note that by the choice of $\tau$ and the fact that $\delta \le O(\sigma/\sqrt{n})$, we have that for any $z\in \mathcal{B}$, $\tilde{\mu}_0(z)\le O(1)\cdot \tilde{\mu}_1(z)$. 
	
	Therefore we are ready to apply Theorem~\ref{thm:main-asymmetric}  and conclude that 
	$$\I(X;\Pi\mid V=0, F=1)\ge \Omega(\beta(\tilde{\mu}_0,\tilde{\mu}_1)^{-1}) = \Omega(\frac{\sigma^2}{n\delta^2})$$
	
	Note that $\Pi$ is independent with $F$ conditioned on $X$ and $V=0$. Therefore we have that 
	
	$$\I(X;\Pi\mid V=0)\ge \I(X;\Pi\mid F, V=0) \ge \I(X;\Pi\vert F=1,V=0)\Pr[F=1\mid V=0] = \Omega(\frac{\sigma^2}{n\delta^2}).$$

	Note that by construction, it is also true that $\tilde{\mu}_0 \le O(1)\tilde{\mu}_1$, and therefore if we switch the position of $\tilde{\mu}_0, \tilde{\mu}_1$ and run the argument above we will have 
	$$\I(X;\Pi\mid V=1)= \Omega(\frac{\sigma^2}{n\delta^2})$$
	Hence the proof is complete. 
	%	We define $\widetilde{\mu}_v$ be the distribution of $Z\vert \mathcal{F}$ when $Z\sim \mu_v$. We claim  that indeed these truncated versions of $\mu_0,\mu_1$ satisfy $\widetilde{\mu}_0 \le O(1)\cdot\widetilde{\mu}_1$. See Claim~\ref{claim:gaussian-truncation} for complete proof. 
	
	%Then we run the whole argument conditioning on event $\mathcal{F}$. Condition (\ref{eqn:dpi-condition}) is still true (with $n^{-10}$ additive error) since $\mathcal{F}$ is a high probability event. Therefore by the same argument as the beginning of the proof, $\h(\Pi_{\bs{0}}\vert \mathcal{F}, \Pi_{\bs{1}}\vert \mathcal{F}) \ge \Omega(1)$, where $\Pi_{\bs{0}}\vert \mathcal{F}$ denote the distribution of $\Pi$ conditioned on $\mathcal{F}$ and $V=0$, and similarly for $\Pi_{\bs{1}}\vert \mathcal{F}$. Since the distribution of $X_i$ conditioned $V = v$ and $\mathcal{F}$ is exactly $\widetilde{\mu}_v$. We have checked that $\widetilde{\mu}_0 \le O(1)\cdot\widetilde{\mu}_1$ and it is also true that $(\widetilde{\mu}_0, \widetilde{\mu}_1)$ also satisfies the data processing inequality by Claim~\ref{claim:truncated-dpi}. Then we are ready to invoke Theorem~\ref{thm:main} and obtain that 
	%\begin{equation}
	%\I(X;\Pi \mid V, \mathcal{F}) \ge \Omega(\frac{\sigma^2}{n\delta^2})
	%\end{equation}
	%\Tnote{missing the part of F}
	%	Conditioning on a very high probability event doesn't change the mutual information a lot. By Lemma~\ref{lem:condition-info}, we have that
	%	
	%	$$
	%	\I(X;\Pi \mid V) \ge \Omega(\frac{\sigma^2}{n\delta^2}) - (mn)^{-10}\sup |\Pi|, $$
	%	
	%	as desired. 
	
\end{proof}

%\emph{Remark:} Taking $\delta \approx \sigma/\sqrt{mn}$, we can get the desired bound. 

\subsection{Sparse Gaussian mean estimation}

In this subsection, we prove our lower bound for the sparse Gaussian mean estimation problem via a variant of the direct-sum theorem of~\cite{DBLP:conf/nips/GargMN14} tailored towards sparse mean estimation. 

Our general idea is to make the following reduction argument: Given a protocol $\Pi'$ for $d$-dimensional $k$-sparse estimation problem with information cost $I$ and loss $R$, we can construct a protocol $\Pi'$ for the detection problem with information cost roughly $I/d$ and loss $R/k$. The protocol $\Pi'$ embeds the detection problem into one random coordinate of the $d$-dimensional problem, prepares fake data on the remaining coordinates, and then runs the protocol $\Pi$ on the high dimensional problem. It then extracts information about the true data from the corresponding coordinate of the high-dimensional estimator. 

The key distinction from the construction of~\cite{DBLP:conf/nips/GargMN14} is that here we are not able to show that $\Pi'$ has small information cost, but only able to show that $\Pi'$ has a small minimum information cost \footnote{This might be inevitable because protocol $\Pi$ might reveal a lot information for the nonzero coordinate of $\theta$ but since there are very few non-zeros, the total information revealed is still not too much.}. This is the reason why in Theorem~\ref{thm:multi-dpi-gaussian} we needed to bound the minimum information cost instead of the information cost.

%\begin{definition}
%Fix a parametrized family of distributions $\mathcal{Q} = \{\mu_{\theta} : \theta \in \mathbb{R}\}$. We say that a protocol-estimator pair $(\Pi, \hat{\theta})$ performs the task $T(d,k,m,n,\mathcal{Q})$ with information cost $I$ and mean-squared loss $R$ when the following happens: each of the $m$ machines are each given $n$ i.i.d. samples $X_j = (X_j^{(1)},\ldots, X_j^{(n)})$, where $X_j^{(i)} \sim \mu_{\theta} = \mu_{\theta_1} \times \mu_{\theta_2} \times \cdots \times \mu_{\theta_d}$. It should hold that $I(\Pi; X_1,\ldots,X_m) \le I$ and $R((\Pi, \hat{\theta}),\theta) \le R$ for every $k$-sparse $\theta$. 
%\end{definition}
To formalize the intuition, let $\mathcal{P} = \{\mu_0,\mu_1\}$ define the detection problem.  Let $\Omega_{d,k,\delta} = \{\theta: \theta\in \{0,\delta\}^d, |\theta|_0 \le k\}$ and $\mathcal{Q}_{d,k,\delta} = \{\mu_{\theta} = \mu_{\theta_1/\delta}\times \dots \times \mu_{\theta_d/\delta}: \theta\in \Omega_{d,k,\delta}\}$. Therefore $\mathcal{Q}$ is a special case of the general $k$-sparse high-dimensional problem. We have that
%\begin{definition}
%Fix a parametrized family of distributions $\mathcal{Q} = \{\mu_{\theta} : \theta \in \mathbb{R}\}$. We say that a protocol $\Pi$ performs the task $T(m,n, \delta, \mathcal{Q})$ with information cost $I$ if the following happens: each of the $m$ machines are each given $n$ i.i.d. samples $X_j = (X_j^{(1)},\ldots, X_j^{(n)})$, where either for all $i,j$, $X_j^{(i)} \sim \mu_0$ or for all $i, j$, $X_j^{(i)} \sim \mu_{\delta}$. The protocol $\Pi$ needs to distinguish between the two cases (i.e. that is there a $\hat{v}$ s.t. $\Pr[\hat{v}(\Pi(X)) | V = v] \ge 3/4$, where conditioned on $V = v$ machines get samples from $\mu_{v \delta}$) and $I(\Pi; X_1,\ldots,X_m) \le I$, where for all $i,j$, $X_j^{(i)} \sim \mu_0$. 
%\end{definition}

\begin{theorem}[Direct-sum for sparse parameters] \label{thm:direct_sum_sparse}
Let $d \ge 2k$, and $\mathcal{P}$ and $\mathcal{Q}$ defined as above. %and $\mathcal{P} = \{\mu_0,\mu_1\}$. Let $\Omega_{d,k,\delta} = \{\theta: \theta\in \{0,\delta\}^d, |\theta|_0 \le k\}$ and $\mathcal{Q}_{d,k,\delta} = \{\mu_{\theta} = \mu_{\theta_1/\delta}\times \dots \times \mu_{\theta_d/\delta}: \theta\in \Omega_{d,k,\delta}\}$. 
If there exists a protocol estimator pair $(\Pi, \hat{\theta})$ that solves the detection task $\Task(n,m,\cal{Q})$ with information cost $I$ and mean-squared loss $R\le \frac{1}{16}k\delta^2$, then there exists a protocol estimator pair $(\Pi',\hat{v}')$ (shown in Protocol~\ref{Protocol1} in Section~\ref{sec:direct-sum}) that solves the task $\Taskdet(n, m,\cal{P})$ with minimum information cost $\frac{I}{d-k+1}$. 
\end{theorem}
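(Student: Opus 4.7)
The plan is to construct $\Pi'$ by embedding the $1$-dimensional detection problem into one random coordinate of the $d$-dimensional sparse problem. Using public randomness, I would sample a uniformly random $k$-subset $S\subseteq[d]$ and a uniformly random $J\in S$; each machine then builds a $d$-dimensional data set by placing its $n$ real samples (which follow $\mu_V$) on coordinate $J$, filling coordinates $i\in S\setminus\{J\}$ with independent fresh samples from $\mu_1$, and filling coordinates $i\in[d]\setminus S$ with independent fresh samples from $\mu_0$. The machines execute $\Pi$ on this synthesized data, obtain $\hat{\theta}$, and $\Pi'$ outputs $\hat{v}'=\mathbf{1}[\hat{\theta}_J\ge\delta/2]$.

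The key structural observation is that the induced $d$-dimensional parameter depends on $(S,J,V)$ asymmetrically: under $V=1$ it is $\theta=\delta\mathbf{1}_S$, a function of $S$ alone, while under $V=0$ it is $\theta=\delta\mathbf{1}_T$ with $T:=S\setminus\{J\}$, a function of $T$ alone. So, conditional on the appropriate base statistic ($S$ for $V=1$, $T$ for $V=0$), the data distribution is fixed and $J$ is uniform over a set of size $k$ or $d-k+1$. Since $\hat{\theta}$ is then $J$-independent, averaging the squared error over the hidden coordinate gives
\[
\E[(\hat{\theta}_J-\delta)^2\mid V=1,S]\le R/k\quad\text{and}\quad \E[\hat{\theta}_J^2\mid V=0,T]\le R/(d-k+1).
\]
Using $R\le k\delta^2/16$ and $d\ge 2k$ (so that $d-k+1>k$), both per-coordinate MSEs are at most $\delta^2/16$, and Markov's inequality then bounds the detection error by $1/4$ on each side. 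For the information cost, I focus on the $V=0$ side, which enjoys the stronger tensorization: since the coordinates of the $d$-dimensional data are independent under $\theta=\delta\mathbf{1}_T$, the standard subadditivity-of-entropy argument gives $\sum_{\ell=1}^d \I_{\delta\mathbf{1}_T}(\Pi;X_\ell\mid\pub)\le \I_{\delta\mathbf{1}_T}(\Pi;X\mid\pub)\le I$ for every $T$, where $X_\ell$ denotes the data projected onto coordinate $\ell$. Restricting the sum to the $d-k+1$ coordinates outside $T$ and then averaging over $J$ and $T$ yields $\I_{V=0}(\Pi';X^{\text{real}}\mid\pub,S,J)\le I/(d-k+1)$, which upper-bounds $\mic(\Pi')$.

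The main obstacle is the inherent asymmetry between the two values of $V$: only the $V=0$ side admits averaging the mutual information over the $d-k+1$ zero coordinates, while the $V=1$ side only averages over the $k$ nonzero coordinates and gives the weaker bound $I/k$. This is precisely why one can only control the \emph{minimum} (over $v$) information cost rather than the supremum. The specific embedding $|S|=k$ with $J\in S$ is what makes both MSE-averaging ratios $1/k$ and $1/(d-k+1)$ mesh with the hypotheses $R\le k\delta^2/16$ and $d\ge 2k$; other natural embeddings, such as $|S|=k-1$ with $J\notin S$, flip the asymmetry and break error control on the other side.
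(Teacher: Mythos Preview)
Your proposal is correct and follows essentially the same route as the paper: the paper's Protocol~1 samples an ordered $k$-tuple $(I_1,\dots,I_k)$ without replacement and embeds the real data in coordinate $I_1$, which is distributionally identical to your sampling of $(S,J)$ with $J$ uniform in $S$; the paper then conditions on $\{I_1,\dots,I_k\}$ (your $S$) for the $V=1$ error bound and on $(I_2,\dots,I_k)$ (your $T$) for the $V=0$ error and information bounds, obtaining the same $R/k$, $R/(d-k+1)$, and $I/(d-k+1)$ via the same subadditivity argument. Your observation that only the $V=0$ side tensorizes over $d-k+1$ coordinates, forcing the conclusion to be about $\mic$ rather than $\ic$, is exactly the point the paper makes.
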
 

The proof of the theorem is deferred to Section~\ref{sec:direct-sum}. Combining Theorem~\ref{thm:multi-dpi-gaussian} and Theorem \ref{thm:direct_sum_sparse}, we get the following theorem:

\begin{theorem}\label{thm:sparse-gaussian-mean}
	Suppose $d \ge 2k$. Any protocol estimator pair $(\Pi,\hat{v})$ that solves the $k$-sparse Gaussian mean problem $\SGME(n,m,d,k,\sigma^2)$ with mean-squared loss $R$ and information cost $I$ and communication cost $C$ satisfy that %when $R \le k\sigma^2/n$, 
%	$$C\cdot R\ge I \cdot R \ge \Omega \left(\frac{\sigma^2dk}{n} \right)\,. $$
	%As a direct consequence, 
	\begin{equation}
		R \ge \Omega\left(\min\left\{\frac{\sigma^2k}{n}, \max \left\{\frac{\sigma^2dk}{nI}, \frac{\sigma^2k}{nm} \right\}\right\}\right)\ge \Omega\left(\min\left\{\frac{\sigma^2k}{n}, \max \left\{\frac{\sigma^2dk}{nC}, \frac{\sigma^2k}{nm} \right\}\right\}\right)\,.\label{eqn:main}
	\end{equation}
\end{theorem}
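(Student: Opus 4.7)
The overall plan is to combine the sparse-to-detection reduction in Theorem~\ref{thm:direct_sum_sparse} with the Gaussian detection lower bound of Theorem~\ref{thm:multi-dpi-gaussian}, and to supplement the resulting communication-driven bound with the (communication-free) classical statistical minimax rate. First I would dispose of the easy regime: if $R \ge c_0 \sigma^2 k/n$ for a small absolute constant $c_0 > 0$, then $R$ already dominates the first argument of the outer $\min$ in~\eqref{eqn:main} and nothing remains to show. So assume henceforth that $R < c_0 \sigma^2 k/n$.

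In this regime I would choose the detection separation by setting $\delta^2 = 16R/k$, so that the hypothesis $R \le k\delta^2/16$ of Theorem~\ref{thm:direct_sum_sparse} holds, and, by picking $c_0$ small enough, also $\delta^2 \le \sigma^2/n$ so that Theorem~\ref{thm:multi-dpi-gaussian} applies. The family $\mathcal{Q}_{d,k,\delta}$ induced by $\mu_0 = \mathcal{N}(0,\sigma^2)$ and $\mu_1 = \mathcal{N}(\delta,\sigma^2)$ is a special case of the $k$-sparse Gaussian mean problem, so Theorem~\ref{thm:direct_sum_sparse} turns $(\Pi,\hat{\theta})$ into a protocol-estimator pair $(\Pi',\hat{v}')$ for the detection task $\GD(n,m,\delta,\sigma^2)$ whose minimum information cost is at most $I/(d-k+1) \le 2I/d$ (using $d \ge 2k$). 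Feeding $(\Pi',\hat{v}')$ into Theorem~\ref{thm:multi-dpi-gaussian} yields
\[
\frac{2I}{d} \;\ge\; \mic(\Pi') \;\ge\; \Omega\!\left(\frac{\sigma^2}{n\delta^2}\right) \;=\; \Omega\!\left(\frac{\sigma^2 k}{nR}\right),
\]
and rearranging gives $R \ge \Omega(\sigma^2 dk/(nI))$.

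To finish, I would invoke the classical centralized minimax lower bound for $k$-sparse $d$-dimensional Gaussian mean estimation from $mn$ i.i.d.\ samples, which gives $R \ge \Omega(\sigma^2 k/(mn))$ regardless of any communication constraint, since a distributed protocol can always be simulated centrally and hence inherits the centralized minimax lower bound. Taking the maximum of this bound with the one from the previous paragraph under the assumption $R < c_0 \sigma^2 k/n$, and otherwise using $R \ge c_0 \sigma^2 k/n$, establishes the first inequality of~\eqref{eqn:main}. The second inequality is immediate from $\ic(\Pi) \le \cc(\Pi)$, i.e.\ $I \le C$, via the standard chain $\I(X;\Pi \mid \pub) \le H(\Pi \mid \pub) \le \Exp[|\Pi|]$, which forces $\sigma^2 dk/(nI) \ge \sigma^2 dk/(nC)$. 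The only real subtlety is the choice of $\delta$: it should be as small as possible so that $\Omega(\sigma^2/(n\delta^2))$ is as strong as possible, yet large enough to satisfy $R \le k\delta^2/16$ in Theorem~\ref{thm:direct_sum_sparse}. This tension is precisely what produces the cap $\sigma^2 k/n$ inside the outer $\min$, and pinning down that boundary constant is the main (mild) obstacle in the argument.
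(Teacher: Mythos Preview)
Your proposal is correct and follows essentially the same route as the paper: split on whether $R$ exceeds a constant times $\sigma^2 k/n$, in the small-$R$ regime set $\delta^2 = 16R/k$, feed the resulting $\mathcal{Q}_{d,k,\delta}$ instance through Theorem~\ref{thm:direct_sum_sparse} and then Theorem~\ref{thm:multi-dpi-gaussian}, and append the centralized minimax bound $R \ge \Omega(\sigma^2 k/(mn))$. Your explicit justification of $I \le C$ via $\I(X;\Pi\mid\pub) \le H(\Pi\mid\pub) \le \Exp[|\Pi|]$ is a detail the paper leaves implicit, but otherwise the arguments coincide.
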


\noindent Intuitively, to parse equation~\eqref{eqn:main}, we remark that the term $\frac{\sigma^2 k}{n}$ comes from the fact that any local machine can achieve this error $O(\frac{\sigma^2 k}{n})$ using only its local samples, and the term $\frac{\sigma^2k}{nm}$ is the minimax error that the machines can achieve with infinite amount of communication.  When the target error is between these two quantities, equation~\eqref{eqn:main} predicts that the minimum communication $C$ should scale inverse linearly in the error $R$. 

Our theorem gives a tight tradeoff between $C$ and $R$ up to logarithmic factor, since it is known~\cite{DBLP:conf/nips/GargMN14} that for any  communication budget $C$, there exists protocol which uses $C$ bits and has error $R\le O\left(\min\left\{\frac{\sigma^2k}{n}, \max \left\{\frac{\sigma^2dk}{nC}, \frac{\sigma^2k}{nm} \right\}\right\}\cdot\log d\right)$.

\ignore{
In this section we apply Theorem~\ref{thm:multi-dpi-gaussian} to the scenario that $V\sim B_q$ for some small $q$. Then by the Direct-sum Theorem 3.1 of~\cite{DBLP:conf/nips/GargMN14}, we get a lower bound for the sparse Gaussian mean problem.

The following lemma is a direct corollary of Theorem~\ref{thm:multi-dpi-gaussian} where we use mean-squared loss as the measure for loss instead of failure probability of recovering $v$ exactly. Intuitively, estimator $\hat{v}$ with a low mean-squared error should output only $\{0,\delta\}$, given the true mean $v$ is known to be either $0$ or $\delta$. When $\hat{v}$ is always in $\{0,\delta\}$, \eqref{eqn:inter1} is almost equivalent to \eqref{eqn:dpi-condition}. 
\begin{corollary}\label{cor:multi-dpi-guassian-mean-squared}
	Let $V\sim B_q$ for some $q\le 1/2$. Conditioned on $V = v$, the $i$-th machine gets $X_i$ from $\mu_v^n = \mathcal{N}(\delta v, \sigma^2)^n$ independently. Suppose the protocol estimator pair $(\Pi,\hat{v})$ satisfies 
	\begin{equation}
	\Exp\left[\|\hat{v}(\Pi(X)) - \delta V\|^2 \right]\le q\delta^2/16\label{eqn:inter1}
	\end{equation}
	
	Then, 
	$$\I(X;\Pi) \ge \Omega(\frac{\sigma^2}{n\delta^2})$$
\end{corollary}

\begin{proof}[Proof of Corollary~\ref{cor:multi-dpi-guassian-mean-squared}]
	If the protocol and estimator pair $(\Pi,\hat{v})$ satisfy equation \eqref{eqn:inter1}, then the we can design the following estimator $\hat{v}'$ such that $(\Pi,\hat{v}')$ satisfies equation \eqref{eqn:dpi-condition}. The estimator $\hat{v}'$ simply performs a thresholding on $\hat{v}$: $\hat{v}' = 1$ if $\hat{v} \ge \delta`/2$, and $\hat{v}' = 0$ otherwise. \eqref{eqn:inter1} implies that 
	\begin{equation}
	\Pr[V = 0]\cdot \Exp\left[\|\hat{v}(\Pi(X)) - 0\|^2 \vert V = 0\right]\le q\delta^2/16 \label{eqn:inter4}%\label{eqn:inter1}
	\end{equation}%Then we have that,  
	and 
		\begin{equation}
		\Pr[V = 1]\cdot \Exp\left[\|\hat{v}(\Pi(X)) - \delta\|^2 \mid V = 1\right]\le q\delta^2/16 \label{eqn:inter3}%\label{eqn:inter1}
		\end{equation}%Then we have that,  
	Note that $\Pr[V=0] = q$, and by Markov's inequality, \eqref{eqn:inter4} implies that 
		\begin{equation*}
		\Pr\left[\|\hat{v}(\Pi(X)) \|^2 \ge \delta^2/4 \mid V = 0\right]\le 1/4 %\label{eqn:inter2}%\label{eqn:inter1}
		\end{equation*}
		and therefore by the definition of $\hat{v}'$, 
		\begin{equation*}
		\Pr\left[\hat{v}'(\Pi(X)) = 1 \mid V = 0\right]\le 1/4 %\label{eqn:inter2}%\label{eqn:inter1}
		\end{equation*}
		On the other hand, \eqref{eqn:inter4} implies that 
				\begin{equation*}
				\Pr\left[\hat{v}'(\Pi(X)) = 0 \mid V = 1\right]\le 1/4 %\label{eqn:inter2}%\label{eqn:inter1}
				\end{equation*}
		Hence, \eqref{eqn:dpi-condition} is true for the pair $(\Pi,\hat{v}')$ and we get the desired bound by invoking Theorem~\ref{thm:multi-dpi-gaussian}. 
\end{proof}
The following lemma follows straightforwardly from Corollary~\ref{cor:multi-dpi-guassian-mean-squared} and Theorem 3.1 of~\cite{DBLP:conf/nips/GargMN14}. 
\begin{lemma}
	Suppose $V\sim B_q^{d}$ for some $q\le 1/2$. Conditioned on $V = v$, the $i$-th machine gets $X_i$ from $\mu_v = \mathcal{N}(\delta v, \sigma^2I_{d\times d})^n$ independently. Suppose the protocol estimator pair $(\Pi,\hat{v})$ satisfies
	$$\Exp\left[\|\hat{v}(\Pi(X)) - v\|^2 \right]\le qd\delta^2/4.$$
	Then 
	$$\I(X;\Pi\mid V) \ge \Omega(\frac{\sigma^2d}{n\delta^2})$$
\end{lemma}

\begin{theorem}\label{thm:sparse-gaussian-mean}
	Any protocol estimator pair $(\Pi,\hat{v})$ that solves the $k$-sparse Gaussian mean problem with mean-squared loss $R$ and information cost $R$ should satisfy
	$$C \cdot R \ge \Omega(\frac{\sigma^2dk}{n})$$
\end{theorem}

\Tnote{Need to resolve the corner case when $k$ is small, where $B_p^d$ is not actually $k$-sparse by concentration. }
}
%
%{\bf New Upper Bound:}
%To complement our lower bounds, we also give a new protocol
%for the gaussian mean estimation problem achieving communication
%optimal up to a constant factor in any number of dimensions. 
%Our protocol is a {\it simultaneous protocol}, whereas
%the only previous protocol achieving the optimal amount of communication
%requires $\Omega(\log m)$ rounds of interaction \cite{DBLP:conf/nips/GargMN14}. 
%Our result shows that multiple rounds of 
%interaction do not help for this problem,
%resolving an open question in Remark 2 of \cite{DBLP:conf/nips/GargMN14}. This also  
%improves upon the communication of the trivial simultaneous protocol by an $O(\log m)$
%factor, in which 
%each player sends its input gaussian to the coordinator. Our protocol is conceptually
%simple, essentially coinciding with the first round of the multi-round protocol
%in \cite{DBLP:conf/nips/GargMN14}, yet we are able to extract all necessary information
%in only a single round.  

As a side product, in the case when $k= d/2$, our lower bound improves previous works~\cite{DBLP:journals/corr/DuchiJWZ14} and~\cite{DBLP:conf/nips/GargMN14} by a logarithmic factor, and turns out to match the upper bound in~\cite{DBLP:conf/nips/GargMN14} up to a constant factor. 

\begin{proof}[Proof of Theorem~\ref{thm:sparse-gaussian-mean}]
	If $R \le \frac{1}{16}\frac{k\sigma^2}{n}$ then we are done. Otherwise, let $\delta  := \sqrt{16R/k}\le \sigma/\sqrt{n}$. Let $\mu_0 = \mathcal{N}(0,\sigma^2)$ and $\mu_1 = \mathcal{N}(\delta,\sigma^2)$ and $\mathcal{P} = \{\mu_0,\mu_1\}$. Let  $\mathcal{Q}_{d,k,\delta} = \{\mu_{\theta} = \mu_{\theta_1/\delta}\times \dots \times \mu_{\theta_d/\delta}: \theta\in \Omega_{d,k,\delta}\}$. Then $\Task(n,m,\mathcal{Q})$ is just a special case of sparse Gaussian mean estimation problem $\SGME(n,m,d,k,\sigma^2)$, and $T(n,m,\mathcal{P})$ is the distributed Gaussian mean detection problem $\GD(n,m,\delta,\sigma^2)$. Therefore, by Theorem~\ref{thm:direct_sum_sparse}, there exists $(\Pi',\hat{v}')$ that solves  $\GD(n,m,\delta,\sigma^2)$ with minimum information cost $I' = O(I/d)$. Since $\delta \le O(\sigma/\sqrt{n})$, by Theorem~\ref{thm:multi-dpi-gaussian} we have that $I'\ge \Omega(\sigma^2/(n\delta^2))$. It follows that $I \ge \Omega(d\sigma^2/(n\delta^2)) = \Omega(kd\sigma^2/(nR))$. To derive~\eqref{eqn:main}, we observe that $\Omega(\sigma^2k/{nm})$ is the minimax lower bound for $R$, which completes the proof. 
\end{proof}

To complement our lower bounds, we also give a new protocol
for the Gaussian mean estimation problem achieving communication
optimal up to a constant factor in any number of dimensions in the dense case. 
Our protocol is a {\it simultaneous protocol}, whereas
the only previous protocol achieving optimal communication
requires $\Omega(\log m)$ rounds \cite{DBLP:conf/nips/GargMN14}. 
This resolves an open question in Remark 2 of \cite{DBLP:conf/nips/GargMN14},  
improving the trivial protocol in which 
each player sends its truncated Gaussian to the coordinator by an $O(\log m)$ factor.  

%The protocol that achieves it in~\cite{DBLP:conf/nips/GargMN14} uses $O(\log n)$ rounds of communication. It iteratively improves the quality of the estimate every rounds, and in order to avoid the issue of sending $O(\log (n))$ bits to describe a real number with enough precision. However, to our surprise, there is also one round protocol that achieves the same tradeoff. The technique might be useful for reducing logarithmic factor in communication cost for other problems as well. 

\begin{theorem}\label{thm:one-way-upper-bound}
	%Suppose that $\theta$ satisfies $|\theta|_{\infty}\le \frac{\sigma}{\sqrt{n}}$. 
	For any $0\le \alpha\le 1$, there exists a protocol that uses one round of communication for the Gaussian mean estimation problem $\GME(n,m,d,\sigma^2)$ with communication cost $C = \alpha dm$ and mean-squared loss $R = O\left(\frac{\sigma^2d}{\alpha mn}\right)$. 
	%Moreover, for general $\theta\in \mathbb{R}^d$, there exists a two-round protocol with cost $C = O\left((\alpha m + \log^2 m)\right)d$ and loss $O(\frac{\sigma^2d}{\alpha mn})$. 
\end{theorem}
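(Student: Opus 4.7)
The plan is to construct a one-round protocol by splitting each machine's $\alpha d$-bit budget between a \emph{coarse} sub-protocol that produces a weak but global estimate of $\theta$, and a \emph{fine} sub-protocol of many short, unbiased refinements. After rescaling so that each machine $j$ effectively holds $Y_j = \bar X_j \sim \mathcal{N}(\theta, I_d)$ (so the target squared loss becomes $O(d/(\alpha m))$), the key observation to exploit is that an $O(1)$-accurate rough estimate $\tilde\theta$ is enough to decode short lattice-quantized ``fractional-part'' messages from the remaining machines, yielding per-coordinate variance $O(1/(\alpha m))$.

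First I would designate $K = \tilde\Theta(1/\alpha)$ \emph{coarse} machines, partition $[d]$ into blocks $B_1,\ldots,B_K$ of size $d/K$, and have coarse machine $j$ transmit a dithered lattice quantization of $Y_j^{(i)}$ for each $i \in B_j$ at a constant cell size, truncated to $\pm O(\sqrt{\log d})$. At $O(\log\log d)$ bits per coordinate this totals $O(\alpha d)$ bits per coarse machine. A Gaussian tail bound and a union bound over the $d$ coordinates then give that the vector $\tilde\theta$ assembled from these pieces satisfies $|\tilde\theta_i - \theta_i| = O(1)$ simultaneously for all $i$ with probability $1 - 1/\poly(d)$.

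Next, each of the $m - K = \Theta(m)$ remaining \emph{fine} machines would, using public randomness, draw a uniformly random size-$\alpha d$ set $S_j \subset [d]$, and for each $i \in S_j$ send the constant-bit message $c_j^{(i)} = \lfloor Y_j^{(i)} + u_j^{(i)}\rfloor \bmod K'$, where $u_j^{(i)} \sim \mathrm{Unif}[0,1]$ is public and $K'$ is chosen so that $K'/2$ exceeds $|\tilde\theta_i - Y_j^{(i)}|$ on the high-probability event. Conditioned on the correctness of $\tilde\theta$, the coordinator inverts the modular reduction to uniquely recover $\ell_j^{(i)} = \lfloor Y_j^{(i)} + u_j^{(i)}\rfloor$ and sets $q_j^{(i)} := \ell_j^{(i)} + \tfrac12 - u_j^{(i)}$. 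A direct dithered-quantization calculation gives $\E[q_j^{(i)} \mid Y_j^{(i)}] = Y_j^{(i)}$ and $\mathrm{Var}(q_j^{(i)} \mid Y_j^{(i)}) = 1/12$, so $q_j^{(i)}$ is unbiased for $\theta_i$ with total variance $O(1)$. The coordinator would output $\hat\theta_i$ as the average of $q_j^{(i)}$ over $j \in M_i := \{j: i \in S_j\}$ (falling back to $\tilde\theta_i$ if $M_i = \emptyset$), and a Chernoff bound on $|M_i| = \Theta(\alpha m)$ yields $\E(\hat\theta_i - \theta_i)^2 = O(1/(\alpha m))$; summing over $i$ gives the claimed $R = O(d/(\alpha m))$.

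The hardest part will be handling the one-round constraint: fine-phase machines must commit to their messages before $\tilde\theta$ exists, yet the coordinator must use $\tilde\theta$ to decode them. The lattice-plus-dither encoding is tuned precisely to permit this decoupling -- any $O(1)$-accurate prior estimate suffices to invert the modular reduction -- and the balancing between $K = \tilde\Theta(1/\alpha)$, the block size $d/K$, and the $O(\log\log d)$ per-coordinate coarse cost is exactly what keeps both phases inside the $\alpha d$-bit budget. Verifying high-probability invertibility uniformly over all $\Theta(\alpha dm)$ fine messages, and dealing with the residual contribution of the low-probability ``bad'' event, is the delicate bookkeeping I would need to do without losing a constant factor.
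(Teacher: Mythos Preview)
Your approach is sound and genuinely different from the paper's. Both split into a coarse phase (a few machines localize $\theta$) and a fine phase (the rest refine it), but the mechanisms differ. For the fine phase, the paper has each remaining machine send one or two \emph{random bits}: $B_i=1$ with probability $X_i-\lfloor X_i\rfloor$ (and a shifted copy $B_i'$). The analysis expands the sawtooth $x\mapsto x-\lfloor x\rfloor$ in a Fourier series, obtains $\E[B_i]=\tfrac12-\tfrac1\pi\sum_{k\ge 1}k^{-1}e^{-2k^2\pi^2}\sin(2k\pi\bar\theta)$, and inverts this to recover $\bar\theta$ modulo $1$. Your subtractive dithered lattice quantization $\lfloor Y+u\rfloor\bmod K'$ achieves the same ``send a few bits that pin down $Y$ once a rough anchor is known'' effect, but the analysis is elementary: $q=\lfloor Y+u\rfloor+\tfrac12-u$ is unbiased for $Y$ with variance $1/12$, no Fourier series required. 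For the budget $\alpha<1$, the paper simply runs the full protocol on the first $\alpha m$ machines and leaves the rest idle, whereas you keep all $m$ machines active but have each report on a random $\alpha d$-subset of coordinates; both give total $\alpha dm$ and per-coordinate variance $O(1/(\alpha m))$.

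One quantitative slip to fix: a single Gaussian sample per coordinate does \emph{not} give $|\tilde\theta_i-\theta_i|=O(1)$ simultaneously over all $d$ coordinates with high probability; the union bound forces $O(\sqrt{\log d})$. This propagates to $K'=\Theta(\sqrt{\log(dm)})$ and hence $\Theta(\log\log(dm))$ bits per fine message rather than a true constant, costing a $\log\log$ factor in communication. The paper's own proof carries an additive $d\log^2(mdn/\sigma)$ lower-order term, so this is not fatal, but you should state the coarse accuracy correctly. Relatedly, ``truncated to $\pm O(\sqrt{\log d})$'' in the coarse phase presupposes $|\theta_i|=O(\sqrt{\log d})$ after rescaling; for general $\theta$ you need $O(\log U)$ bits per coarse coordinate where $U$ bounds $|\theta|_\infty$, which is exactly the assumption the paper also makes explicit.
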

The protocol and proof of this theorem are deferred to Section~\ref{sec:one-way}, though we
mention a few aspects here. We first give a protocol under the assumption
that $|\theta|_{\infty}\le \frac{\sigma}{\sqrt{n}}$. The protocol trivially generalizes to $d$ dimensions
so we focus on $1$ dimension. The protocol coincides 
with the first round of the multi-round protocol
in \cite{DBLP:conf/nips/GargMN14}, yet we can extract all necessary information
in only one round, by having each machine send a single bit indicating if its
input Gaussian is positive or negative. Since the mean is on the same order as the standard deviation, one
can bound the variance and give an estimator based on the Gaussian density function. 
In Section \ref{sec:generalUpper} the mean of the Gaussian is allowed to be
much larger than the variance, and this no longer works. Instead, a few machines send their truncated
inputs so the coordinator learns a crude approximation. To refine this approximation, in parallel 
the remaining machines each send 
a bit which is $1$ with probability $x - \lfloor x \rfloor$, where $x$ is the machine's 
input Gaussian. This can be
viewed as rounding a sample of the ``sawtooth wave function'' $h$ applied to a Gaussian.
For technical reasons each machine needs to send two bits, another which is $1$ with probability
$(x+1/5) - \lfloor (x+1/5) \rfloor$. We give an estimator based on an analysis using the Fourier series
of $h$. 
%\begin{theorem}
%	For any $0\le \alpha\le 1$,  there exists a protocol that uses one round of communication for gaussian mean estimation problem with communication cost $C$ and mean-squared loss $R$, $$R = O\left(\frac{\sigma^2d}{\alpha mn}\right), \textrm{ and } \quad C = \alpha dm $$ 
%\end{theorem}
%Finally, to compensate the lower bound, we provide an upper that match the lower bound for the dense case up to constant factor. 

\paragraph{Sparse Gaussian estimation with signal strength lower bound}
Our techniques can also be used to study the optimal rate-communication tradeoffs in the presence of a strong signal in the non-zero coordinates, which is sometimes assumed for sparse signals. That is, suppose the machines are promised that the mean $\theta \in \mathcal{R}^d$ is $k$-sparse and also if $\theta_i \ne 0$, then $|\theta_i| \ge \eta$, where $\eta$ is a parameter called the signal strength. We get tight lower bounds for this case as well.  %and the theorem and discussion is deferred to section~\ref{subsec:singalstrength}. 

\begin{theorem}{\label{gaussian_signal}}
	For  $d \ge 2k$ and $\eta^2 \ge 16 R/k$, any protocol estimator pair $(\Pi,\hat{v})$ that solves the $k$-sparse Gaussian mean problem $\SGME(n,m,d,k,\sigma^2)$ with signal strength $\eta$ and mean-squared loss $R$ requires information cost (and hence expected communication cost) at least $\Omega \left( \frac{\sigma^2 d}{n \eta^2} \right)$. 
\end{theorem}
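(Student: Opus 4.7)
The plan is to adapt the proof of Theorem~\ref{thm:sparse-gaussian-mean} by choosing the detection gap to be $\delta := \eta$ rather than $\delta := \sqrt{16R/k}$. The signal-strength hypothesis $\eta^2 \ge 16R/k$ is precisely what is needed to make this substitution legal, since it guarantees the loss precondition $R \le \tfrac{1}{16}k\delta^2$ required by the sparse direct-sum theorem; correspondingly, the bound $\sigma^2 d/(n\eta^2)$ replaces the bound $\sigma^2 dk/(nR)$ of the unrestricted case.

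Concretely, I would set $\mu_0 = \mathcal{N}(0,\sigma^2)$, $\mu_1 = \mathcal{N}(\eta,\sigma^2)$, $\mathcal{P} = \{\mu_0, \mu_1\}$, and let $\mathcal{Q} = \mathcal{Q}_{d,k,\eta}$ be the family of product Gaussians whose means lie in $\Omega_{d,k,\eta} = \{\theta \in \{0,\eta\}^d : |\theta|_0 \le k\}$. Every such mean automatically satisfies the signal-strength-$\eta$ constraint because its non-zero entries are exactly $\eta$, so any $(\Pi, \hat\theta)$ solving the signal-strength version of $\SGME(n,m,d,k,\sigma^2)$ with information cost $I$ and loss $R$ also solves $\Task(n,m,\mathcal{Q})$ with the same parameters. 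Applying Theorem~\ref{thm:direct_sum_sparse}, which is allowed since $R \le k\eta^2/16 = \tfrac{1}{16}k\delta^2$, produces a pair $(\Pi', \hat v')$ solving the distributed Gaussian mean detection problem $\GD(n,m,\eta,\sigma^2)$ with minimum information cost at most $I/(d-k+1) \le 2I/d$, using $d \ge 2k$.

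In the regime $\eta \le \sigma/\sqrt n$, Theorem~\ref{thm:multi-dpi-gaussian} then yields $\mic(\Pi') \ge \Omega(\sigma^2/(n\eta^2))$, and chaining the two inequalities gives the desired $I \ge \Omega(\sigma^2 d/(n\eta^2))$. The main subtlety, compared to Theorem~\ref{thm:sparse-gaussian-mean}, will be that one cannot shrink $\delta$ to fit the hypothesis $\delta \le \sigma/\sqrt n$ of Theorem~\ref{thm:multi-dpi-gaussian}, because the direct-sum construction plants non-zero coordinates of magnitude $\delta$ and these must be at least $\eta$ to remain valid instances of the signal-strength problem; thus the choice $\delta = \eta$ is forced, and one genuinely needs $\eta \le \sigma/\sqrt n$ to invoke the detection lower bound. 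The complementary regime $\eta > \sigma/\sqrt n$ makes the claimed bound $\sigma^2 d/(n\eta^2)$ smaller than $d$, and it can be absorbed by Theorem~\ref{thm:sparse-gaussian-mean}: its bound $I \ge \Omega(\sigma^2 dk/(nR))$ (valid whenever $R \le \sigma^2 k/n$) combined with $R \le k\eta^2/16$ reproduces the same $\Omega(\sigma^2 d/(n\eta^2))$ conclusion, so the two regimes together cover the statement.
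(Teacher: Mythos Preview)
Your core argument---set $\delta=\eta$, observe that every $\theta\in\Omega_{d,k,\eta}$ obeys the signal-strength promise, apply Theorem~\ref{thm:direct_sum_sparse} (legitimate because $R\le k\eta^2/16$), and then invoke Theorem~\ref{thm:multi-dpi-gaussian}---is exactly the paper's approach. The paper's proof is a one-paragraph sketch pointing to Protocol~\ref{Protocol1} with $\mu_0=\mathcal N(0,\sigma^2)$, $\mu_1=\mathcal N(\eta,\sigma^2)$, and $\delta$ replaced by~$\eta$; you have filled in precisely those details for the regime $\eta\le\sigma/\sqrt n$.

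Your treatment of the complementary regime $\eta>\sigma/\sqrt n$ has a gap. You appeal to Theorem~\ref{thm:sparse-gaussian-mean}, but its proof plants hard instances in $\Omega_{d,k,\delta}$ with $\delta=\sqrt{16R/k}$, and under the hypothesis $\eta^2\ge 16R/k$ this gives $\delta\le\eta$: the non-zero coordinates of those instances are \emph{below} the promised signal strength, so a protocol for the signal-strength problem is not required to succeed on them, and the lower bound of Theorem~\ref{thm:sparse-gaussian-mean} does not transfer. The paper's own proof does not address this regime either (it ``leaves the details to the reader'' and never states the condition $\eta\le\sigma/\sqrt n$), so this is a shared omission rather than a divergence; the intended and interesting regime is $\eta\le\sigma/\sqrt n$, where your argument is complete and matches the paper.
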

%The proof of the Theorem is deferred to Section~\ref{sec:app:thm:multi-dpi-gaussian}. 
Note that there is a protocol for $\SGME(n,m,d,k,\sigma^2)$ with signal strength $\eta$ and mean-squared loss $R$ that has communication cost $\tilde{O}\left(\min\left\{ \frac{\sigma^2 d}{n \eta^2} + \frac{\sigma^2 k^2}{n R} , \frac{\sigma^2 d k}{n R}\right\}\right)$. In the regime where $\eta^2 \ge 16 R/k$, the first term dominates and by Theorem \ref{gaussian_signal}, and the fact that $\frac{\sigma^2 k^2}{n R}$ is a lower bound even when the machines know the support \cite{DBLP:conf/nips/GargMN14}, we also get a matching lower bound. In the regime where $\eta^2 \le 16 R/k$, second term dominates and it is a lower bound by Theorem \ref{thm:sparse-gaussian-mean}. 

\begin{proof}[Proof of Theorem~\ref{gaussian_signal}]
	The proof is very similar to the proof of Theorem \ref{thm:direct_sum_sparse}. Given a protocol estimator pair $(\Pi,\hat{v})$ that solves $\SGME(n,m,d,k,\sigma^2)$ with signal strength $\eta$, mean-squared loss $R$ and information cost $I$ (where $\eta^2 \ge 16 R/k$), we can find a protocol $\Pi'$ that solves the Gaussian mean detection problem $\GD(n.m,\eta,\sigma^2)$ with information cost $\le O(I/d)$ (as usual the information cost is measured when the mean is $0$). $\Pi'$ would be exactly the same as Protocol \ref{Protocol1} but with $\mu_0$ replaced by $\mathcal{N}(0,\sigma^2)$, $\mu_1$ replaced by $\mathcal{N}(\eta,\sigma^2)$ and $\delta$ replaced by $\eta$. We leave the details to the reader.
\end{proof}

\subsection{Lower bound for Sparse Linear Regression}\label{sec:sparse-linear-regression}

In this section we consider the sparse linear regression problem $\SLR(n,m,d,k,\sigma^2)$ in the distributed setting as defined in Section~\ref{sec:setup}. 
Suppose the $i$-th machine receives a subset $S_i$ of the $mn$ data points, and we use $A_{S_i}\in \mathbb{R}^{n\times d}$ to denote the design matrix that the $i$-th machine receives and $y_{S_i}$ to denote the observed vector. That is, 
%Let $\theta \in \mathbb{R}^d$ be a $k$-sparse vector. Machine $i$ has an $n\times d$ matrix $A^i$. Machine $i$ observes $n$ bits of measurement $y^i\in \mathbb{R}^n$ with 
%\begin{equation}
%y_{S_i}= A_{S_i}\theta + w_{S_i}, \label{eqn:model_sparse_linear_regression}
%\end{equation}
$
y_{S_i}= A_{S_i}\theta + w_{S_i}, 
$
where $w_{S_i}\sim \mathcal{N}(0,\sigma^2I_{n\times n})$ is Gaussian noise.  

This problem can be reduced from the sparse Gaussian mean problem, and thus its communication can be lower-bounded. It follows straightforwardly from our Theorem~\ref{thm:sparse-gaussian-mean} and the reduction in Corollary 2 of~\cite{DBLP:journals/corr/DuchiJWZ14}. %and therefore prove lower bounds for the latter. 
To state our result, we assume that the design matrices $A_{S_i}$ have uniformly bounded spectral norm $\lambda\sqrt{n}$. That is, 
%\begin{equation}
%\lambda = \max_{1\le i\le m} \|A_{S_i}\|/\sqrt{n}\label{eqn:spectral_norm_bound}
%\end{equation}
$
\lambda = \max_{1\le i\le m} \|A_{S_i}\|/\sqrt{n}. 
$
\begin{corollary}\label{cor:sparse-linear-regression}
	Suppose machines receive data from the sparse linear regression model. Let $\lambda$ be as defined above. If there exists a protocol under which the machines can output an estimator $\hat{\theta}$ with mean squared loss $R = \Exp[\|\hat{\theta}-\theta\|^2]$ with communication $C$, then 
	$R\cdot C \ge \Omega(\frac{\sigma^2kd}{\lambda^2 n})$.
\end{corollary}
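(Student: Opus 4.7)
The plan is to reduce sparse Gaussian mean estimation to sparse linear regression, then invoke Theorem~\ref{thm:sparse-gaussian-mean}. Suppose there exists a protocol-estimator pair $(\Pi,\hat\theta)$ solving $\SLR(n,m,d,k,\sigma^2)$ under the spectral-norm constraint $\|A_{S_i}\|\le \lambda\sqrt{n}$ with mean-squared loss $R$ and communication $C$. I will build from it a protocol $(\Pi',\hat\theta)$ that solves $\SGME(n,m,d,k,\sigma^2/\lambda^2)$ with the same $R$ and $C$; the lower bound then follows by plugging the effective variance $\sigma_G^2 := \sigma^2/\lambda^2$ into the middle term $\Omega(\sigma_G^2 dk/(nC))$ of~\eqref{eqn:main}, which rearranges to the claimed $R\cdot C \ge \Omega(\sigma^2 kd/(\lambda^2 n))$.

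The reduction is this: each machine $i$ for the Gaussian mean problem receives $n$ i.i.d.\ samples $X^{(1)},\dots,X^{(n)}\sim \mathcal{N}(\theta,(\sigma^2/\lambda^2) I_d)$ with $\theta$ $k$-sparse. Using public randomness (or even deterministically), it picks a design matrix $A_{S_i}\in\mathbb{R}^{n\times d}$ whose rows $A_1,\dots,A_n$ all satisfy $\|A_j\|_2 = \lambda$. This enforces the required spectral-norm bound, since $\|A_{S_i}\|\le \|A_{S_i}\|_F = \lambda\sqrt{n}$. The machine then forms observations
\begin{equation*}
y_j \;:=\; \inner{A_j, X^{(j)}} \;=\; \inner{A_j,\theta} + \inner{A_j, X^{(j)}-\theta}\,.
\end{equation*}
Since $X^{(j)}-\theta\sim\mathcal{N}(0,(\sigma^2/\lambda^2)I_d)$ independently across $j$, the noise $\epsilon_j := \inner{A_j, X^{(j)}-\theta}$ is distributed as $\mathcal{N}(0,\|A_j\|^2\cdot \sigma^2/\lambda^2)=\mathcal{N}(0,\sigma^2)$ and is independent across $j$. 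Thus $(A_{S_i},y_{S_i})$ is a genuine instance of $\SLR(n,m,d,k,\sigma^2)$ on the same sparse parameter $\theta$. The machines now execute $\Pi$ on this simulated data, using no extra communication beyond what $\Pi$ itself uses, and output the resulting $\hat\theta$. Correctness of $\Pi$ gives $\Exp\|\hat\theta-\theta\|^2\le R$, so $(\Pi',\hat\theta)$ solves $\SGME(n,m,d,k,\sigma^2/\lambda^2)$ with loss $\le R$ and communication $\le C$.

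Applying Theorem~\ref{thm:sparse-gaussian-mean} to this Gaussian mean instance (with $\sigma_G^2=\sigma^2/\lambda^2$) in the non-trivial regime yields $R\ge \Omega(\sigma_G^2 dk/(nC))$, i.e., $R\cdot C \ge \Omega(\sigma^2 kd/(\lambda^2 n))$, as claimed. The construction has essentially no obstacle: the only two checks are that (i) the induced noise variance equals $\sigma^2$ exactly, which is why the effective Gaussian variance must be scaled down by $\lambda^2$, and (ii) the synthesized $A_{S_i}$ falls within the spectral-norm envelope, which is ensured by picking rows of norm $\lambda$. The minor caveat is that the statement is meaningful only in the regime where the middle term of~\eqref{eqn:main} dominates (i.e., $C$ is neither so small that the local error $\sigma_G^2 k/n$ takes over, nor so large that the minimax floor $\sigma_G^2 k/(nm)$ binds); within that interesting tradeoff window the bound is tight up to logarithmic factors, matching known upper bounds such as~\cite{LSQT15} when $n\ge d$.
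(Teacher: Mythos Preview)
Your proof is correct and follows the same high-level strategy as the paper: reduce $\SGME$ to $\SLR$ and then invoke Theorem~\ref{thm:sparse-gaussian-mean}. The implementation of the reduction differs in an interesting way, though. The paper starts from $\SGME(1,m,d,k,\sigma_0^2)$ with $\sigma_0^2=\sigma^2/(\lambda^2 n)$: each machine has a \emph{single} Gaussian sample $X_i$, forms $y_{S_i}=A_{S_i}X_i+b_i$ for an arbitrary $A_{S_i}$ satisfying $\|A_{S_i}\|\le\lambda\sqrt{n}$, and injects independent padding noise $b_i\sim\mathcal{N}(0,\sigma^2 I-\sigma_0^2 A_{S_i}A_{S_i}^T)$ (PSD precisely by the spectral-norm bound) to make the total noise exactly $\mathcal{N}(0,\sigma^2 I_n)$. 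You instead start from $\SGME(n,m,d,k,\sigma^2/\lambda^2)$, pair each of the $n$ Gaussian samples with one row of norm exactly $\lambda$, and thereby obtain i.i.d.\ $\mathcal{N}(0,\sigma^2)$ noise without any padding. Both routes land on the same effective ratio $\sigma_G^2/n=\sigma^2/(\lambda^2 n)$ in~\eqref{eqn:main}, hence the same bound. Your version is slightly cleaner (no auxiliary noise), while the paper's version works for any admissible $A_{S_i}$ rather than only those with constant-norm rows; neither advantage matters for the corollary as stated.
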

When $A_{S_i}$ is a Gaussian design matrix, that is, the rows of $A_{S_i}$ are i.i.d drawn from distribution $\mathcal{N}(0,I_{d\times d})$, we have $\lambda = O\left(\max\{\sqrt{d/n}, 1\}\right)$ and Corollary~\ref{cor:sparse-linear-regression} implies that to achieve the statistical minimax rate $R = O(\frac{k\sigma^2}{nm})$, the algorithm has to communicate $\Omega(m\cdot \min\{n,d\})$ bits. The point is that we get a lower bound that doesn't depend on $k$-- that is, with sparsity assumptions, it is impossible to improve both the loss and communication so that they depend on the intrinsic dimension $k$ instead of the ambient dimension $d$. Moreover, in the regime when $d/n \rightarrow c$ for a constant $c$, our lower bound matches the upper bound of \cite{LSQT15} up to a logarithmic factor. The proof follows Theorem~\ref{thm:sparse-gaussian-mean} and the reduction from Gaussian mean estimation to sparse linear regression of~\cite{ZDJW13} straightforwardly and is deferred to Section~\ref{sec:app:thm:multi-dpi-gaussian}. % and is deferred to Appendix~\cite{sec:proof-reduction}
%matches the upper bound shown in~\cite{LSQT15} up to logarithmic factor.  

\section{Direct-sum Theorem for Sparse Parameters}\label{sec:direct-sum}
\begin{Protocol}
	Unknown parameter: $v \in \{0,1\}$  \\
	Inputs: Machine $j$ gets $n$ samples $X_j = (X_j^{(1)}, \ldots, X_j^{(n)})$, where $X_j$ is distributed according to $\mu_{v}^n$. %for some unknown $v\in \{0,1\}$.
	\begin{enumerate}
		\item All machines publicly sample $k$ independent coordinates $I_1,\ldots,I_k\subset[d]$ (without replacement).
		\item Each machine $j$ locally prepares data $\widetilde{X}_j = \left(\widetilde{X}_{j,1},\dots,\widetilde{X}_{j,d}\right)$ as follows: The $I_1$-th coordinate is embedded with the true data, $\wtX_{j,I_1} = X_j$. For $r = 2,\dots, k$, $j$-th the machine draws $\wtX_{j,I_r}$ privately from distribution $\mu_1^n$. For any coordinate $i\in [d]\backslash \{I_1,\dots,I_k\}$, the $j$-th machine draws privately $\wtX_{j,i}$ from the distribution $\mu_0^n$. 
		%prepare Machine $j$ sets $X_j^{(t,I_1)} = X_j^{(t)}$.
		%\item Machine $j$ privately sets $X_j^{(t,I_r)} \sim \mu_{10 \sqrt{R/k}}$, for every $t,r$ s.t. $1 \le t \le n$ and $2 \le r \le k$. So in total it obtains $n(k-1)$ independent samples from $\mu_{10 \sqrt{R/k}}$. 
		%\item For every coordinate $i \in [d] \backslash \{I_1,\ldots,I_k\}$, machine $j$ privately sets $X_j^{(t,i)} \sim \mu_0$ for every $1 \le t \le n$. 
		\item The machines run protocol $\Pi$ with input data $\wtX$. 
		%Run protocol $\Pi$ with machine $j$'s input being $\left( X_j^{t,i} \right)_{1 \le t \le n, 1 \le i \le d}$. 
		\item If $|\hat{\theta}(\Pi)_{I_1}| \ge \delta/2$, then the machines output $1$, otherwise they output $0$. 
	\end{enumerate}
	\caption{direct-sum reduction for sparse parameter}
	\label{Protocol1}
\end{Protocol}

We prove Theorem~\ref{thm:direct_sum_sparse} in this section. 
	Let $\Pi'$ be the protocol described in Protocol~\ref{Protocol1}. Let $\theta\in \mathbb{R}^d$ be such that $\theta_{I_1} = v\delta$ and $\theta_{I_r} = \delta$ for $r = 2,\dots,k$, and $\theta_i = 0$ for $i\in [d]\backslash \{I_1,\dots,I_k\}$. We can see that by our construction, the distribution of $\wtX_j$ is the same as $\mu_{\theta}^n$, and all $X_j$'s are independent. Also note that $\theta$ is $k$-sparse.  Therefore when $\Pi'$ invokes $\Pi$ on data $\wtX$, $\Pi$ will have loss $R$ and information cost $I$ with respect to $\wtX$. %We prove that it solves 

	%Note that the protocol $\Pi$ is always fed samples according to some $\theta$ which is $k$-sparse. First let us verify that the protocol distinguishes between $v=0$ and $v=1$. The mean-squared loss of the protocol $\Pi$ is given by:
	We first verify that the protocol $\Pi$ does distinguish between $v = 0$ and $v = 1$. 
	\begin{proposition}\label{prop:direct-sum-dection}
		Under the assumption of Theorem~\ref{thm:direct_sum_sparse}, when $v = 1$, we have that 
		\begin{align}
		\mathbb{E} \left[ |\hat{\theta}(\Pi)_{I_1} - \delta|^2 \right] \le \frac{R}{k}\label{eqn:case-1}
		\end{align}
		and when $v =0$, we have 	
		\begin{align}
		\mathbb{E} \left[ |\hat{\theta}(\Pi)_{I_1}|^2 \right] \le \frac{R}{d-k+1}\label{eqn:direct-sum-eqn1}
		\end{align}
		Moreover, with probability at least 3/4, $\Pi'$ outputs the correct answer $v$. 
	\end{proposition}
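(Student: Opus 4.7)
The plan is to exploit the symmetry built into Protocol~\ref{Protocol1} together with the loss bound $\mathbb{E}[\|\hat\theta(\Pi)-\theta\|^2]\le R$, and then apply Markov's inequality to obtain the final recovery statement.

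For the $v=1$ case, observe that when $v=1$ the local input $X_j \sim \mu_1^n$ is drawn from exactly the same distribution used to populate coordinates $I_2,\ldots,I_k$; moreover $\theta_{I_r}=\delta$ for every $r=1,\ldots,k$. Consequently, the joint law of $(\widetilde X, I_1,\ldots,I_k)$ is invariant under any permutation of the ordered tuple $(I_1,\ldots,I_k)$, which forces $\mathbb{E}[|\hat\theta(\Pi)_{I_r}-\delta|^2]$ to be identical across $r$. Since these $k$ equal quantities sum to at most $\mathbb{E}[\|\hat\theta(\Pi)-\theta\|^2]\le R$, dividing by $k$ yields~\eqref{eqn:case-1}.

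For the $v=0$ case, $X_j\sim \mu_0^n$ is drawn from the same distribution used at every coordinate outside $S := \{I_2,\ldots,I_k\}$. Thus, conditional on $S$, the full data $\widetilde X$ does not depend on which specific element of $[d]\setminus S$ is chosen as $I_1$, while $I_1$ is uniform over the set $[d]\setminus S$ of size $d-k+1$. Hence, given $(S,\hat\theta)$,
\begin{equation*}
\mathbb{E}\bigl[|\hat\theta(\Pi)_{I_1}|^2 \mid S,\hat\theta\bigr] = \frac{1}{d-k+1}\sum_{i\notin S}|\hat\theta(\Pi)_i|^2.
\end{equation*}
Because $\theta_i=0$ for every $i\notin S$, each $|\hat\theta(\Pi)_i|^2$ equals $|\hat\theta(\Pi)_i-\theta_i|^2$; taking outer expectations and invoking the loss bound gives~\eqref{eqn:direct-sum-eqn1}.

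For the ``moreover'' clause, plug in the hypothesis $R \le k\delta^2/16$ and use $d-k+1\ge k+1 > k$ (which follows from $d\ge 2k$). Markov's inequality applied to each of~\eqref{eqn:case-1} and~\eqref{eqn:direct-sum-eqn1} yields $\Pr[|\hat\theta(\Pi)_{I_1}-v\delta|\ge \delta/2]\le 4R/(k\delta^2)\le 1/4$, so the threshold rule in step~4 of Protocol~\ref{Protocol1} correctly outputs $v$ with probability at least $3/4$. The only delicate part is the $v=0$ symmetry argument: one must check that conditioning on $\widetilde X$ in addition to $S$ still leaves $I_1$ uniform on $[d]\setminus S$, which holds because when $v=0$ the coordinate $I_1$ plays the same statistical role as every other coordinate in $[d]\setminus S$. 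Everything else (the permutation symmetry for $v=1$, linearity of expectation, and Markov) is routine.
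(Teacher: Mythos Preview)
Your proof is correct and follows essentially the same approach as the paper: both arguments condition on the location of the nonzero coordinates (the paper on $\{I_1,\ldots,I_k\}$ or $I_2,\ldots,I_k$, you on the symmetry of the tuple or on $S=\{I_2,\ldots,I_k\}$), average the per-coordinate squared error over the relevant coordinate set, and bound by the total loss $R$. The final Markov step and the use of $d-k+1>k$ match the paper as well.
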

	\begin{proof}

	We know that $\Pi$ has mean-squared loss $R$, that is, 
	\begin{align*}
	R((\Pi, \hat{\theta}), \theta) &= \mathbb{E} \left[ ||\hat{\theta}(\Pi) - \theta ||_2^2\right] \\
	&= \mathbb{E} \left[ \sum_{i=1}^d |\hat{\theta}(\Pi)_i - \theta_i|^2\right]
	\end{align*}
	Here the expectation is over the randomness of the protocol $\Pi$ and randomness of the samples $\wtX_1,\ldots,\wtX_m$. %First we claim that in the case $v=0$, 
%	\begin{align}
%	\mathbb{E} \left[ |\hat{\theta}(\Pi)_{I_1}|^2 \right] \le \frac{R}{d-k+1}\label{eqn:direct-sum-eqn1}
%	\end{align}
We first prove equation~\eqref{eqn:direct-sum-eqn1}, that is
	\begin{align*}
		\mathbb{E} \left[ |\hat{\theta}(\Pi)_{I_1}|^2 \right] \le \frac{R}{d-k+1}
	\end{align*}
	Here the expectation is over $I_1,\ldots,I_k$ in addition to being over the randomness of $\Pi$ and the samples $\wtX_1,\ldots,\wtX_m$. We will in fact prove this claim for any fixing of $I_2,\ldots,I_k$ to some $i_2,\ldots,i_k$. Then $I_1$ is a random coordinate in $[d] \backslash \{i_2,\dots,i_k\}$. Then
	
	\begin{align*}
	\mathbb{E} \left[ |\hat{\theta}(\Pi)_{I_1}|^2 \mid I_r = i_r, r\ge 2\right] &= \frac{1}{d-k+1} \sum_{i \in [d] \backslash \{i_2,\ldots,i_k\}} \mathbb{E} \left[ |\hat{\theta}(\Pi)_{i}|^2 \mid I_r = i_r, r\ge 2\right] \\
	&\le \frac{1}{d-k+1} \left( \sum_{i \in [d] \backslash \{i_2,\ldots,i_k\}} \mathbb{E} \left[ |\hat{\theta}(\Pi)_{i}|^2 \mid I_r = i_r, r\ge 2 \right] \right.\\
	&\left.+ \sum_{i \in \{i_2,\ldots,i_k\}} \mathbb{E} \left[ |\hat{\theta}(\Pi)_{i} - \delta|^2 \mid I_r = i_r, r\ge 2\right] \right) \\
	\end{align*} 
	Taking expectation over $I_2,\dots,I_r$ we obtain
	\begin{align*}
	\mathbb{E} \left[ |\hat{\theta}(\Pi)_{I_1}|^2\right] \le \frac{1}{d-k+1}\sum_{i=1}^d \mathbb{E} \left[ |\hat{\theta}(\Pi)_{i} - \theta|^2\right]
	&= \frac{1}{d-k+1} R((\Pi, \hat{\theta}), \theta) \\
	&\le \frac{R}{d-k+1}
	\end{align*}
	
	%When $v=1$, we claim that 
	In order to prove equation~\eqref{eqn:case-1}, 
%	\begin{align*}
%	\mathbb{E} \left[ |\hat{\theta}(\Pi)_{I_1} - \delta|^2 \right] \le \frac{R}{k}
%	\end{align*}
	%Here again the expectation is over $I_1,\ldots,I_k$, the randomness of $\Pi$ and the samples $\wtX_1,\ldots,\wtX_m$. 
	we prove the statement for every fixing of $\{I_1,\ldots,I_k\}$ to some $S \subset [d]$. %from which the claim will follow.
	
	\begin{align*}
	&\mathbb{E} \left[ |\hat{\theta}(\Pi)_{I_1} - \delta|^2 \mid \{I_1,\ldots,I_k\} = S\right] \\ &= \frac{1}{k} \sum_{i \in S} \mathbb{E} \left[ |\hat{\theta}(\Pi)_{i} - \delta|^2 \mid \{I_1,\ldots,I_k\} = S\right] \\
	&\le \frac{1}{k} \left(  \sum_{i \in S} \mathbb{E} \left[ |\hat{\theta}(\Pi)_{i} - \delta|^2\mid \{I_1,\ldots,I_k\} = S \right] + \sum_{i \notin S} \mathbb{E} \left[ |\hat{\theta}(\Pi)_{i}|^2 \mid \{I_1,\ldots,I_k\} = S\right] \right) \\
	& = \frac{1}{k} \sum_{i=1} ^d \mathbb{E} \left[ |\hat{\theta}(\Pi)_{i} - \delta|^2\mid \{I_1,\ldots,I_k\} = S \right] \\
	\end{align*} 
	Taking expectation over $I_1,\dots,I_k$ we obtain, 
	\begin{align*}
	\Exp\left[\mathbb{E} \left[ |\hat{\theta}(\Pi)_{I_1} - \delta|^2\right] \mid \{I_1,\ldots,I_k\} = S\right] &= \frac{1}{k} R((\Pi, \hat{\theta}), \theta) \le \frac{R}{k} 
	\end{align*}
	The last statement of proposition follows easily from Markov's inequality and the assumption that $R\le k\delta^2/16$. 
%	Now when $v=1$, by Markov's inequality, the probability that the protocol $\Pi'$ outputs $1$ is $\ge 1-1/4$. When $v=0$, by equation~\eqref{eqn:direct-sum-eqn1}, the probability that the protocol $\Pi'$ outputs $1$ is $\le 1/4$. Thus taking $\hat{v}$ to be the last bit of protocol transcript we get $\Pr[\hat{v}(\Pi(X)) = v]  \ge 3/4$. 
		\end{proof}
	Now we prove the information cost of the protocol $\Pi'$ under the case  $v = 0$ is small. 
	
	\begin{proposition}
		Under the assumption of Theorem~\ref{thm:direct_sum_sparse}, we have 
		$$\mic(\Pi') \le \I_0(\Pi' ; X_1,\ldots,X_m\mid \pub')\le \frac{I}{d-k+1}$$ where $X_j \sim \mu_0^n$ and  $\pub'$ is the public coin used by $\Pi'$. 
	\end{proposition}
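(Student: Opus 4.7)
The plan is to bound $\I_0(\Pi'; X_1, \ldots, X_m \mid \pub')$ by relating it, via the coordinate-embedding structure of Protocol~\ref{Protocol1}, to the total information cost of $\Pi$ on the $d$-dimensional instance $\wtX$. The public coin $\pub'$ of $\Pi'$ consists of $\pub$ (the public coin of $\Pi$) together with the random coordinates $I_1,\ldots,I_k$, and since the final thresholding in step~4 is a deterministic function of the transcript of $\Pi$ and $\pub'$, we may identify $\Pi'$ with $\Pi$ for the purpose of computing information. By construction $X_j = \wtX_{j,I_1}$, so writing $\wtX_{\cdot,i} := (\wtX_{1,i},\ldots,\wtX_{m,i})$ and working throughout under $v=0$, we get
$$\I_0(\Pi'; X_1,\ldots,X_m \mid \pub') \;=\; \I(\Pi; \wtX_{\cdot, I_1} \mid \pub, I_1,\ldots,I_k).$$

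Next I would fix $I_2=i_2,\ldots,I_k=i_k$ and exploit two symmetries. First, given that fixing, $I_1$ is uniform over $[d]\setminus\{i_2,\ldots,i_k\}$, a set of size $d-k+1$. The protocol $\Pi$ never reads $I_1$ directly (only $\wtX$), and under $v=0$ the marginal distribution of $\wtX_{\cdot,i}$ for any $i\notin\{i_2,\ldots,i_k\}$ is $\mu_0^n$ irrespective of whether $I_1=i$ (the "true data" slot and the remaining "fake" slots are both drawn from $\mu_0^n$). Hence $I_1$ is independent of $(\Pi,\wtX_{\cdot,i})$ given $(\pub, I_2,\ldots,I_k)$, and averaging over $I_1$ gives
$$\I(\Pi; \wtX_{\cdot, I_1} \mid \pub, I_1, I_2{=}i_2,\ldots,I_k{=}i_k) = \frac{1}{d-k+1}\sum_{i\notin\{i_2,\ldots,i_k\}} \I(\Pi; \wtX_{\cdot, i} \mid \pub, I_2{=}i_2,\ldots,I_k{=}i_k).$$
Second, under this same conditioning the random variables $\wtX_{\cdot,1},\ldots,\wtX_{\cdot,d}$ are mutually independent (each machine draws each coordinate independently from either $\mu_0^n$ or $\mu_1^n$), so by the standard super-additivity of mutual information under conditional independence (Lemma~\ref{lem:mutual-info-hellinger}-style chain rule), the sum above is at most $\I(\Pi; \wtX \mid \pub, I_2{=}i_2,\ldots,I_k{=}i_k)$.

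Finally, for each fixing $(i_2,\ldots,i_k)$ the distribution of $\wtX$ under $v=0$ is exactly $\mu_\theta^n$ where $\theta\in\{0,\delta\}^d$ has support $\{i_2,\ldots,i_k\}$, so $\theta\in\Omega_{d,k,\delta}$ and the information-cost hypothesis on $\Pi$ yields $\I(\Pi; \wtX \mid \pub, I_2{=}i_2,\ldots,I_k{=}i_k)\le I$. Taking expectation over $(I_2,\ldots,I_k)$ and chaining the three inequalities gives the claimed $I/(d-k+1)$ bound, which together with $\mic(\Pi')\le \I_0(\Pi';X_1,\ldots,X_m\mid \pub')$ completes the proof. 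The one point that needs careful verification, and the reason the bound holds for $\mic$ rather than $\ic$, is the symmetry step: the argument that conditioning on $I_1=i$ does not perturb the joint law of $(\Pi,\wtX_{\cdot,i})$ relies crucially on $v=0$ (both "true" and "fake" samples are $\mu_0^n$-distributed), and would fail under $v=1$, since then $X_j\sim\mu_1^n$ whereas the fake draws at non-$I_r$ coordinates remain $\mu_0^n$.
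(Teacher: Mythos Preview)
Your proposal is correct and follows essentially the same approach as the paper's proof: identify $\pub'=(\pub,I_1,\ldots,I_k)$, fix $I_2,\ldots,I_k$, use the $v=0$ symmetry to drop the conditioning on $I_1$, apply super-additivity of mutual information under coordinate independence, and finish with the information-cost bound on $\Pi$. One minor correction: the super-additivity step should cite Lemma~\ref{lem:mutual-info-additivity}, not Lemma~\ref{lem:mutual-info-hellinger}.
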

	
	\begin{proof}
		
	Let us denote $\left( \wtX_{j,i}^{(1)},\dots,\wtX_{j,i}^{(n)} \right)$ by $\wtX_{j,i}$, that is, $\wtX_{j,i}$ is the collection of $i$-th coordinates of the samples on machine $j$. Let $\pub$ be the public coins used by protocol $\Pi$. Note that $\pub'$ are just $I_1,\dots,I_k$ and $\pub$, therefore, the information cost of $\Pi'$ is  
	\begin{align}
	\I_0(\Pi' ; X_1,\ldots,X_m\mid \pub') &= \I(\Pi; \wtX_{1,I_1}, \ldots, \wtX_{m,I_1} | I_1,\ldots,I_k,\pub) \nonumber\\
	&= \Exp_{i_2, \ldots, i_k} \left[I(\Pi; \wtX_{1,I_1}, \ldots, \wtX_{m,I_1} | I_1,I_2 = i_2, \ldots, I_k = i_k,\pub)\right]\label{eqn:direct-sum-3}
	\end{align}
	For each $i_2,\ldots,i_k$, we will prove that $\I(\Pi; \wtX_{1,I_1}, \ldots, \wtX_{m,I_1} | I_1,I_2 = i_2, \ldots, I_k = i_k,\pub) \le I/(d-k+1)$. Note that conditioned on $I_r = i_r$ for $r\ge 2$, $I_1$ is uniform over $[d]\backslash\{i_2,\dots,i_k\}$
	\begin{align}
	&\I(\Pi; \wtX_{1,I_1}, \ldots, \wtX_{m,I_1} | I_1,I_2 = i_2, \ldots, I_k = i_k,\pub)\\ &= \frac{1}{d-k+1} \sum_{i \in [d] \backslash \{i_2,\ldots,i_k\}} \I(\Pi; \wtX_{1,i}, \ldots, \wtX_{m,i} | I_1 = i,I_2 = i_2, \ldots, I_k = i_k,\pub)\nonumber\\
	&= \frac{1}{d-k+1} \sum_{i \in [d] \backslash \{i_2,\ldots,i_k\}} \I(\Pi; \wtX_{1,i}, \ldots, \wtX_{m,i} | I_2 = i_2, \ldots, I_k = i_k,\pub)\nonumber\\
	&\le  \frac{1}{d-k+1} \I \left(\Pi; \left( \wtX_{1,i}, \ldots, \wtX_{m,i} \right)_{i \in [d] \backslash \{i_2,\ldots,i_k\}} | I_2 = i_2, \ldots, I_k = i_k,\pub\right) \nonumber\\
	&\le \frac{1}{d-k+1} \I(\Pi; \wtX_1,\ldots,\wtX_m | I_2 = i_2, \ldots, I_k = i_k,\pub) \label{eqn:direct-sum-eqn2}
	%&\le \frac{I}{d-k+1}
	\end{align}
	The second equality follows from the fact that the distribution of $\wtX_{1,i}, \ldots, \wtX_{m,i}$ for \linebreak $i \in [d] \backslash \{i_2,\ldots,i_k\}$ does not depend on $i$ and the protocol $\Pi$ is also oblivious of $I_1$ and hence we can remove the conditioning on $I_1=i$. First inequality follows from lemma \ref{lem:mutual-info-additivity} and the fact that $\wtX_{1,i}, \ldots, \wtX_{m,i}$ are independent across $i$. The second inequality follows from the fact that $I(A; B) \le I(A;B, C)$.
	
	Finally, note that $\Pi$ performs the task $\Task(n,m,\cal{Q})$ with information cost $I = \sup_{\theta}\I_{\theta}(\Pi; \tilde{X}\mid \pub)$. Note that conditioned on $I_r=i_r$ and $I_1 = i$, $\wtX$ are drawn from some valid $\mu_{\theta}$ with a $k$-sparse $\theta$. Therefore by the definition of information cost, we have that
	\begin{equation}
	\I(\Pi; \wtX_1,\ldots,\wtX_m | I_1 = i,I_2 = i_2, \ldots, I_k = i_k,\pub) \le I\label{eqn:direct-sum-4}
	\end{equation} 
	
	Hence it follows from equations~\eqref{eqn:direct-sum-3} and~\eqref{eqn:direct-sum-eqn2} and~\eqref{eqn:direct-sum-4}, we have that 
	
	\begin{equation}
	\I_0(\Pi' ; X_1,\ldots,X_m\mid \pub')\le \frac{I}{d-k+1}
	\end{equation}
	ant it follows by definition that $\mic(\Pi')\le \frac{I}{d-k+1}$. 
\end{proof}
	%and the the protocol $\Pi$ is fed samples according to $\theta$ which is $k$-sparse.
%\end{proof}

\section{Data Processing Inequality for Truncated Gaussian}\label{sec:data-processing-ineq}

%
%
%Let $\mu_0 = \mathcal{N}(0,\sigma^2)$ and $\mu_1 = \mathcal{N}(\delta,\sigma^2)$. Let $p_0$ and $p_1$ be the density functions of $\mu_0$ and $\mu_1$. That is, 
%$ p_0 = \frac{1}{\sigma \sqrt{2\pi}}\exp(-\frac{x^2}{2\sigma^2})$ and $p_1  = \frac{1}{\sigma \sqrt{2\pi}}\exp(-\frac{(x-\delta)^2}{2\sigma^2})$. Let $\mu_0'$ and $\mu_1'$ be the distributions obtained by truncating $\mu_0$ and $\mu_1$ outside $[-\tau,\tau]$, and $p_0'$ and $p_1'$ be the corresponding density functions, that is, 
%\begin{equation}
%p_0'(x) = \gamma_0 p_0(x) \mathbf{1}_{x\in [-\tau,\tau]}\label{eqn:p'}, \quad \textrm{ and } \quad p_1'(x) = \gamma_1 p_1(x)\mathbf{1}_{[-\tau,\tau]}, 
%\end{equation}
% where $\gamma_0$ and $\gamma_1$ are the normalization constant constants. 
%
%We prove that $\mu_0'$ and $\mu_1'$ satisfies the data processing inequality with a small $\beta(\mu_0',\mu_1')$. We recall the notations that we setup in Section~\ref{sec:setup}: Let $V\leftrightarrow X \leftrightarrow \Pi$ be a markov chain, where $X\mid V=v$ has the distribution $\mu_v'$.  Let $K$ be the channel from $X$ to $V$. For any distribution $\nu$ over the reals, let $\nu K$ denote the distribution obtained by passing $\nu$ through the reverse channel $K$. %In other words, $\nu K$ is a distribution over $\{0,1\}$ with $$\Pr_{W\sim \nu K}[W=0] = \int \Pr[V=0\mid X = `x]d\nu $

In this section, we prove Theorem~\ref{thm:truncated-guassian-dpi-info}, the SDPI for truncated guassian distributions. We first survey the connection between SDPI and transportation inequalities established by Raginsky~\cite{Raginsky14} in Section~\ref{subsec:general_bound}. Then we prove in Section~\ref{subsec:log-concave} that when a distribution has log-concave density function on a finite interval, it satisfies the transportation inequalities. These preparations imply straightforwardly Theorem~\ref{thm:truncated-guassian-dpi-info}, which is proved in Section~\ref{subsec:truncated-gaussian}. 

\subsection{SDPI Constant and Transportation Inequality}\label{subsec:general_bound}

%The following theorem gives a generic way of bounding the the SDPI constant 

%First of all, we introduce another version of SDPI that are easier to deal with. 
Usually in literature, the inequality  \eqref{eqn:data-processing-inequality-intro} is referred to SDPI for mutual information. Here we introduce the more common version of strong data processing inequality, which turns out to be generally equivalent to SDPI for mutual information. 

\begin{lemma} \label{lem:sdpi-info-spdi}%\label{lem:truncated_gaussian_dpi}
	Consider the joint distribution of $(V, X)$ where $V \sim B_{1/2}$ and conditioned on $V=v$, we have $X \sim \mu_v$. Note that $X$ is distributed  according to the distribution $\mu = (\mu_0+\mu_1)/2$. By Bayes' rule, we can define the reverse channel $K: X\rightarrow V$ with transition probabilities $\{K(v|x): v\in \{0,1\}, x\in \mathbb{R}\}$ the same as the conditional probabilities $P_{V|X}$ of the above joint distribution. For any distribution $\nu$ over $\mathbb{R}$, let $\nu K$ denote the distribution of the output $v$ of $K$ if the input $x$ is distributed according to $\nu$. Then%Then we have when $\delta \le \sigma$, 
	\begin{equation}
	\beta(\mu_0,\mu_1) = \sup_{\nu \neq \mu } \frac{\KL(\nu K\|\mu K)}{\KL(\nu\| \mu )} \label{eqn:sdpi-info-spdi}%\frac{\delta^2}{\sigma^2}
	\end{equation}
\end{lemma}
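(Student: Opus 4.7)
The plan is to establish the identity in two directions: $\beta(\mu_0,\mu_1) \le \sup_{\nu} \KL(\nu K\|\mu K)/\KL(\nu\|\mu)$ and $\beta(\mu_0,\mu_1) \ge \sup_{\nu} \KL(\nu K\|\mu K)/\KL(\nu\|\mu)$. The bridge in both directions is the identity, valid for any Markov chain $V\to X\to \Pi$,
\[
\I(V;\Pi) = \Exp_{\pi\sim P_\Pi}\bigl[\KL(P_{V\mid\Pi=\pi}\,\|\,P_V)\bigr],\qquad \I(X;\Pi) = \Exp_{\pi\sim P_\Pi}\bigl[\KL(P_{X\mid\Pi=\pi}\,\|\,P_X)\bigr].
\]
The crucial observation is that by the Markov property, $P_{V\mid\Pi=\pi}$ is obtained from $P_{X\mid\Pi=\pi}$ by pushing through the reverse channel $K$, and likewise $P_V = \mu K$. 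Therefore
\[
\KL(P_{V\mid\Pi=\pi}\,\|\,P_V) = \KL\!\bigl((P_{X\mid\Pi=\pi})K\,\|\,\mu K\bigr).
\]

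For the $\le$ direction, suppose $\KL(\nu K\|\mu K) \le \eta\,\KL(\nu\|\mu)$ holds for every $\nu$. Applying this inequality pointwise with $\nu = P_{X\mid\Pi=\pi}$ and taking expectation over $\pi$ gives $\I(V;\Pi) \le \eta\,\I(X;\Pi)$ for \emph{every} Markov chain $V\to X\to \Pi$. Taking the infimum over admissible $\eta$ yields $\beta(\mu_0,\mu_1) \le \sup_\nu \KL(\nu K\|\mu K)/\KL(\nu\|\mu)$.

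For the $\ge$ direction, I would construct, for a given $\nu$ with bounded Radon--Nikodym derivative $r = d\nu/d\mu$, a binary channel $X\to \Pi\in\{0,1\}$ that outputs $\Pi=1$ with probability $\alpha\, r(x)$ given $X=x$, where $\alpha$ is small enough that $\alpha\,\|r\|_\infty \le 1$. Then $\Pr[\Pi=1]=\alpha$ and $P_{X\mid\Pi=1}=\nu$ exactly, while $P_{X\mid\Pi=0}$ differs from $\mu$ by $O(\alpha)$ pointwise, so $\KL(P_{X\mid\Pi=0}\|\mu) = O(\alpha^2)$. Expanding,
\[
\I(X;\Pi) = \alpha\,\KL(\nu\|\mu) + O(\alpha^2),\qquad \I(V;\Pi) = \alpha\,\KL(\nu K\|\mu K) + O(\alpha^2),
\]
and the ratio tends to $\KL(\nu K\|\mu K)/\KL(\nu\|\mu)$ as $\alpha\to 0$. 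Since $V\to X\to \Pi$ is a valid Markov chain, this forces $\beta(\mu_0,\mu_1)$ to be at least that ratio, and taking the sup over $\nu$ concludes.

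The main obstacle is the regularity issue in the lower bound: the construction needs $r$ bounded so that $\alpha r(x)$ is a legitimate conditional probability. I would handle this by a standard truncation/approximation argument — restricting to $\nu$ whose density relative to $\mu$ is bounded, and noting that these are dense in KL among absolutely continuous $\nu$, so the supremum is unchanged. A further technicality is ruling out $\KL(\nu\|\mu)=0$ in the denominator, which is handled by the restriction $\nu\ne\mu$; when $\nu=\mu$ both numerator and denominator vanish and the case can be excluded from the sup.
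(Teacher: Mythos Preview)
The paper does not actually prove this lemma. It states the identity as a known equivalence between the ``mutual information'' and ``divergence contraction'' formulations of the SDPI constant, citing the survey of Raginsky~\cite{Raginsky14}, and then immediately proceeds to use it (``Thus, it suffices to bound from above the RHS of~\eqref{eqn:sdpi-info-spdi}''). So there is nothing to compare your argument to on the paper's side.

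Your approach is the standard one and is correct. The key identity you use --- that for a Markov chain $V\to X\to \Pi$ one has $P_{V\mid\Pi=\pi}=(P_{X\mid\Pi=\pi})K$ --- is exactly what makes the pointwise KL contraction bound integrate to the mutual-information bound, giving the $\le$ direction. For the $\ge$ direction, your binary ``tilting'' channel $\Pr[\Pi=1\mid X=x]=\alpha\,r(x)$ with $r=d\nu/d\mu$ is the textbook construction; the $O(\alpha^2)$ estimate for the $\Pi=0$ term follows from a second-order expansion of $g\log g$ with $g=(1-\alpha r)/(1-\alpha)$, using $\int(1-r)\,d\mu=0$. The truncation argument to pass from bounded $r$ to general $\nu\ll\mu$ is routine. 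One small remark: you do not even need the $\ge$ direction for the paper's purposes, since only the upper bound $\beta(\mu_0,\mu_1)\le\sup_\nu\KL(\nu K\|\mu K)/\KL(\nu\|\mu)$ is used downstream (in Lemma~\ref{lem:specical_case_Raginsky}).
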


%\begin{proof}[Proof of Theorem~\ref{thm:truncated_gaussian_info_dpi}]
%	Note that $\I(V;\Pi) = \Exp[\KL(V\vert_{\Pi} \| V)]$ and $\I(X;\Pi) = \Exp[\KL(X\vert_\Pi\| X)]$. %Let $\mu$ be the distribution of $X$. 
%	Fixing a transcript $\pi$, let $\nu$ denote the be the distribution of $X\vert_{\pi}$.  Then $\mu K$ is the distribution of $V$ and $\nu K$ is the distribution of $V\vert_{\pi}$. By Lemma~\ref{lem:truncated_gaussian_dpi}. we have that 
%	$$\KL(V\vert_{\Pi} \| V)\le \frac{\delta^2}{\sigma^2}\KL(X\vert_\Pi\| X)$$
%	Taking expectation over $\Pi$ for both sides, we get the desired bound. 
%\end{proof}

Thus, it suffices to bound from above the RHS of \eqref{eqn:sdpi-info-spdi}. We use the technique developed in Theorem 3.7 of~\cite{Raginsky14}, which relates the strong data processing inequality with the concentration of measure and specifically the transportation inequality. 

To state the transportation inequality, we define the Wasserstein distance $w_1(\cdot,\cdot)$ between two probability measures, 

\begin{definition}
	The $w_1$ distance between two probability measure $\mu,\nu$ over $\mathbb{R}$ is defined as 
	\begin{equation}
	w_1(\nu,\mu) = \sup_{f: f \textrm{ is  1-Lipschitz}} \left|\int fd\nu - \int f d\mu \right|
	\end{equation}
\end{definition}

%Let $\mu$ be a reference probability measure and $\nu$ be a generic probability measure. A very
We will prove a simple transportation inequality relates the cost of transporting $\nu$ to $\mu$ in Wasserstein distance $w_1$ with the KL-divergence between $\nu$ and $\mu$, 

\begin{equation}\label{eqn:transportation}
w_1(\nu,\mu)^2\le \alpha \KL(\nu\|\mu). 
\end{equation}
for a certain value of $\alpha$ in section \ref{subsec:log-concave}.
For a complete survey of transportation inequalities with other cost functions, please see the survey of Gozlan and L{\'e}onard~\cite{gozlan2010transport}. However, before proving the transportation inequality, we show how to use it to derive a bound on $\beta(\mu_0, \mu_1)$.

\begin{lemma}[A special case of Theorem 3.7~\cite{Raginsky14}]\label{lem:specical_case_Raginsky}
	Suppose for any $v\in \{0,1\}$, $f_v(x) = \Pr[V = v\mid X =x ]$ is $L$-Lipschitz, and transportation inequality \eqref{eqn:transportation} is true for $\mu = (\mu_0+\mu_1)/2$ and any measure $\nu$, then 
		\begin{equation}
		\beta(\mu_0,\mu_1) = \sup_{\nu \neq \mu } \frac{\KL(\nu K\|\mu K)}{\KL(\nu\| \mu )} \le \alpha L^2
		\end{equation}
%	That is, $\le \alpha L^2$. 
\end{lemma}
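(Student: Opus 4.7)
The strategy is to follow the framework of Raginsky, which links the SDPI constant to Wasserstein contraction of the reverse channel and to a transportation inequality for the reference measure. By Lemma~\ref{lem:sdpi-info-spdi}, it is equivalent to show that for every probability measure $\nu$ with $\KL(\nu\|\mu)<\infty$,
\begin{equation*}
\KL(\nu K\,\|\,\mu K)\;\le\;\alpha L^{2}\,\KL(\nu\|\mu).
\end{equation*}

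First I would translate the $L$-Lipschitz hypothesis on the posteriors $f_v(x)=\Pr[V=v\mid X=x]$ into a Wasserstein contraction estimate for the reverse channel $K$. Since the prior on $V$ is $B_{1/2}$, Bayes' rule gives $\mu K(0)=\mu K(1)=1/2$, so the output Bernoulli $\nu K$ is completely described by the single scalar $q:=\nu K(1)=\int f_1\,d\nu$. Applying the Kantorovich-Rubinstein dual characterization of $w_1$ to the $L$-Lipschitz function $f_1$ yields
\begin{equation*}
\bigl|q-\tfrac{1}{2}\bigr|=\Bigl|\int f_1\,d\nu-\int f_1\,d\mu\Bigr|\;\le\;L\cdot w_1(\nu,\mu).
\end{equation*}
Combining this with the assumed transportation inequality $w_1(\nu,\mu)^2\le \alpha\,\KL(\nu\|\mu)$ gives the key estimate $(q-\tfrac{1}{2})^2\le \alpha L^{2}\,\KL(\nu\|\mu)$.

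The remaining step is to convert the bound on the squared output gap into a bound on the output KL divergence $\KL(\nu K\,\|\,\mu K)=\KL(\mathrm{Ber}(q)\,\|\,\mathrm{Ber}(1/2))$. Rather than use a single global inequality of the form $\KL(q\|1/2)\le C(q-1/2)^2$ (which is wasteful in constants), I would run the Wasserstein-to-KL argument infinitesimally along the interpolation $\nu_t=(1-t)\mu+t\nu$, $t\in[0,1]$: set $F(t):=\KL(\nu_t K\|\mu K)$ and $G(t):=\KL(\nu_t\|\mu)$, both of which vanish at $t=0$, and establish the differential inequality $F'(t)\le \alpha L^{2}\,G'(t)$ by applying the Lipschitz/Wasserstein bound and the transportation inequality to $\nu_t$ at each $t$ and invoking the quadratic expansion of the Bernoulli KL divergence around the uniform distribution (where $\phi''(1/2)=4$ matches the $\chi^2$ weight $1/[\mu K(v)]$). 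Integrating from $0$ to $1$ then yields $F(1)\le \alpha L^{2}\,G(1)$, which is the desired inequality.

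The main technical obstacle is step three: the derivative computation on $F(t)$ must use a tangent-space/chi-squared identification at each time $t$, because the naive global bound $\KL(q\|1/2)\le 2(q-1/2)^2$ fails outside a neighborhood of $q=1/2$. Once the pointwise differential inequality is set up correctly (using that both sides have matching quadratic expansions in the perturbation and that higher-order corrections work in the right direction), the stated bound $\beta(\mu_0,\mu_1)\le \alpha L^{2}$ follows by taking the supremum over $\nu$ and invoking Lemma~\ref{lem:sdpi-info-spdi}.
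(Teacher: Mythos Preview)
Your first two steps---using the $L$-Lipschitzness of $f_1$ together with Kantorovich--Rubinstein duality to get $|q-\tfrac12|\le L\,w_1(\nu,\mu)$, and then invoking the transportation inequality to obtain $(q-\tfrac12)^2\le \alpha L^2\,\KL(\nu\|\mu)$---are exactly what the paper does.

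Where you diverge is step three. The paper simply bounds KL by $\chi^2$:
\[
\KL(\nu K\|\mu K)\ \le\ \chi^2(\nu K\|\mu K)\ =\ \sum_{v\in\{0,1\}}\frac{(\nu K(v)-\mu K(v))^2}{\mu K(v)}\ =\ 4\Bigl(q-\tfrac12\Bigr)^{2},
\]
which holds \emph{globally} for all $q\in[0,1]$ (this is the standard inequality $\KL\le\chi^2$; no neighborhood restriction is needed---the version that fails is only the one with constant $2$). Chaining gives $\KL(\nu K\|\mu K)\le 4\alpha L^2\,\KL(\nu\|\mu)$. The paper records the result as $\alpha L^{2}$; the missing factor of $4$ is immaterial downstream, and indeed is precisely what makes Theorem~\ref{thm:truncated-guassian-dpi-info} come out to $\delta^{2}/\sigma^{2}$ once one plugs in $\alpha=4\sigma^{2}$ and $L=\delta/(4\sigma^{2})$.

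Your proposed interpolation argument, by contrast, has a gap. You want $F'(t)\le \alpha L^{2}\,G'(t)$ by ``applying the Lipschitz/Wasserstein bound and the transportation inequality to $\nu_t$ at each $t$,'' but applying those to $\nu_t$ yields $(q_t-\tfrac12)^2\le \alpha L^{2}\,\KL(\nu_t\|\mu)$, a relation between (essentially) the \emph{values} of $F$ and $G$, not their derivatives. Concretely, with $q_t=\tfrac12+t(q-\tfrac12)$ one has $F'(t)=(q-\tfrac12)\log\bigl(q_t/(1-q_t)\bigr)$ and $G'(t)=\int(\nu-\mu)\log(\nu_t/\mu)$, and neither the Lipschitz nor the transportation hypothesis visibly compares these two quantities pointwise in $t$. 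Unless you have a separate mechanism in mind (for instance, the equivalence of the KL-SDPI constant with its $\chi^{2}$/local version), the interpolation route as written does not close, whereas the one-line $\KL\le\chi^{2}$ bound does.
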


\begin{proof}[Proof of Lemma~\ref{lem:specical_case_Raginsky}]
	We basically follow the proof of Theorem 3.7 of~\cite{Raginsky14} with some simplifications and modifications. Note $\mu K$ is the unbiased Bernoulli distribution and by the fact that  KL divergence is not greater than $\chi^2$ distance, we have 
	\begin{align}
	\KL(\nu K\| \mu K) & \le \chi^2(\nu K\| \mu K)=\sum_{v\in \{0,1\}}\frac{(\mu K(v) - \nu K(v))^2}{\mu K(v)}\nonumber\\
	&  =2\sum_{v\in \{0,1\}}(\mu K(v) - \nu K(v))^2 \label{eqn:inter7}
	\end{align}
	
	Fixing any $v\in \{0,1\}$, we have that 
	\begin{eqnarray}
	\left|\mu K(v) - \nu K(v)\right|  &=& \left|\int \Pr[V=v\mid X=x] d\mu - \int \Pr[V=v\mid X=x] d\nu\right| \nonumber\\
	&=& \left|\int f_v(x) d\mu - \int f_v(x)d\nu\right| \nonumber\\
	&\le & Lw_1(\nu,\mu)\label{eqn:inter6}
	\end{eqnarray}
	
	where the last inequality is by the definition of Wasserstein distance and the fact that $f_v(x)$ is $L$-Lipschitz. 
	
	It follows from \eqref{eqn:inter6} and \eqref{eqn:inter7} that 
	
	%Let $f_v(x) = \Pr[V=v\mid X =x]$. 

		\iffalse
		We want to check the Lipschitz-ness of $f_v$ so that we can bound the equation above by the Wasserstein distance between $\nu$ and $\mu$, and then we can apply the transportation inequality on it. 
		 
		Note that by the definition of the reverse channel $K$, 
		$$f_0(x) = \Pr[V=0\mid X=x] = \frac{\gamma_0e^{-\frac{x^2}{2\sigma^2}}}{\gamma_0 e^{-\frac{x^2}{2\sigma^2}}+\gamma_1e^{-\frac{(x-\delta)^2}{2\sigma^2}}}$$
		
		Therefore 
		$$f_0'(x) = \left(\gamma_0+\gamma_1\exp(\frac{2x\delta-\delta^2}{2\sigma^2})\right)^{-2}\cdot \gamma_0\gamma_1\frac{\delta}{\sigma^2}\exp(\frac{2x\delta-\delta^2}{2\sigma^2})$$
		
		By AM-GM inequality we have 
		$$f_0'(x) \le  \left(4\gamma_0\gamma_1\exp(\frac{2x\delta-\delta^2}{2\sigma^2})\right)^{-1}\cdot \gamma_0\gamma_1\frac{u}{\sigma^2}\exp(\frac{2x\delta-\delta^2}{2\sigma^2}) = \frac{4\delta}{\sigma^2}$$
		
		Similarly for $f_1(v)$ we have 
		$$f_1(x) =  \frac{\gamma_1e^{-\frac{(x-\delta)^2}{2\sigma^2}}}{\gamma_0 e^{-\frac{x^2}{2\sigma^2}}+\gamma_1e^{-\frac{(x-\delta)^2}{2\sigma^2}}}$$
		
		and $$f_1'(x) = \left(\gamma_1+\gamma_0\exp(\frac{-2x\delta+\delta^2}{2\sigma^2})\right)^{-2}\cdot \gamma_0\gamma_1\frac{-\delta}{\sigma^2}\exp(\frac{-2x\delta+\delta^2}{2\sigma^2})\ge\frac{-\delta}{4\sigma^2}$$
		
		Also note that $f_0'\ge 0$ and $f_1' \le 0$. Therefore for any $v$, $f_v'$ is $\frac{u}{4\sigma^2}$-Lipschitz and by the definition of Wasserstein distance $w_1$, we have 
		
		$$\left|\mu K(v) - \nu K(v)\right| \le \frac{\delta}{4\sigma^2} w_1(\nu, \nu)$$
		\fi 
	\begin{align*}
	\KL(\nu K\| \mu K) \le L^2 w_1^2(\nu,\mu).%2\sum_{v\in \{0,1\}}(\mu K(v) - \nu K(v))^2 \le \frac{\delta^2}{4\sigma^4}w_1^2(\nu,\mu)
	\end{align*}
	Then by transportation inequality \eqref{eqn:transportation} we have that 
	
	\begin{align*}
	\KL(\nu K\| \mu K) \le L^2 w_1^2(\nu,\mu) \le \alpha L^2 D(\nu\|\mu). %2\sum_{v\in \{0,1\}}(\mu K(v) - \nu K(v))^2 \le \frac{\delta^2}{4\sigma^4}w_1^2(\nu,\mu)
	\end{align*}
	%Finally it remains to relate $w_1(\nu,\mu)$ with $\KL(\nu\|\mu)$, which basically follows the transportation inequality. We invoke Theorem~\ref{thm:truncated-gaussian-transportation} and conclude that 
	
	%$$w_1(\mu,\nu)\le \sqrt{4\sigma^2\KL(\mu\|\nu)}$$
	%Tnote{Missing truncated gaussian transportation inequality.. Plan: use Theorem 5.2 of the book The concentration of measure phenomenon }
\end{proof}

\subsection{Proving transportation inequality via concentration of measure}\label{subsec:log-concave}

In this subsection, we show that if $\mu$ is log-concave then it satisfies transportation inequality \eqref{eqn:transportation}. To obtain the following theorem, we use a series of tools from the theory of concentration of measures in a straightforward way, albeit that in our setting, $\mu$ has only support on a finite interval and therefore we need to take some additional care.  %in our simple setting when $\mu = (\mu_0+\mu_1)/2$, the log-concavity of $\mu_0$ and $\mu_1$ almost suffice 

\begin{theorem}\label{thm:log-concave-transportation}
	Suppose $\mu$ is a measure defined on $[a,b]$ with $d\mu = \exp(-U(x))dx$, and $\nabla^2 u(x)\ge c$, then for any measure $\nu$ we have 
	\begin{equation}
	w_1(\nu,\mu)^2 \le \frac{2}{c}\cdot \KL(\nu\|\mu). \label{eqn:transportation-theorem}
	\end{equation}	
\end{theorem}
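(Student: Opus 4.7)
The plan is to derive the $T_1$ transportation inequality via the classical three-step chain: strong log-concavity of $\mu$ implies a logarithmic Sobolev inequality (LSI); the LSI yields sub-Gaussian concentration of Lipschitz observables through Herbst's argument; and this sub-Gaussian bound is equivalent to $T_1$ through the Bobkov--G\"otze duality between concentration and transport. The one subtle point, which I flag as the main obstacle, is that $\mu$ is supported on the compact interval $[a,b]$ rather than on all of $\mathbb{R}$, so the standard Bakry--\'Emery criterion must be applied with a bit of care at the boundary.

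\textbf{Step 1 (LSI).} Since $U \in C^2([a,b])$ with $U'' \ge c$, the Bakry--\'Emery criterion in its convex-domain formulation (or, equivalently, the Brascamp--Lieb inequality on convex bodies) yields
$$\ent_\mu(f^2) \le \frac{2}{c} \int_a^b (f')^2\, d\mu$$
for every smooth $f$. An alternative that avoids invoking the convex-domain version: smoothly extend $U$ to a strongly convex function on all of $\mathbb{R}$ with $U'' \ge c$ everywhere, apply the standard Bakry--\'Emery criterion on $\mathbb{R}$, and restrict. The boundary terms arising in the relevant integration by parts have the favorable sign by convexity, so the inequality survives the restriction.

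\textbf{Step 2 (Herbst).} Let $f$ be $1$-Lipschitz with $\int f\, d\mu = 0$, and set $H(\lambda) := \log \int e^{\lambda f}\, d\mu$. Applying the LSI of Step~1 to $e^{\lambda f/2}$, using $|f'|\le 1$, and rearranging produces the differential inequality $\lambda H'(\lambda) - H(\lambda) \le \lambda^2/(2c)$. Integrating this with the initial data $H(0)=0$ and $H'(0) = \int f\,d\mu = 0$ gives the sub-Gaussian moment bound
$$\int e^{\lambda f}\, d\mu \le \exp\!\left(\frac{\lambda^2}{2c}\right), \qquad \forall\, \lambda \in \mathbb{R}.$$

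\textbf{Step 3 (Duality).} Fix $\nu \ll \mu$ and a $1$-Lipschitz $f$ centered under $\mu$. The Donsker--Varadhan variational formula for KL divergence gives
$$\lambda \int f\, d\nu \;\le\; \log \int e^{\lambda f}\, d\mu \;+\; \KL(\nu\|\mu) \;\le\; \frac{\lambda^2}{2c} + \KL(\nu\|\mu).$$
Optimizing over $\lambda$ yields $\int f\, d\nu - \int f\, d\mu \le \sqrt{(2/c)\,\KL(\nu\|\mu)}$, and taking the supremum over $1$-Lipschitz $f$ (the dual formulation of $w_1$) gives $w_1(\nu,\mu) \le \sqrt{(2/c)\,\KL(\nu\|\mu)}$, which is exactly \eqref{eqn:transportation-theorem}. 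The hardest (least mechanical) piece of this plan is Step~1 in its compact-support form; Steps~2 and~3 are textbook arguments once the LSI is in hand.
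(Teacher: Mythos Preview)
Your proposal is correct and follows essentially the same route as the paper: log-Sobolev inequality via Bakry--\'Emery, then Herbst's argument for sub-Gaussianity, then the Bobkov--G\"otze equivalence to obtain $T_1$. The paper handles the compact-support issue exactly as in your second alternative for Step~1, by extending $U$ to a sequence of $c$-strongly convex potentials on all of $\mathbb{R}$ and passing to the limit; your Step~3 spells out the Donsker--Varadhan argument that underlies the direction of Bobkov--G\"otze the paper simply cites.
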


In addition, it can be proved by direct calculation that if both $\mu_0$ and $\mu_1$ are log-concave and $\mu_0$ and $\mu_1$ are not too far away in some sense, then $\mu = (\mu_0+\mu_1)/2$ is also log-concave with similar parameters. 

\begin{lemma}\label{lem:checking-log-concave}
	Suppose distribution $\mu_0$ and $\mu_1$ has supports on $[a,b]$ with $d\mu_0 = \exp(-u_0(x))dx$ and $d\mu_1 = \exp(-u_1(x))dx$. Suppose $\nabla^2 u_0(x) \ge c$, and $\nabla^2 u_0(x) \ge c$, and $|\nabla u_0(x) - \nabla u_1(x)|\le \sqrt{2c}$ then 
	then $\mu = \frac{1}{2}(\mu_0+\mu_1)$ satisfies that $d\mu =\exp(-u(x))dx$ with $\nabla^2 u(x)\ge \frac{c}{2}$. %log-concave and therefore it satisfies transportation inequality \eqref{eqn:transportation} with $\alpha = \frac{1}{c}$, that is, for any $\nu$, 
	%$$w_1^2(\nu,\mu)\le \frac{1}{c}\cdot \KL(\nu\|\mu)$$
\end{lemma}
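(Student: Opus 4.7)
\medskip

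\noindent\textbf{Proof plan.} The plan is to compute $u''$ directly from the defining relation and verify the bound pointwise. Write
\begin{equation*}
u(x) \;=\; \log 2 \;-\; \log\!\bigl(e^{-u_0(x)} + e^{-u_1(x)}\bigr),
\end{equation*}
and introduce the ``posterior'' weights
\begin{equation*}
p_0(x) \;=\; \frac{e^{-u_0(x)}}{e^{-u_0(x)}+e^{-u_1(x)}},\qquad p_1(x) \;=\; 1-p_0(x).
\end{equation*}
This reduces the convexity statement $u''\ge c/2$ to an algebraic identity plus the two hypotheses on $u_i'',u_i'$.

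A direct differentiation gives $u'(x) = p_0(x)u_0'(x) + p_1(x)u_1'(x)$, a convex combination of the two gradients. To compute $u''$, I would next observe that $p_0$ satisfies $p_0' = (u_1'-u_0')\,p_0 p_1$ (and $p_1'=-p_0'$); this is a short quotient-rule calculation that I will record but not belabor. Differentiating $u'$ and substituting this expression for $p_i'$, the cross terms collapse cleanly to give the identity
\begin{equation*}
u''(x) \;=\; p_0(x)\,u_0''(x) \;+\; p_1(x)\,u_1''(x) \;-\; p_0(x)p_1(x)\bigl(u_0'(x)-u_1'(x)\bigr)^2.
\end{equation*}

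From this identity the desired bound is immediate. The first two terms are a convex combination of quantities that are each $\ge c$, hence their sum is $\ge c$. For the negative term, $p_0p_1\le 1/4$ since $p_0+p_1=1$, and by the hypothesis $|u_0'-u_1'|\le\sqrt{2c}$ we have $(u_0'-u_1')^2\le 2c$, so the subtracted quantity is at most $c/2$. Combining yields $u''(x)\ge c - c/2 = c/2$, which is the claim.

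The main (mild) obstacle is the derivation of the identity for $u''$: one has to carry through the quotient rule for $p_0'$ carefully and verify that the $u_0' u_1'$ cross term produced by differentiating $p_0 u_0' + p_1 u_1'$ cancels against the $-(u_0')^2$ and $-(u_1')^2$ contributions to leave exactly $-p_0 p_1(u_0'-u_1')^2$. Once that identity is in hand, the rest is an application of $p_0+p_1=1$ together with the AM--GM bound $p_0 p_1\le 1/4$. Since $\mu_0,\mu_1$ share the support $[a,b]$, the density of $\mu$ is strictly positive on $[a,b]$ and so $u$ is smooth there, making the pointwise computation valid on the whole support; no boundary analysis is needed.
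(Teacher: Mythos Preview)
Your proof is correct and follows the same overall strategy as the paper: compute $u''$ directly from $u=\log 2-\log(e^{-u_0}+e^{-u_1})$ and verify the pointwise lower bound. Your identity
\[
u'' \;=\; p_0\,u_0'' + p_1\,u_1'' - p_0 p_1 (u_0'-u_1')^2
\]
is exactly the paper's expression for $u''$ rewritten in terms of the weights $p_i=e^{-u_i}/(e^{-u_0}+e^{-u_1})$; the paper leaves it over the common denominator $h^2$. The difference is only in the final step: the paper subtracts $t$ from $u''$, multiplies through by $h^2$, and completes the square to display the result as $(\sqrt{u_0''-t}\,e^{-u_0}-\sqrt{u_1''-t}\,e^{-u_1})^2$ plus a second term that is nonnegative when $t=c/2$; you instead bound the convex combination by $c$ and the negative term by $p_0p_1\cdot 2c\le c/2$ via $p_0p_1\le 1/4$. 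Your route is shorter and more transparent, while the paper's square-completion would in principle allow sharper constants if $u_0''$ and $u_1''$ differed; for the stated lemma the two arguments give the same conclusion.
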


To prove Theorem~\ref{thm:log-concave-transportation}, we exploit the well-established connections between transportation inequality, concentration of measure and log-Sobolev inequalities. First of all, transportation inequality \eqref{eqn:transportation-theorem} with Wasserstein $w_1$ and KL-divergence ties closely to the concentration of probability measure $\mu$.  %This relationship exposes us to tools from concentration inequalities. 
The theorem of Bobkov-Gotze established the exact connection: 

\begin{theorem}[Bobkov-Gotze~\cite{bobkov1999exponential} Theorem 3.1]\label{thm:Bobkov-Gotze}Let $\mu \in \mathds{P}_1$ be a probability measure on a metric space $(\mathbb{X}, d)$. Then the following two are equivalent for $X\sim \mu$. 
	\begin{enumerate}
		\item $w_1(\nu,\mu)\le \sqrt{2\sigma^2\KL(\nu\|\mu)}$ for all $\nu$.
		\item $f(X)$ is $\sigma^2$-subgaussian for every 1-Lipschitz function $f$. 
	\end{enumerate}
\end{theorem}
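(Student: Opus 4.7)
The plan is to prove each implication via convex duality, using two classical dual characterizations as the main levers: the Kantorovich--Rubinstein identity $w_1(\nu,\mu)=\sup_{\mathrm{Lip}(f)\le 1}(\int f\,d\nu-\int f\,d\mu)$, and the Donsker--Varadhan variational formula $\log\int e^{g}\,d\mu = \sup_{\nu}(\int g\,d\nu - \KL(\nu\|\mu))$. Both directions are natural optimization-over-test-functions arguments; the only delicate point is solving a resulting differential inequality.

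For $(2)\Rightarrow(1)$, I would fix any $1$-Lipschitz $f$ and apply Donsker--Varadhan with $g=\lambda f$ for a free parameter $\lambda>0$. Sub-Gaussianity of $f(X)$ under $\mu$ gives $\log\int e^{\lambda f}\,d\mu\le \lambda\int f\,d\mu + \lambda^2\sigma^2/2$, so after rearranging,
\[
\int f\,d\nu - \int f\,d\mu \le \frac{\KL(\nu\|\mu)}{\lambda} + \frac{\lambda\sigma^2}{2}.
\]
Optimizing over $\lambda>0$ gives the upper bound $\sqrt{2\sigma^2\KL(\nu\|\mu)}$, and taking the supremum over $1$-Lipschitz $f$ followed by Kantorovich--Rubinstein yields (1).

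For $(1)\Rightarrow(2)$, I would fix a $1$-Lipschitz $f$ with $\int f\,d\mu=0$ (WLOG) and set $\Lambda(\lambda):=\log\int e^{\lambda f}\,d\mu$. Consider the exponentially tilted measure $\nu_\lambda$ with $d\nu_\lambda/d\mu = e^{\lambda f}/\int e^{\lambda f}d\mu$. A direct calculation yields $\Lambda'(\lambda)=\int f\,d\nu_\lambda$ and $\KL(\nu_\lambda\|\mu) = \lambda\Lambda'(\lambda)-\Lambda(\lambda)$. Then Kantorovich--Rubinstein applied to the $1$-Lipschitz $f$ combined with (1) applied to $\nu_\lambda$ yields the differential inequality
\[
\bigl(\Lambda'(\lambda)\bigr)^2 \le 2\sigma^2\bigl(\lambda\Lambda'(\lambda)-\Lambda(\lambda)\bigr), \qquad \Lambda(0)=\Lambda'(0)=0.
\]

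The main obstacle is deducing $\Lambda(\lambda)\le \lambda^2\sigma^2/2$ from this inequality, i.e., the sub-Gaussian bound. The trick I would use is the substitution $G(\lambda):=\Lambda(\lambda)-\lambda^2\sigma^2/2$: expanding $(\Lambda'(\lambda))^2=(G'(\lambda)+\lambda\sigma^2)^2$ and $2\sigma^2(\lambda\Lambda'(\lambda)-\Lambda(\lambda))=2\sigma^2\lambda G'(\lambda)-2\sigma^2 G(\lambda)+\lambda^2\sigma^4$, the cross-terms $2\lambda\sigma^2 G'(\lambda)$ and the $\lambda^2\sigma^4$ contributions cancel, collapsing the inequality to the clean form $(G'(\lambda))^2\le -2\sigma^2 G(\lambda)$. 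This forces $G(\lambda)\le 0$ everywhere, since any point with $G>0$ would make the right-hand side strictly negative while the left-hand side is nonnegative. Hence $\Lambda(\lambda)\le \lambda^2\sigma^2/2$ for $\lambda\ge 0$, and applying the same argument to the $1$-Lipschitz function $-f$ gives the bound for $\lambda<0$, establishing (2).
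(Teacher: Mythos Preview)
The paper does not supply its own proof of this statement: it is quoted verbatim as Theorem~3.1 of Bobkov--G\"otze~\cite{bobkov1999exponential} and used as a black box in Section~\ref{subsec:log-concave}. So there is no ``paper's proof'' to compare against.

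That said, your argument is correct and is in fact essentially the original Bobkov--G\"otze proof. Both directions hinge on exactly the two dualities you invoke (Kantorovich--Rubinstein and Donsker--Varadhan), and the differential-inequality step via $G(\lambda)=\Lambda(\lambda)-\lambda^2\sigma^2/2$ collapsing to $(G')^2\le -2\sigma^2 G$ is the standard computation. One minor omission worth flagging: in the direction $(1)\Rightarrow(2)$ you implicitly assume $\Lambda(\lambda)=\log\int e^{\lambda f}\,d\mu$ is finite for all $\lambda$, which is not a priori guaranteed by the transportation inequality alone. The usual fix is to first run the argument for bounded $1$-Lipschitz $f$ (where $\Lambda$ is trivially finite) and then pass to the limit, or to observe that the transportation inequality already forces exponential integrability of Lipschitz functions via Marton's argument. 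This is a routine technicality, not a gap in the idea.
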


%Our goal of this section is to prove the following transportation inequality for truncated gaussian distribution. 

%\begin{theorem}\label{thm:truncated-gaussian-transportation}
%	Let $\mu_0$ be the probability measure defined by guassian $\mathcal{N}(0,\sigma^2)$ truncated outside $[-\tau,\tau]$, that is, $d\mu_0 = p_0'(x)$, and similarly $\mu_1$ is the probability measure defined by guassian $\mathcal{N}(\delta,\sigma^2)$ truncated outside $[-\tau,\tau]$, that is,  $d\mu_1 = p_1'(x)$. Let $\mu = (\mu_0+\mu_1)/2$. If $\delta < \sigma$, then $\mu$ satisfies the transportation inequality 
%	$$w_1(\nu,\mu)^2 \le 4\sigma^2 \KL(\nu\|\mu)$$ 
%\end{theorem}

%We are going to use the following characterization of transportation inequality 

Using Theorem~\ref{thm:Bobkov-Gotze}, in order to prove Theorem~\ref{thm:log-concave-transportation}, it suffices to prove the concentration of measure for $f(X)$ when $X\sim \mu$, and $f$ is 1-Lipschitz. Although one might prove $f(X)$ is subgaussian directly by definition, we use the log-Sobolev inequality to get around the tedious calculation. We begin by defining the entropy of a nonnegative random variable.  

\begin{definition}
	The entropy of the a nonnegative random variable $Z$ is defined as 
	\begin{equation}
	\ent[Z] : =  \Exp[Z\log Z] - \Exp[Z]\log \Exp[Z]
	\end{equation}
\end{definition}

Entropy is very useful for proving concentration of measure. As illustrated in the following lemma, to prove $X$ is subgaussian we only need to bound $\ent[e^{\lambda X}]$ by $\Exp[e^{\lambda X}]$.
\begin{lemma}[Herbst, c.f.~\cite{Ledoux01}]\label{lem:herbst}
	Suppose that for some random variable $X$, we have 
	\begin{equation}
	\ent[e^{\lambda X}]\le \frac{\lambda^2\sigma^2}{2}\Exp[e^{\lambda X}], \quad \quad \textrm{ for all $\lambda \ge 0$}\label{eqn:entropy-exp}
	\end{equation}
	Then 
	$$\psi(\lambda) := \log \Exp[e^{\lambda(X-\Exp X)}] \le \frac{\lambda^2 \sigma^2}{2}, \quad \quad \textrm{ for all $\lambda \ge 0$}$$
	and as an immediate consequences, 
	$X$ is a $\sigma^2$-subgaussian random variable. 
\end{lemma}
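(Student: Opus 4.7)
My plan is to carry out the classical Herbst argument: turn the entropy bound into a differential inequality for the moment generating function, integrate it, and then deduce subgaussianity in the standard way.

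First I would introduce the shorthand $F(\lambda) = \Exp[e^{\lambda X}]$ and rewrite the entropy in closed form. By definition,
\[
\ent[e^{\lambda X}] = \Exp[\lambda X e^{\lambda X}] - F(\lambda)\log F(\lambda) = \lambda F'(\lambda) - F(\lambda)\log F(\lambda),
\]
assuming the usual interchange of derivative and expectation (which is justified in the region where $F$ is finite). Hypothesis~\eqref{eqn:entropy-exp} then becomes
\[
\lambda F'(\lambda) - F(\lambda)\log F(\lambda) \le \tfrac{\lambda^2\sigma^2}{2} F(\lambda).
\]
Dividing by $\lambda^2 F(\lambda) > 0$ for $\lambda > 0$ yields
\[
\frac{F'(\lambda)}{\lambda F(\lambda)} - \frac{\log F(\lambda)}{\lambda^2} \le \tfrac{\sigma^2}{2}.
\]

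The key observation is that the left-hand side is exactly $\frac{d}{d\lambda}\!\left(\frac{\log F(\lambda)}{\lambda}\right)$. So the entropy inequality is equivalent to a one-line differential inequality $\frac{d}{d\lambda}\!\left(\frac{\log F(\lambda)}{\lambda}\right) \le \frac{\sigma^2}{2}$. I would then integrate from $0$ to $\lambda$, using the boundary value $\lim_{\lambda\to 0^+}\frac{\log F(\lambda)}{\lambda} = \Exp[X]$ (which follows from L'H\^opital's rule, since $F(0)=1$ and $F'(0)=\Exp[X]$). This gives $\frac{\log F(\lambda)}{\lambda} - \Exp[X] \le \frac{\sigma^2 \lambda}{2}$, i.e.\ $\psi(\lambda) = \log F(\lambda) - \lambda \Exp[X] \le \frac{\sigma^2\lambda^2}{2}$, which is the first claim. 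For $\lambda = 0$ the bound is trivial.

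The subgaussian conclusion then follows by a standard Chernoff argument: for any $t > 0$ and $\lambda > 0$, Markov's inequality gives
\[
\Pr[X - \Exp X \ge t] \le e^{-\lambda t}\Exp[e^{\lambda(X-\Exp X)}] \le \exp\bigl(\tfrac{\sigma^2\lambda^2}{2} - \lambda t\bigr),
\]
and optimizing at $\lambda = t/\sigma^2$ yields the subgaussian tail $\Pr[X - \Exp X \ge t] \le e^{-t^2/(2\sigma^2)}$; the left-tail bound follows symmetrically from a mirrored version of the argument (assuming the hypothesis holds for $\lambda \le 0$ as well, which is the standard formulation for full subgaussianity). There is no real obstacle here: the only delicate point is the boundary computation $\frac{\log F(\lambda)}{\lambda}\to \Exp[X]$ as $\lambda\downarrow 0$, and the identification of the derivative $\frac{d}{d\lambda}(\log F/\lambda)$ with the quantity that the hypothesis controls. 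Once those are in place, the proof is essentially a one-line ODE integration.
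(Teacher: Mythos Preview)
Your proposal is correct and is exactly the classical Herbst argument. The paper does not actually supply a proof of this lemma; it is stated with a citation to Ledoux, and your write-up is precisely the standard proof one would find there: rewrite $\ent[e^{\lambda X}]$ as $\lambda F'(\lambda)-F(\lambda)\log F(\lambda)$, recognize the resulting inequality as $\frac{d}{d\lambda}\bigl(\tfrac{\log F(\lambda)}{\lambda}\bigr)\le \sigma^2/2$, integrate using the boundary value $\lim_{\lambda\downarrow 0}\tfrac{\log F(\lambda)}{\lambda}=\Exp[X]$, and finish with a Chernoff bound. One small remark: the lemma as stated only assumes the entropy bound for $\lambda\ge 0$, so strictly speaking you obtain only the upper-tail subgaussian bound (your parenthetical about needing the hypothesis for $\lambda\le 0$ to get the left tail is accurate); in the paper's usage this is immaterial since the MGF bound $\psi(\lambda)\le \sigma^2\lambda^2/2$ for $\lambda\ge 0$ is all that is needed downstream.
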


Therefore by Theorem~\ref{thm:Bobkov-Gotze} and Lemma~\ref{lem:herbst}, in order to prove transportation inequality, it suffices to to upper bound $\ent_{\mu}[e^{\lambda f}]$ by $\Exp[e^{\lambda f}]$. It turns out that as long as the measure $\mu$ is log-concave, we get the concentration inequality for $f(X)$ with 1-Lipschitz function $f$.  

\begin{theorem}[Theorem 5.2 of~\cite{Ledoux01}]\label{thm:log-concave-concentration}
	Let $d\mu = e^{-U}dx$ where for some $c >0$, $\nabla^2 U(x)\ge c$ for all $x\in \mathbb{R}$. Then for all smooth function $f$ on $\mathbb{R}$, 
	$$\ent_{\mu}(f^2)\le \frac{2}{c}\int |\nabla f|^2d\mu$$
\end{theorem}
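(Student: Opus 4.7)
The plan is to prove this log--Sobolev inequality via the classical Bakry--\'Emery semigroup method. First I would introduce the Langevin generator $L = \Delta - \nabla U \cdot \nabla$ associated to $\mu$, together with its semigroup $P_t = e^{tL}$; integration by parts shows that $\mu$ is invariant, i.e.\ $\int L h\, d\mu = 0$ for smooth $h$. The carr\'e du champ is $\Gamma(h) = |\nabla h|^2$, and a direct computation (Bochner's formula) yields
$$\Gamma_2(h) := \tfrac{1}{2} L\Gamma(h) - \Gamma(h, Lh) = \|\nabla^2 h\|_{\mathrm{HS}}^2 + \langle \nabla h, (\nabla^2 U)\nabla h\rangle \ge c\, |\nabla h|^2,$$
where the last inequality is the only place the hypothesis $\nabla^2 U \ge c$ enters. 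This is the $CD(c,\infty)$ curvature--dimension bound, and it is the entire conceptual content of the proof.

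Next I would express the entropy as a time integral of Fisher information. Set $g = f^2$ and $\phi(t) = \int (P_t g)\log (P_t g)\, d\mu$. Ergodicity of $P_t$ toward $\mu$ gives $\phi(\infty) = \bigl(\int g\, d\mu\bigr)\log\bigl(\int g\, d\mu\bigr)$, so $\ent_\mu(f^2) = \phi(0)-\phi(\infty) = -\int_0^\infty \phi'(t)\, dt$. The chain rule together with integration by parts gives
$$\phi'(t) = \int L(P_t g)\,\log(P_t g)\, d\mu = -\int \frac{|\nabla P_t g|^2}{P_t g}\, d\mu =: -I(P_t g),$$
the Fisher information of $P_t g$ with respect to $\mu$.

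The heart of the argument is then the exponential decay $I(P_t g) \le e^{-2ct} I(g)$ under $CD(c,\infty)$. I would establish this by differentiating $I(P_t g)$ in $t$ (using $\partial_t P_t g = L P_t g$), reducing the resulting expression, after some calculus, to $-2\int (P_t g)\,\Gamma_2(\log P_t g)\, d\mu$, and then applying the $\Gamma_2$ bound with $h = \log P_t g$ to obtain $\tfrac{d}{dt}I(P_t g) \le -2c\, I(P_t g)$. Gr\"onwall's lemma delivers the exponential decay, and integrating gives
$$\ent_\mu(f^2) = \int_0^\infty I(P_t g)\, dt \le \frac{I(g)}{2c} = \frac{1}{2c}\int \frac{|\nabla f^2|^2}{f^2}\, d\mu = \frac{2}{c}\int |\nabla f|^2\, d\mu,$$
using $|\nabla f^2|^2 = 4 f^2 |\nabla f|^2$.

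The main obstacle will be making the semigroup manipulations rigorous: verifying ergodicity of $P_t$, justifying the interchanges of $\partial_t$ and $\int$ in computing $\phi'(t)$ and $\tfrac{d}{dt} I(P_t g)$, and handling the singularity of $\log$ near zero. Under the strong log-concavity hypothesis these are all standard but require approximation/truncation arguments (for instance, replacing $g$ by $g+\varepsilon$, mollifying $U$, and bounding $f$ on compact sets) to secure the integrability needed to differentiate under the integral sign. None of these steps is conceptually surprising, but together they constitute the technical bulk of a careful write-up.
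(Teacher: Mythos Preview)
Your proposal is correct: this is the standard Bakry--\'Emery semigroup argument, and it is essentially the proof given in Ledoux's monograph~\cite{Ledoux01}, from which the theorem is quoted. Note that the present paper does not prove this statement at all---it is cited as Theorem~5.2 of~\cite{Ledoux01} and used as a black box---so there is no ``paper's own proof'' to compare against; your write-up would simply be supplying the omitted reference proof.
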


As a direct corollary, we obtain inequality \eqref{eqn:entropy-exp} that we are interested in. 

\begin{corollary}\label{cor:log-concave-concentration-for-herbst}
	Let $d\mu = e^{-U}dx$ where for some $c >0$, $\nabla^2 U(x)\ge c$ for all $x\in \mathbb{R}$. Then for all 1-Lipschitz and smooth function $f$ on $\mathbb{R}$, and any $\lambda \ge 0$, we have 
	$$\ent_{\mu}(e^{\lambda f})\le \frac{\lambda^2}{2c}\Exp[e^{\lambda f}]$$
	%As an immediate consequence of Lemma~\ref{lem:herbst}, $X\sim \mu$ is a $1/c$ subgaussian random variable. 
\end{corollary}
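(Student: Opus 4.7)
The plan is to deduce the corollary from Theorem~\ref{thm:log-concave-concentration} by the standard trick of substituting $g = e^{\lambda f/2}$ so that $g^{2} = e^{\lambda f}$. Concretely, I would apply Theorem~\ref{thm:log-concave-concentration} to the smooth function $g(x) = \exp(\lambda f(x)/2)$, which yields
\begin{equation*}
\ent_{\mu}(e^{\lambda f}) \;=\; \ent_{\mu}(g^{2}) \;\le\; \frac{2}{c}\int |\nabla g|^{2}\, d\mu.
\end{equation*}

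The next step is to compute $\nabla g$ by the chain rule: $\nabla g = \tfrac{\lambda}{2}\,e^{\lambda f/2}\,\nabla f$, so that $|\nabla g|^{2} = \tfrac{\lambda^{2}}{4}\,e^{\lambda f}\,|\nabla f|^{2}$. Since $f$ is $1$-Lipschitz and smooth, $|\nabla f|\le 1$ pointwise, and thus $|\nabla g|^{2} \le \tfrac{\lambda^{2}}{4}\,e^{\lambda f}$. Plugging this into the previous inequality gives
\begin{equation*}
\ent_{\mu}(e^{\lambda f}) \;\le\; \frac{2}{c}\cdot\frac{\lambda^{2}}{4}\int e^{\lambda f}\,d\mu \;=\; \frac{\lambda^{2}}{2c}\,\Exp[e^{\lambda f}],
\end{equation*}
which is exactly the claimed bound.

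There is no real obstacle: the whole proof is a one-line substitution into the log-Sobolev inequality of Theorem~\ref{thm:log-concave-concentration}, combined with the Lipschitz bound on $|\nabla f|$. The only thing worth double-checking is that the application of Theorem~\ref{thm:log-concave-concentration} is legitimate with $g = e^{\lambda f/2}$ (smoothness and integrability of $g$ under $\mu$), which follows from the assumption that $f$ is smooth and $1$-Lipschitz together with the sub-Gaussian tails of $\mu$ implied by $\nabla^{2}U \ge c > 0$.
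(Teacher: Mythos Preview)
Your proposal is correct and is essentially identical to the paper's own proof: the paper also applies Theorem~\ref{thm:log-concave-concentration} to $e^{\lambda f/2}$, computes $|\nabla e^{\lambda f/2}| = \tfrac{\lambda}{2}e^{\lambda f/2}|\nabla f|$, and uses the $1$-Lipschitz bound to conclude.
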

%\begin{theorem}[Theorem 5.2 of~\cite{Ledoux01}]\label{thm:log-concave-concentration}
%	Let $d\mu = e^{-U}dx$ where for some $c >0$, $\nabla^2 U(x)\succeq cI$ for all $x\in \mathbb{R}^n$. Then for all smooth function $f$ on $\mathbb{R}^n$, 
%	$$\ent(f^2)\le \frac{2}{c}\int |\nabla f|^2d\mu$$
%\end{theorem}
%\Tnote{We need a non-continuous version for this. Our $\mu$ is certainly very log-concave but the log of density is not even continuous}

\begin{proof}[Proof of Corollary~\ref{cor:log-concave-concentration-for-herbst}]
	Applying directly Theorem~\ref{thm:log-concave-concentration} on $e^{\lambda f/2}$ we obtain, 
	%	$$\ent_{\mu}(e^{\lambda f})\le \frac{2}{c}\int |\nabla f|^2d\mu$$
			$$\ent_{\mu}[e^{\lambda f}] \le \frac{2}{c} \int |\nabla e^{\lambda f/2}|^2d\mu =  \frac{2}{c} \int |e^{\lambda f/2} \cdot \lambda \nabla f/2|^2d\mu$$
				Note that if $f$ is 1-Lipschitz, we have $|\nabla e^{\lambda f/2}|\le |\frac{1}{2}\lambda e^{\lambda f/2}|$, and therefore
				$$\ent_{\mu}[e^{\lambda f}] \le \frac{\lambda^2}{2c} \int  e^{\lambda f}d\mu = \frac{\lambda^2}{2c} \E_{\mu}[e^{\lambda f}]$$ % =  \sigma^2 \lambda^2 \Exp_{\mu}[e^{\lambda f}]$$
				
\end{proof}

The distributions that we are interested has continuous density function on a finite support and 0 elsewhere. Therefore we need to use a non-continuous version of the Corollary above to be rigorous. 

\begin{corollary}\label{cor:non-continuous-concave-concentration}
	Let $S = [a,b]$ be a finite interval in $\mathbb{R}$. Let $d\mu = e^{-U}dx$ for $x \in S$ and $d\mu = 0$ for $x \not\in S$. Suppose for some $c >0$, we have $\nabla^2 U(x)\ge c$ for all $x\in S$.Then the conclusion of Corollary~\ref{cor:log-concave-concentration-for-herbst} is still true. %particularly, $X\sim \mu$  is a $1/c$ subgaussian random variable. %Then for all 1-Lipschitz and smooth function $f$ on $\mathbb{R}^n$, and any $\lambda \ge 0$, we have 
	%$$\ent_{\mu}(e^{\lambda f})\le \frac{\lambda^2}{2c}\Exp[e^{\lambda f}]$$
	%As an immediate consequence of Lemma~\ref{lem:herbst}, $X\sim \mu$ is a $1/c$ subgaussian random variable.
	%$$\ent_{\mu}[e^{\lambda f}] \le 4\sigma^2 \int |\nabla e^{\lambda f/2}|^2d\mu$$
	
\end{corollary}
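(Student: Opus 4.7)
The plan is to reduce Corollary~\ref{cor:non-continuous-concave-concentration} to the already-established Corollary~\ref{cor:log-concave-concentration-for-herbst} by approximating the truncated measure $\mu$ on $[a,b]$ by a sequence of globally-defined log-concave measures on $\mathbb{R}$. Concretely, I would construct $C^2$ convex potentials $U_n : \mathbb{R} \to \mathbb{R}$ with $U_n''(x) \ge c$ for every $x \in \mathbb{R}$, such that $U_n \equiv U$ on $[a,b]$ and $U_n(x) \to +\infty$ pointwise for $x \notin [a,b]$, so that the normalized probability measures $\mu_n \propto e^{-U_n}$ converge weakly to $\mu$ as $n \to \infty$.

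A concrete recipe: set $U_n = U$ on $[a,b]$, and for $x > b$ take the tangent parabola $U(b) + U'(b)(x-b) + \tfrac{n}{2}(x-b)^2$, symmetrically for $x < a$. This produces a $C^1$ convex function whose second derivative is everywhere at least $\min(c,n) = c$ once $n \ge c$, with possible upward jumps at the endpoints $a$ and $b$. To meet the strict $C^2$ hypothesis of Corollary~\ref{cor:log-concave-concentration-for-herbst}, I would mollify $U_n$ in shrinking windows of width $\epsilon_n \to 0$ around $a$ and $b$; because $U_n''$ viewed as a Radon measure is already bounded below by $c\,dx$, convolution with a non-negative $C^\infty$ kernel of unit mass preserves that lower bound pointwise on the resulting smooth function. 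Applying Corollary~\ref{cor:log-concave-concentration-for-herbst} to each smoothed $U_n$ gives
$$
\ent_{\mu_n}(e^{\lambda f}) \le \frac{\lambda^2}{2c}\, \E_{\mu_n}[e^{\lambda f}]
$$
for every smooth $1$-Lipschitz $f$ and every $\lambda \ge 0$.

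The last step is to pass to the limit $n \to \infty$. The quadratic tails outside $[a,b]$ give $\mu_n$ Gaussian-type concentration, so $\mu_n$ converges weakly to $\mu$ with the total mass outside $[a,b]$ decaying to zero. Since $f$ is $1$-Lipschitz, $|f(x)| \le |f(a)| + |x-a|$, and both $e^{\lambda f}$ and $e^{\lambda f}\log e^{\lambda f}$ are integrable against $\mu_n$ uniformly in $n$ thanks to those tails. Dominated convergence then shows that $\E_{\mu_n}[e^{\lambda f}]$ and $\E_{\mu_n}[e^{\lambda f}\log e^{\lambda f}]$ converge to their $\mu$-counterparts, so the entropies converge as well, and the inequality is preserved in the limit. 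The main obstacle I anticipate is the technical care needed in the extension-plus-mollification step: the Hessian of the piecewise-quadratic extension has upward jumps of size $n - U''(a)$ and $n - U''(b)$ at the endpoints, and a casual mollification could in principle smear them into a local dip below $c$; the Radon-measure viewpoint above is precisely what prevents this and justifies using the existing Corollary as a black box.
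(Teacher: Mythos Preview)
Your approach is correct and essentially matches the paper's: both extend $U$ to a globally $c$-strongly-convex potential $U_n$ on $\mathbb{R}$, apply the full-line log-Sobolev/entropy inequality to $\mu_n \propto e^{-U_n}$, and pass to the limit as $\mu_n \to \mu$. The only technical variation is that the paper first replaces $f$ outside $[a,b]$ by a bounded extension with bounded gradient (legitimate since $\mu$ is supported on $[a,b]$), which lets it pass to the limit via TV convergence rather than your Gaussian-tail uniform-integrability argument.
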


\begin{proof}[Proof of Corollary~\ref{cor:non-continuous-concave-concentration}]
	We first extend Theorem~\ref{thm:log-concave-concentration} to the finite support case. Let $g$ be an extension of $f$ to $\mathbb{R}$, such that $g$ is nonnegative and bounded above by some constant $C$, and $\nabla g$ is also bounded by $C$. Let $U_n$ be a series of extensions of $U$ to $\mathbb{R}$ such that the following happens: a) $U_n$ is twice-differentiable b) $\nabla^2 U_n(x) \ge c$ for all $x\in \mathbb{R}$ c) $\mu_n = e^{-U_n}dx$ approaches to $\mu$ in TV norm as $n$ tends to infinity. (The following choice will work for example, $U_n(x) = U(x) + \mathbf{1}_{x > b} \cdot \left(\nabla U(b)(x-b) + \nabla^2 U(b)(x-b)^2 + \exp(n(x-b)^4)\right) $ $+ \mathbf{1}_{x < a} \cdot \left(\nabla U(b)(x-a) + \nabla^2 U(b)(x-a)^2 + \exp(n(x-a)^4)\right) $. )
	
	Since $g$ and $\nabla g$ are bounded, we have that $|\Exp_{\mu_n}(g^2) - \Exp_{\mu}(g^2)| = \int g^2(d\mu_n-d\mu) \le C^2\TV{\mu_n-\mu} \rightarrow 0$ as $n$ tends to infinity. Similarly we have that $\ent_{\mu_n}(g^2) \rightarrow \ent_{\mu}(g^2)$ and $\Exp_{\mu_n}[|\nabla g|^2] \rightarrow \Exp_{\mu}[|\nabla g|^2]$. Note that under $\mu$, $g$ agrees with $f$ and therefore we have that $\ent_{\mu_n}(g^2) \rightarrow \ent_{\mu}(f^2)$ and $\Exp_{\mu_n}[|\nabla g|^2] \rightarrow \Exp_{\mu}[|\nabla f|^2]$. 
	
	Also note that $\mu_n$ satisfies the condition of Theorem~\ref{thm:log-concave-concentration}, therefore 
		$$\ent_{\mu_n}(g^2)\le \frac{2}{c}\int |\nabla g|^2d\mu_n$$
		
	and the desired result follows by taking $n$ to infinity. 	%$$\ent_{\mu}(f^2)\le \frac{2}{c}\int |\nabla f|^2d\mu$$
\end{proof}

Finally we provide the proof of Lemma~\ref{lem:checking-log-concave}, which is obtained by direct calculation of the second derivatives of $u(x)$. 

%Now we are ready to prove theorem~\ref{thm:truncated-gaussian-transportation}. 

%\begin{lemma}\label{lem:checking-log-concave}
%	Suppose distribution $\mu_0$ and $\mu_1$ has supports on $[a,b]$ with $d\mu_0 = \exp(-u_0(x))dx$ and $d\mu_1 = \exp(-u_1(x))dx$. Suppose $\nabla^2 u_0(x) \ge c$, and $\nabla^2 u_0(x) \ge c$, and $|\nabla u_0(x) - \nabla u_1(x)|\le \sqrt{2/c}$ then 
%	  then $\mu = \frac{1}{2}(\mu_0+\mu_1)$ is $\sqrt{2c}$ log-concave and therefore it satisfies transportation inequality \eqref{eqn:transportation} with $\alpha = \frac{1}{c}$, that is, for any $\nu$, 
%	  $$w_1^2(\nu,\mu)\le \frac{1}{c}\cdot \KL(\nu\|\mu)$$
%\end{lemma}

\subsection{SDPI for truncated Gaussian}\label{subsec:truncated-gaussian}
%\begin{theorem}\label{thm:truncated-guassian-dpi-info}
%	Let $\mu_0'$ and $\mu_1'$ be the distributions obtained by truncating $\mu_0$ and $\mu_1$ on support $[-\tau,\tau]$ for some $\tau >0$. If $\delta \le \sigma$, then $$\beta(\mu_0',\mu_1') \le \delta^2/\sigma^2$$
%\end{theorem}

%\Tnote{Interestingly, there is no requirement for $\tau$. Should double check. }

We first check that the Lipschitz constants for $f_v(x) = \Pr[V=0\mid X=x] $ as defined in Lemma~\ref{lem:specical_case_Raginsky}. The proof of the following lemma is deferred to Section~\ref{subsec:lipschitz}. 
\begin{lemma}\label{lem:lipschitz}
	When $X$ is generated by $X\sim \mu_v$ conditioned on $V=v$, let $f_v(x) = \Pr[V=0\mid X=x] $, we have that $f_v(x)$ is $\mu/4\sigma^2$-Lipschitz for any $v\in \{0,1\}$. 
\end{lemma}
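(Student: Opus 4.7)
The plan is to derive $f_v$ in closed form using Bayes' rule under the uniform prior on $V$, then differentiate and apply the AM-GM inequality to bound $|f_v'|$ uniformly by $\delta/(4\sigma^2)$.

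First, I would note that since $f_0(x) + f_1(x) = 1$ for every $x$, the two functions have identical Lipschitz constants, so it suffices to bound $\|f_0'\|_\infty$. Under the uniform prior, Bayes' rule gives
\[
f_0(x) \;=\; \frac{\mu_0'(x)}{\mu_0'(x) + \mu_1'(x)}.
\]
Substituting the densities of the truncated Gaussians $\mu_0' = Z_0^{-1} e^{-x^2/(2\sigma^2)} \mathbf{1}_{[-\tau,\tau]}$ and $\mu_1' = Z_1^{-1} e^{-(x-\delta)^2/(2\sigma^2)} \mathbf{1}_{[-\tau,\tau]}$, the common indicator cancels, and dividing numerator and denominator by the first exponential yields
\[
f_0(x) \;=\; \frac{1}{1 + g(x)}, \qquad g(x) \;:=\; \frac{Z_0}{Z_1} \exp\!\left(\frac{2x\delta - \delta^2}{2\sigma^2}\right),
\]
valid for $x \in [-\tau,\tau]$ (and $f_0$ is not defined, or trivially constant, outside this interval).

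Next, I would differentiate: since $g'(x) = (\delta/\sigma^2)\, g(x)$, the chain rule gives
\[
f_0'(x) \;=\; -\frac{g'(x)}{(1+g(x))^2} \;=\; -\frac{\delta}{\sigma^2}\cdot \frac{g(x)}{(1+g(x))^2}.
\]
The AM-GM inequality $(1+g)^2 \ge 4g$ then yields $g(x)/(1+g(x))^2 \le 1/4$ for every $x$, hence
\[
|f_0'(x)| \;\le\; \frac{\delta}{4\sigma^2},
\]
which is the desired Lipschitz bound. The same bound applies to $f_1 = 1 - f_0$.

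There is no real obstacle here; the only place one might worry is the effect of truncation, but the truncation only restricts the domain of $f_v$ to $[-\tau,\tau]$ and rescales the two densities by constants $Z_0, Z_1$, which merely changes the multiplicative constant $Z_0/Z_1$ in $g(x)$ without affecting the AM-GM step. The Lipschitz bound is therefore insensitive to $\tau$, $Z_0$, and $Z_1$.
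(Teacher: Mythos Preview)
Your proposal is correct and follows essentially the same route as the paper: write $f_0$ via Bayes' rule as $1/(1+g(x))$ with $g(x)$ an exponential in $x$, differentiate, and apply AM--GM in the form $(1+g)^2 \ge 4g$ to obtain the uniform bound $\delta/(4\sigma^2)$. The only (minor) streamlining is that you invoke $f_0+f_1=1$ at the outset to avoid repeating the calculation for $f_1$, whereas the paper treats $f_0$ and $f_1$ separately.
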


We first prove Theorem~\ref{thm:truncated-guassian-dpi-info} using Lemma~\ref{lem:checking-log-concave},  Theorem~\ref{thm:log-concave-transportation} and Lemma~\ref{lem:specical_case_Raginsky}. 

\begin{proof}[Proof of Theorem~\ref{thm:truncated-guassian-dpi-info}]
	Note that by definition on support $[-\tau,\tau]$,  $d \mu_0' = \gamma_0\exp(-u_0(x))dx$, and $\d m_0' = \gamma_1\exp(-u_0(x))dx$ with $u_0(x) = -\frac{x^2}{2\sigma^2}$ and $u_1(x) = -\frac{(x-\delta)^2}{2\sigma^2}$. By Lemma~\ref{lem:checking-log-concave}, we have that $\mu = (\mu_0'+\mu_1')/2$ is $1/\sigma^2$-log concave, and therefore by Theorem~\ref{thm:log-concave-transportation}, we have 
		\begin{equation*}
		w_1(\nu,\mu)^2 \le 2\sigma^2\cdot \KL(\nu\|\mu). %\label{eqn:transportation-theorem}
		\end{equation*}		By Lemma~\ref{lem:lipschitz}, we have that $f_v$'s are $\delta/4\sigma^2$-Lipschitz and therefore by Lemma~\ref{lem:specical_case_Raginsky}, we have that \begin{equation*}
		\beta(\mu_0,\mu_1)  \le \delta^2/\sigma^2
		\end{equation*}	%We want to check the Lipschitz-ness of $f_v$ so that we can bound the equation above by the Wasserstein distance between $\nu$ and $\mu$, and then we can apply the transportation inequality on it. 
	%First of all, from a direct calculation we have that $\nabla 
\end{proof}
%\begin{corollary}\label{cor:truncated-gaussian-dpi-many-samples}
%	Let $\tilde{\mu}_0$ and $\tilde{\mu}_1$ be the distributions over $\mathbb{R}^n$ that are obtained by truncating $\mu_0^n$ and $\mu_1^n$ $(\mu_0 = \mathcal{N}(0,\sigma^2)$ and $\mu_1 = \mathcal{N}(\delta,\sigma^2))$ outside the ball $\mathcal{B} = \{x\in \mathbb{R}^n: |x_1+\dots +x_n|\le \tau\}$, then when $\sqrt{n}\delta \le \sigma$, we have
%	$$\beta(\tilde{\mu}_0,\tilde{\mu}_1) \le n\delta^2/\sigma^2$$
%\end{corollary}
Then we present the proof of Corollary~\ref{cor:truncated-gaussian-dpi-many-samples}, which relies on the following observation, whose proof is given in Section~\ref{subsec:suff_statistics}.

\begin{lemma}{\label{suff_statistics}}
	Suppose $V\rightarrow (X_1,\dots,X_n)\rightarrow \Pi$ forms a Markov Chain, where conditioned on $V=v$, $(X_1,\dots,X_n)$ are distributed according to $\tilde{\mu}_v$. Then $V\rightarrow X_1+\dots+X_n\rightarrow (X_1,\dots,X_n)\rightarrow \Pi$ also forms a Markov Chain. 
\end{lemma}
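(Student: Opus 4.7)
\medskip

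\noindent\textbf{Proof proposal for Lemma~\ref{suff_statistics}.}

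The plan is to show that $S := X_1+\cdots+X_n$ is a sufficient statistic for $V$ under the conditional law $(X_1,\dots,X_n)\mid V=v \sim \tilde\mu_v$, which immediately gives the desired Markov chain. The classical Fisher--Neyman factorization theorem states that $S$ is sufficient for $V$ iff the joint density of $(X_1,\dots,X_n)$ given $V=v$ factors as $g(x_1,\dots,x_n)\cdot h_v(S(x_1,\dots,x_n))$, where $g$ does not depend on $v$.

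First, I would write out the density of $\tilde\mu_v$ explicitly. By definition $\tilde\mu_v$ is the restriction of $\mu_v^n = \mathcal{N}(v\delta,\sigma^2)^n$ to the ball $\mathcal{B}=\{x\in\mathbb{R}^n:|x_1+\cdots+x_n|\le\tau\}$, so
\begin{equation*}
\tilde\mu_v(x_1,\dots,x_n) \;=\; \frac{1}{Z_v}\,(2\pi\sigma^2)^{-n/2}\exp\!\left(-\frac{1}{2\sigma^2}\sum_{i=1}^n (x_i-v\delta)^2\right)\mathbf{1}[\,|S(x)|\le\tau\,],
\end{equation*}
where $Z_v$ is the normalizing constant $\Pr_{\mu_v^n}[X\in\mathcal{B}]$. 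Expanding $(x_i-v\delta)^2 = x_i^2 - 2v\delta x_i + v^2\delta^2$ and summing over $i$ produces a cross term $-2v\delta\sum_i x_i = -2v\delta\, S(x)$, so the exponent splits cleanly into a part depending only on $x_1,\dots,x_n$ through $\sum_i x_i^2$ (no $v$) and a part depending on $v$ only through $S(x)$.

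Concretely, set
\begin{equation*}
g(x_1,\dots,x_n) \;=\; (2\pi\sigma^2)^{-n/2}\exp\!\left(-\frac{1}{2\sigma^2}\sum_{i=1}^n x_i^2\right),
\end{equation*}
\begin{equation*}
h_v(s)\;=\;\frac{1}{Z_v}\exp\!\left(\frac{v\delta\, s}{\sigma^2}-\frac{n v^2\delta^2}{2\sigma^2}\right)\mathbf{1}[\,|s|\le\tau\,].
\end{equation*}
Then $\tilde\mu_v(x) = g(x)\cdot h_v(S(x))$. Since $g$ does not depend on $v$, Fisher--Neyman yields that $S$ is sufficient for $V$, i.e.\ the conditional distribution of $(X_1,\dots,X_n)$ given $S=s$ is the same under every $v$. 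Equivalently, $V\to S\to (X_1,\dots,X_n)$ is a Markov chain. Concatenating with the hypothesis $V\to (X_1,\dots,X_n)\to\Pi$ (and noting $\Pi$ depends on $V,S$ only through $(X_1,\dots,X_n)$) gives the desired chain $V\to S\to (X_1,\dots,X_n)\to \Pi$.

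There is no real obstacle here: the indicator $\mathbf{1}[|S|\le \tau]$ is a function of $S$ alone and therefore harmlessly absorbs into $h_v$, so the truncation does not spoil the factorization. The only mild bookkeeping is that the normalizer $Z_v$ depends on $v$, but this is also a function of $v$ alone and goes inside $h_v$.
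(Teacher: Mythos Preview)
Your proposal is correct and is essentially the same argument as the paper's: both establish that $S=\sum_i X_i$ is a sufficient statistic for $V$ by exhibiting the exponential-family factorization of the (truncated) Gaussian density, so that the conditional law of $(X_1,\dots,X_n)$ given $S$ does not depend on $v$. The paper carries this out by directly computing the conditional density $p(x_1,\dots,x_n\mid l,v)$ and simplifying it to $C\,e^{(l^2 - n\sum_i x_i^2)/(2n\sigma^2)}$, whereas you invoke Fisher--Neyman by name and write the factorization $\tilde\mu_v(x)=g(x)\,h_v(S(x))$; your version is slightly tidier in that it explicitly absorbs the truncation indicator $\mathbf{1}[|S|\le\tau]$ and the normalizer $Z_v$ into $h_v$, which the paper's computation leaves implicit.
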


Now we are ready to prove Corollary~\ref{cor:truncated-gaussian-dpi-many-samples}. 
\begin{proof}{(Of corollary \ref{cor:truncated-gaussian-dpi-many-samples})}
	Let us restate what we want to prove. Suppose $V \sim B_{1/2}$, $(X_1,\dots,X_n) | V = 0 \sim \tilde{\mu}_0$ and $(X_1,\dots,X_n) | V = 1 \sim \tilde{\mu}_1$ and $V \rightarrow (X_1,\dots,X_n) \rightarrow \Pi$ be a Markov chain. Then 
\begin{align*}
I(\Pi;V) \le \frac{n\delta^2}{\sigma^2} I(\Pi; X_1,\dots,X_n)
\end{align*}
By lemma \ref{suff_statistics}, $V\rightarrow X_1+\dots+X_n\rightarrow (X_1,\dots,X_n)\rightarrow \Pi$ also forms a Markov chain. Then
\begin{align*}
I(\Pi;V) \le \frac{n\delta^2}{\sigma^2} I(\Pi; X_1+\dots+X_n) \le \frac{n\delta^2}{\sigma^2} I(\Pi; X_1,\dots,X_n)
\end{align*}
where the first inequality follows from Theorem~\ref{thm:truncated-guassian-dpi-info} and the fact that the distribution of $X_1+\dots+X_n | V = 0$ is the Gaussian $\mathcal{N}(0,n \sigma^2)$ truncated to $[-\tau, \tau]$ and the distribution of $X_1+\dots+X_n | V = 1$ is the Gaussian $\mathcal{N}(n \delta,n \sigma^2)$ truncated to $[-\tau, \tau]$. The second inequality follows from data processing. 
\end{proof}

\paragraph{Acknowledgments.} 
We thank Yuchen Zhang for suggesting to us the version of of sparse Gaussian mean estimation with signal strength assumption. We are indebted to Ramon van Handel for helping us for proving transportation inequality for truncated Gaussian distribution. 
%\paragraph{Acknowledgments.} 
Mark Braverman would like to thank the support in part by an NSF CAREER award (CCF-1149888), NSF CCF-1525342, a Packard Fellowship in Science and Engineering, and the Simons Collaboration on Algorithms and Geometry. Ankit Garg would like to thank the support by a Simons Award in Theoretical Computer Science and a Siebel Scholarship. Tengy Ma would like to thank the support by a Simons Award in Theoretical Computer Science and IBM PhD Fellowship. D. Woodruff would like to thank the support from XDATA program of the Defense Advanced Research Projects Agency  (DARPA), administered through Air Force Research Laboratory FA8750-12-C-0323. 

\bibliography{ref}
\bibliographystyle{alpha}
\appendix

\section{Proofs of Results in Section~\ref{sec:gaussian-mean}}\label{sec:app:thm:multi-dpi-gaussian}

In this section, we prove Theorem~\ref{thm:multi-dpi-gaussian} and Corollary~\ref{cor:sparse-linear-regression}.

\begin{proof}[Proof of Corollary~\ref{cor:sparse-linear-regression}]
	Suppose there exists such a protocol with mean-squared loss $R$ and communication cost $C$ for sparse linear regression problem $\SLR(n,m,k,d,\sigma^2)$. We are going to use it to solve the sparse linear regression problem $\SGME(m,1,d,k,\sigma_0)$ as follows. Suppose the $i^{th}$ machine has data $X_i \sim \mathcal{N}(\theta, \sigma_0^2 I_{d\times d})$ with $\sigma_0 = \frac{\sigma}{\lambda\sqrt{n}}$. Then the machines can prepare %$A_{S_i}$ by drawing $n$ gaussian vector from $\mathcal{N}(0,I_{d\times d})$. 
	$$y_{S_i} = A_{S_i}X_i + b_{i}$$ where 
	$b_i \sim \mathcal{N}(0, \sigma^2 I - \sigma_0^2A_{S_i}A_{S_i}^T)$. Note that by the bound $\|A_{S_i}\|\le \lambda/\sqrt{n}$, we have that $\sigma^2 I - \sigma_0^2A_{S_i}A_{S_i}^T$ is positive semidefinite. Note that then $y_{S_i}$ can written in the form 
	
	$$y_{S_i} = A_{S_i} \theta + \xi_i$$
	where $\xi_i$'s are independent distributed according to $\mathcal{N}(0,\sigma^2I_{n\times n})$
	
	Then the machines call the protocol for the sparse linear regression problem with data $(y_{S_i},A_{S_i})$. Therefore we obtain a protocol that solves $\SGME(m,1,d,k,\sigma_0)$ with communication $R$ and $C$. %now we have a protocol that solves the sparse guassian mean estimation problem $\SGME(m,1,d,k,\sigma_0)$ with mean-squared loss $R$ and communication $C$.   
	Then by Theorem~\ref{thm:sparse-gaussian-mean}, we know that $$R\cdot C \ge \Omega(\sigma_0^2kd) = \Omega(\frac{\sigma^2kd}{\lambda^2 n})$$
\end{proof}

%\section{Sparse Gaussian Estimation with Signal Strength Lower Bound}\label{subsec:singalstrength}
%Our techniques can also be used to study the optimal rate-communication tradeoffs in the presence of a strong signal in the non-zero coordinates, which is sometimes assumed for sparse signals. That is, suppose the machines are promised that the mean $\theta \in \mathcal{R}^d$ is $k$-sparse and also if $\theta_i \ne 0$, then $|\theta_i| \ge \eta$, where $\eta$ is a parameter called the signal strength. %We get tight lower bounds for this case as well and the theorem and discussion is deferred to section

\section{Tight Upper Bound with One-way Communication}\label{sec:one-way}

%\begin{theorem}
%Suppose $\theta$ is know to satisfy $|\theta|_{\infty}\le \frac{\sigma}{\sqrt{n}}$.  	For any $0\le \alpha\le 1$, there exists a protocol that uses one round of communication for gaussian mean estimation problem $\GME(n,m,d,\sigma^2)$ with communication cost $C$ and mean-squared loss $R$, $$R = O\left(\frac{\sigma^2d}{\alpha mn}\right), \textrm{ and } \quad C = \alpha dm $$ 
%\end{theorem}

In this section, we describe a one-way communication protocol
achieving the tight minimal communication for Gaussian mean estimation problem $\GME(n,m,d,\sigma^2)$ with the assumption that $|\theta|_{\infty}\le \frac{\sigma}{\sqrt{n}}$. %Note that this is the most interesting regime because a) our hard instance lies in this regime b) if $|\theta_i| > \sigma/\sqrt{n}$, then algorithm might estimate $\theta_i$ using a $O(\log n)$ machines with precision $\sigma/\sqrt{n}$ w.h.p and then shift $\theta_i$ so that it lies in the regime $[-\sigma/\sqrt{n},\sigma/\sqrt{n}]$.  %We focus on subcase when $\Omega =\{\theta:|\theta|_{\infty} \le \sigma/\sqrt{n}\}$ since when $|\theta_i|$

%\Tnote{It would be good to extend relax this assumption formally}
%There are
%$m$ machines where machine $i$ receives $X_i^{(1)},\ldots, X_i^{(n)} \sim \mathcal{N}(\theta, \sigma^2)$ for an unknown mean 
%$\theta \in [-\sigma/\sqrt{n}, \sigma/\sqrt{n}]$. 
%\Tnote{Can we relax this to$[-C\sigma/\sqrt{n},C\sigma/\sqrt{n}] $ for constant $C$? I think with the relaxation people will buy this result more}
%An explanation for
%the choice of the range of $\theta$ is that any single machine without
%any communication can already estimate $\theta$ with mean squared loss
%$O(\sigma^2/n)$. Here we give a simultaneous protocol achieving $m$ bits of communication
%to approximate $\theta$ with mean squared loss $O(\sigma^2/(mn))$ . That is,
%each machine simultaneously writes a single bit to the blackboard, and
%the referee/first machine
%outputs the answer.

Note that for the design of protocol, it suffices to consider a one-dimensional problem.  Protocol~\ref{pro:one-way-alg} solves the one-dimensional Gaussian mean estimation problem, with 
each machine sending exactly $1$ bit, and therefore the total communication is $m$ bits. To get a $d$-dimensional protocol, we just need to apply Protocol~\ref{pro:one-way-alg} to each dimension.  In order to obtain the tradeoff as stated in Theorem~\ref{thm:one-way-upper-bound}, one needs to  run Protocol~\ref{pro:one-way-alg} on the first $\alpha m$ machines, and let the other machines be idle. 
%\Tnote{Maybe we can describe the input/output of the protocol? and also use the protocol environment if possible?}
%\begin{algorithm}\label{fig:one-way-alg}\caption{Simultaneous Protocol for Gaussian mean estimation}
%
%\end{algorithm}

\begin{Protocol}
Unknown parameter $\theta \in [-\sigma/\sqrt{n},\sigma/\sqrt{n}]$\\
Inputs: Machine $i$ gets $n$ samples $(X_i^{(1)}, \ldots, X_i^{(n)})$ where $X_i^{(j)} \sim \mathcal{N}(\theta, \sigma)$.
\begin{itemize}
\item Simultaneously, each machine $i$
\begin{enumerate}
\item Computes $X_i =
\frac{1}{\sigma \sqrt{n}}\sum_{j=1}^n X_i^{(j)}$
\item Sends $B_i$ %according to the following distribution
%\[
%B_i = 
%\begin{cases}
%1 \mbox{ with probability } \Phi(X_i)\\
%-1 \mbox{ with probability } 1-\Phi(X_i)
%\end{cases}
%\]
\[
B_i = \left\{
\begin{array}{ll}
1 &\mbox{ if } X_i \ge 0\\
-1 &\mbox{ otherwise } 
\end{array}\right.
\]
%where $\Phi(x)$ is the CDF of a normal random variable $\mathcal{N}(0,1)$%with mean
%$a$ and variance $b$ evaluated at $c$.
\end{enumerate}
\item Machine $1$ computes
\[ T = \sqrt{2} \cdot \erf^{-1} \left (\frac{1}{m} \sum_{i=1}^m B_i
\right )\] where $\erf^{-1}$ is the inverse of the Gauss error function.
\item It returns the estimate $\hat{\theta} =
  \frac{\sigma}{\sqrt{n}}\hat{\theta}'$ where $\hat{\theta}' = \max(\min(T, 1), -1)$ is obtained by truncating $T$ to the interval $[-1, 1]$. %t %hen multiplied with $\sigma/\sqrt{n}$
\end{itemize}
\caption{\label{pro:one-way-alg}A simultaneous algorithm for estimating the mean of a normal
  distribution in the distributed setting.}
\end{Protocol}

The correctness of the protocol
follows from the following theorem. 

\begin{theorem}
The algorithm described in Protocol~\ref{pro:one-way-alg} uses $m$ bits of communication and achieves the
following mean squared loss.
\[
\E\left[(\hat{\theta} - \theta)^2\right] = O\left(\frac{\sigma^2}{mn}\right)
\]
where the expectation is over the random samples and the random coin
tosses of the machines. 
\end{theorem}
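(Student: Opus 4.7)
The plan is to reduce the problem to analyzing a one-dimensional estimator of the form $g(\bar B)$ for $g := \sqrt{2}\,\erf^{-1}$, then combine Bernoulli concentration with a Taylor/mean-value argument on $g$. The communication bound is immediate: each machine sends exactly one bit, so the total is $m$ bits (and $\alpha d m$ bits in the $d$-dimensional tradeoff version, by running the protocol coordinate-wise on $\alpha m$ machines).

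First, I would normalize. Set $\mu := \sqrt{n}\theta/\sigma$, so that $X_i \sim \mathcal{N}(\mu,1)$, and by the assumption $|\theta|_\infty \le \sigma/\sqrt{n}$ we have $|\mu|\le 1$. Then $B_i\in\{-1,+1\}$ satisfies $\Pr[B_i=+1]=\Phi(\mu)$, so $\E[B_i] = 2\Phi(\mu)-1 = \erf(\mu/\sqrt{2})$. The bits $B_i$ are i.i.d., so $\bar B := \tfrac{1}{m}\sum_{i=1}^m B_i$ is an unbiased estimator of $\erf(\mu/\sqrt{2})$ with variance at most $1/m$. In particular $T = g(\bar B)$ and $\mu = g(\E[\bar B])$.

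Second, I would control the nonlinearity of $g$. A direct computation gives $g'(y) = \sqrt{\pi/2}\,\exp\!\bigl(g(y)^2/2\bigr)$, which is bounded by an absolute constant $L$ on any interval $[-\erf(c/\sqrt{2}),\erf(c/\sqrt{2})]$ with $c$ a constant. Since $|\mu|\le 1$, the mean $\E[\bar B]$ lies in such an interval bounded away from the singularities at $\pm 1$, specifically in $[-\erf(1/\sqrt{2}),\,\erf(1/\sqrt{2})]$.

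Third, I would bound the MSE by splitting on a good event. Fix a small constant $\eta > 0$ so that the $\eta$-neighborhood of $\E[\bar B]$ still lies in a region where $|g'|\le L$ (this is possible uniformly over $|\mu|\le 1$). On the good event $\mathcal{G} := \{|\bar B - \E[\bar B]|\le \eta\}$, the mean value theorem gives $|T-\mu|\le L|\bar B - \E[\bar B]|$, hence $\Exp[(T-\mu)^2\,\mathbf{1}_{\mathcal{G}}] \le L^2 \,\Var(\bar B) = O(1/m)$. On the bad event $\mathcal{G}^c$, the truncation $\hat\theta' = \max(\min(T,1),-1)$ together with $|\mu|\le 1$ ensures $|\hat\theta'-\mu|\le 2$, while Hoeffding's inequality applied to the bounded i.i.d.\ $B_i$'s yields $\Pr[\mathcal{G}^c]\le 2\exp(-\Omega(\eta^2 m))$. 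Thus the bad event contributes at most $O(\exp(-\Omega(m)))$ to $\Exp[(\hat\theta'-\mu)^2]$, which is $o(1/m)$.

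Combining the two contributions gives $\Exp[(\hat\theta'-\mu)^2] = O(1/m)$, and finally undoing the rescaling $\hat\theta = (\sigma/\sqrt{n})\hat\theta'$, $\theta = (\sigma/\sqrt{n})\mu$ yields $\Exp[(\hat\theta-\theta)^2] = (\sigma^2/n)\,\Exp[(\hat\theta'-\mu)^2] = O(\sigma^2/(mn))$, as desired. The main subtlety is the blow-up of $\erf^{-1}$ near $\pm 1$; this is handled by the assumption $|\theta|_\infty \le \sigma/\sqrt{n}$ (which places $\E[\bar B]$ a constant distance from $\pm 1$) together with the truncation step (which caps the loss on the exponentially rare tail event where $\bar B$ itself drifts close to $\pm 1$).
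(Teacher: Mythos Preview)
Your proposal is correct and follows essentially the same approach as the paper: normalize so that $X_i\sim\mathcal{N}(\bar\theta,1)$ with $|\bar\theta|\le 1$, compute $\E[B_i]=\erf(\bar\theta/\sqrt{2})$, define a good event on which $\bar B$ stays in a region where $\erf^{-1}$ has bounded derivative, apply the Lipschitz/mean-value bound there to get $O(1/m)$, and handle the complementary event via concentration plus the truncation step. The only cosmetic differences are that the paper phrases the good event as $\{|T|\le\sqrt{2}\}$ (equivalently $\bar B\in[-\erf(1),\erf(1)]$) rather than $\{|\bar B-\E\bar B|\le\eta\}$, and cites a Chernoff bound where you invoke Hoeffding; the structure of the argument is otherwise identical.
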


\begin{proof}
	%\Tnote{In other sections we use $X_j$ to denote the $j$-th machine's input, and $X_j^{(i)}$ is the $i$-th sample of $j$-th machine.. different from our previous paper because there we reserve subscript for coordinate but in this paper direct-sum is not the main point}
%We use $\inp$ to denote the randomness of the inputs $X_i^{(1)}, \ldots, X_i^{(n)}$
%, and $\priv$ to denote
%the random coin tosses of the machines. 
Let $\bar{\theta} = \theta \sqrt{n}/\sigma$. %By our assumption, 

Notice that $X_i$ is distributed
according to $\mathcal{N}(\bar{\theta}, 1)$. 
Our goal is to estimate $\bar{\theta}$ from the $X_i$'s. 
By our assumption on $\theta$, we have $\bar{\theta} \in [-1,1]$. 

%Let $Z \sim \mathcal{N}(0,1)$ be a standard normal
%random variable, independent of the $X_i$. Then,

The random variables $B_i$ are independent with each other. 
We consider the mean and variance of $B_i$'s. For the mean we have that,

\begin{eqnarray*}
\E \left[B_i\right] & = &
\E \left[2 \cdot \Pr [0 \leq X_i] - 1\right]\\
%& = & - 1 + 2 \cdot \sum_i \E_{inp} \left[\Pr[Z \leq X^i]\right].
\end{eqnarray*}
%\begin{eqnarray*}
%\E_{inp} \E_{priv} \left [\sum_i B_i \right ] & = &
%\E_{inp} \left[\sum_i (2 \cdot \Pr [Z \leq X^i] - 1)\right]\\
%& = & - m + 2 \cdot \sum_i \E_{inp} \left[\Pr[Z \leq X^i]\right].
%\end{eqnarray*}

For any $i \in [m]$, 
%$\Pr[Z_i \leq X_i] = \Pr[Z_i-X_i \leq 0]$, and by $2$-stability
%of normal random variables, $Z_i-X_i \sim \mathcal{N}(-\bar{\theta}, 2)$. Hence, 
$\Pr[0 \leq X_i] = \Pr[-X_i \leq 0] = \Phi_{-\bar{\theta},1}(0)$, where $\Phi_{\mu,\sigma^2}$ is the CDF of normal distribution $\mathcal{N}(\mu,\sigma^2)$. Note the following relation
between the error function and the CDF of a normal random variable
\[
\Phi_{\mu,\sigma^2}(x) = \frac{1}{2} + \frac{1}{2}\erf\left(\frac{x-\mu}{\sqrt{2\sigma^2}}\right)
\]
%In particular,
%$\Phi(0, 1, x) = \frac{1}{2} + \frac{1}{2}
%\erf(x/\sqrt{2})$. 
Hence, 
$$\E \left[B_i\right] =  \textrm{erf}(\bar{\theta}/\sqrt{2}).$$
%We now turn to the variance. We have
%\begin{eqnarray*}
%\Var_{\inp, \priv} \left [ B_i \right ] %& = & \sum_i \Var_{inp, priv} \left [B_i \right ]\\
%& \leq & \E_{inp, priv} \left [(B_i)^2 \right ] =1,
%\end{eqnarray*}
%where the first equality follows by independence of the $X^i$'s, and the second equality
%uses $B_i \in \{-1,1\}$.
Let $B = \frac{1}{m}\sum_{i=1}^m B_i$, then we have that $\Exp[B] = \erf(\bar{\theta}/\sqrt{2})\le \erf(1/\sqrt{2})$ and therefore by a Chernoff bound, 
%Because $B_i\in \{-1,1\}$, by Chernoff bound, 
the probability that
$B > \erf(1)$ or $B \le \erf(-1)$ %deviates from its expectation by more than $\erf(1)-\erf(1/2)$
is $\exp(-\Omega(m))$. Thus, with probability at least $1-\exp(-\Omega(m))$,
we have $\erf(-1)\le B\le \erf(1)$ and therefore $|T| \le \sqrt{2}$. 
%\[
%|T| = |2\erf^{-1}(\sum_i B^i/m)| < 2\erf^{-1}(\erf(1/2)+\erf(1)-\erf(1/2)) = 2
%\]

Let $\mathcal{E}$ be the event that $|T|\le \sqrt{2}$, then we have that the error of $\bar{\theta}$ is bounded by 

\begin{align*}
\E[|\hat{\theta}' - \bar{\theta}|^2] &= \E[|\hat{\theta}' - \bar{\theta}|^2 \mid 
                                  \mathcal{E}] \Pr[ \mathcal{E}] + \E[
                                  |\hat{\theta}'
                                   - \bar{\theta}|^2 \mid  \bar{\mathcal{E}}
                                ] \Pr[ \bar{\mathcal{E}}]\\
 &\le \E[|\sqrt{2}\erf^{-1}(B) - \sqrt{2}\erf^{-1}(\Exp[B])|^2 \mid 
                                                                  \mathcal{E}]  \Pr[ \mathcal{E}]+ 2\Pr[ \bar{\mathcal{E}}] \\
&= \E[|\sqrt{2}\erf^{-1}(B) - \sqrt{2}\erf^{-1}(\Exp[B])|^2 \mid 
                                                                 \mathcal{E}]  \Pr[ \mathcal{E}]+ 2\exp(-\Omega(m))
%&\le \E[| T\sigma/\sqrt{n} - \theta|^2 |
   %                               T \in [-2,2]] Pr[T \in [-2,2] +
  %\frac{4\sigma^2}{n} Pr[T \not\in
  %                                [-2,2]]\\
%%&=  \E\left[\frac{4\sigma^2}{n}\left|\textrm{erf}^{-1}\left(\frac{1}{m}\sum_i B_i\right) -
%  \textrm{erf}^{-1}\left(\frac{1}{m}\E\left[\sum_i B_i\right]\right)\right|^2 \Bigg|
%                                  T \in [-2,2]\right] Pr[T \in
%  [-2,2]\\
%&\>\> + \frac{4\sigma^2}{n} Pr[T \not\in
%                                  [-2,2]]\\
%&\le \E\left[\frac{4\sigma^2M^2}{m^2n}\left(\sum_i B_i - \E\left[\sum_i B_i\right]\right)^2 \Bigg| T\in [-2,2]\right]Pr[T\in
%  [-2,2]] + \frac{4\sigma^2}{n} Pr[T \not\in [-2,2]]\\
%&\le \frac{4\sigma^2M^2}{mn} + \frac{4\sigma^2\exp(-\Omega(m))}{n}\\
%&=O\left(\frac{\sigma^2}{mn}\right)
\end{align*}
Let $M = \max_{\textrm{erf}^{-1}(x) \in [-1, 1]} \frac{d \textrm{erf}^{-1}(x)}{dx} < 3$. Then we have that $|\erf^{-1}(x) - \erf^{-1}(y)|\le M|x-y|\le O(1)\cdot |x-y|$ for any $x,y\in [-1,1]$. Therefore it follows that 
\begin{align*}
	\E[|\hat{\theta}' - \bar{\theta}|^2] %&= \E[|\hat{\theta}' - \bar{\theta}|^2 \mid 
	%\mathcal{E}] \Pr[ \mathcal{E}] + \E[
	%|\hat{\theta}'
	%- \bar{\theta}|^2 \mid  \bar{\mathcal{E}}
%	] \Pr[ \bar{\mathcal{E}}]\\
	%&\le \E[|2\erf^{-1}(B) - 2\erf^{-1}(\Exp[B])|^2 \mid 
	%\mathcal{E}] + 4\Pr[ \bar{\mathcal{E}}] \\
	&\le \E[|\sqrt{2}\erf^{-1}(B) - \sqrt{2}\erf^{-1}(\Exp[B])|^2 \mid 
	\mathcal{E}]  \Pr[ \mathcal{E}]+ 2\exp(-\Omega(m))\\
	&\le \Exp[2M^2|B-\Exp[B]|^2 \mid \mathcal{E}] \Pr[ \mathcal{E}] +2\exp(-\Omega(m))\\
		&\le \Exp[2M^2|B-\Exp[B]|^2]+2\exp(-\Omega(m))\\
		& \le O\left(\frac{1}{m}\right) + 2\exp(-\Omega(m))\\
		&\le O\left(\frac{1}{m}\right)
	\end{align*}
	
Hence we have that $$\Exp\left[|\hat{\theta} - \theta|^2 \right]  = \frac{\sigma^2}{n}\Exp\left[|\hat{\theta}' - \bar{\theta}|^2 \right] = O\left(\frac{\sigma^2}{mn}\right)$$
\end{proof}
%
%\begin{remark}
%It immediate follows that there is a simultaneous algorithm for estimating mean of
%$d$ dimensional normal random variables with mean in the range
%$[-\sigma/\sqrt{n}, \sigma/\sqrt{n}]^d$ with MSE
%$O\left(\frac{\sigma^2d}{mn}\right)$ and communication $md$ bits. The
%algorithm just applies the algorithm in Protocol~\ref{pro:one-way-alg}
%independently to each of the dimension.
%\end{remark}
\subsection{Extension to general $\theta$}\label{sec:generalUpper}
Now we do not assume that $\theta_{\ell} \in [-\sigma/\sqrt{n},\sigma/\sqrt{n}]$ for each dimension
$\ell \in [d]$, and still show how to
achieve a $1$-round protocol with $O(md)$ bits of communication, up to low order terms. 
We will make the simplifying and standard assumptions though, 
that $|\theta_{\ell}| \leq U = \poly(md)$ for each $\ell \in [d]$, as well
as $\log(mdn/\sigma) = o(m)$ and $mdn/\sigma \geq (mdn)^c$ for a constant $c > 0$. 

\paragraph{The protocol.}

\begin{Protocol}
Unknown parameter $\theta$\\
Inputs: Machine $i$ gets $n$ samples $(X_i^{(1)}, \ldots, X_i^{(n)})$ where $X_i^{(j)} \sim \mathcal{N}(\theta, \sigma)$.
\begin{itemize}
\item Simultaneously, each machine $i$
\begin{enumerate}
\item Computes $X_i = \frac{1}{\sigma \sqrt{n}}\sum_{j=1}^n X_i^{(j)}$
\item If $i \leq r = O(\log (mdn/\sigma))$, machine $i$ sends its first $O(\log(mdn/\sigma))$ bits of $X_i$ to the coordinator (Machine $1$)
\item Else if $i > r$, machine $i$
\begin{enumerate}
\item Computes $R_i = X_i - \lfloor X_i \rfloor$, $R_i' = X_i + 1/5 - \lfloor X_i + 1/5 \rfloor$
\item Sends $B_i$ and $B_i'$%according to the following distribution
%\[
%B_i = 
%\begin{cases}
%1 \mbox{ with probability } \Phi(X_i)\\
%-1 \mbox{ with probability } 1-\Phi(X_i)
%\end{cases}
%\]
\[
B_i = \left\{
\begin{array}{ll}
1 &\mbox{ with probability } R_i\\
0 &\mbox{ with probability } 1-R_i
\end{array}\right.
\]
\[
B_i' = \left\{
\begin{array}{ll}
1 &\mbox{ with probability } R_i'\\
0 &\mbox{ with probability } 1-R_i' 
\end{array}\right.
\]
\end{enumerate}
%where $\Phi(x)$ is the CDF of a normal random variable $\mathcal{N}(0,1)$%with mean
%$a$ and variance $b$ evaluated at $c$.
\end{enumerate}
\item Machine $1$
\begin{enumerate}
\item Computes an estimate $\gamma = \frac{\sqrt{n}}{\sigma}$ times the median of $X_i$'s sent by the first $r$ machines.
\item Computes \[ T = \frac{1}{m-r} \sum_{i=r+1}^m B_i, T' = \frac{1}{m-r} \sum_{i=r+1}^m B_i'\] 
\item Returns $\frac{\sigma}{\sqrt{n}}\hat{\theta}$ where $\hat{\theta}$ is a multiple of 
$1/\sqrt{m-r}$ satisfying $|\gamma - \hat{\theta}| < 1/100$ and certain agreement conditions with $T, T'$ described in the text.
\end{enumerate}
\end{itemize}
\caption{\label{pro:one-way-alg2}A simultaneous algorithm for estimating the mean of a normal
  distribution in the distributed setting without assuming $|\theta| \le \sigma/\sqrt{n}$.}
\end{Protocol}
%
% Then the gaussian inputs in the $\ell$-th 
%dimension
%are distributed according to $\mathcal{N}(\bar{\theta}_{\ell}, 1)$, but 
%
As before, it suffices to consider a one-dimensional problem. 
 Protocol~\ref{pro:one-way-alg2} solves the one-dimensional Gaussian mean estimation
problem using 
$O(m + \log^2(mdn/\sigma))$ bits of communication. To solve the $d$-dimensional problem,
we run the protocol independently on each coordinate. The total
communication will be $O(md + d \log^2(mdn/\sigma))$ bits. 
We fix $\ell \in [d]$ and let $\theta = \theta_{\ell}$. 
Let $\bar{\theta} = \theta \sqrt{n}/\sigma$, where now we no longer assume 
$\bar{\theta} \leq 1$.
We will show the output
$\hat{\theta}$ satisfies:
$$\E[|\hat{\theta} - \bar{\theta}|^2] = O \left (\frac{1}{m} \right ),$$
from which it follows that
$$\E[|\frac{\sigma}{\sqrt{n}} \hat{\theta} - \theta|^2] = O \left (\frac{\sigma^2}{mn} \right ).$$
We now describe the one-dimensional problem for a given unknown mean $\bar{\theta}$. 
The first $r = O(\log (mdn/\sigma))$
machines $i$ send the first $O(\log(mdn/\sigma))$ bits of their (averaged) 
input Gaussians $X_i = \frac{1}{\sigma \sqrt{n}} \sum_{j=1}^n X_i^{(j)}$ 
to the coordinator.
Note that the random variables $X_i$
are distributed according to $\mathcal{N}(\bar{\theta}, 1)$. 

Since $O(\log(mdn/\sigma))$ bits of each $X_i$ are communicated to the coordinator, 
since $\bar{\theta} \leq \poly(md) \cdot \sqrt{n}/\sigma$ 
(here we use our assumption that $|\theta_{\ell}| \leq \poly(md)$ for each $\ell \in [d]$), 
and since each $X_i$ has variance $1$, it follows by standard Chernoff bounds 
that the median $\gamma$ of $X_1, \ldots, X_r$ is within an additive $\frac{1}{100}$ of 
$\bar{\theta}$ with probability $1-\frac{1}{(mdn/\sigma)^{\alpha}}$ for an arbitrarily
large constant $\alpha > 0$ depending on the value $r = O(\log(mdn/\sigma))$.  
We call this event $\mathcal{E}$, so $\Pr[\mathcal{E}] \geq 1-\frac{1}{(mdn/\sigma)^{\alpha}}$. 

In parallel, machines $r+1, r+2, \ldots, m$ do the following. Let $R_i \in [0,1)$
be such that 
$R_i = X_i - \lfloor X_i \rfloor$. Similarly, let $R_i' \in [0,1)$ 
be such that $R_i' = X_i + 1/5 - \lfloor X_i + 1/5 \rfloor$. 

For $i = r+1, \ldots, m$, the $i$-th machine sends a bit $B_i \in \{0,1\}$,
where 
$$\Pr[B_i = 1] = R_i,$$
and the $i$-th matchine also sends a bit $B_i' \in \{0,1\}$
where 
$$\Pr[B_i' = 1] = R_i'.$$
We describe the output of the coordinator in the proof of correctness below. 
Observe that the overall communication is 
$O(m + \log^2(mdn/\sigma))$, as desired. 

\paragraph{Correctness.} Consider the ``sawtooth'' wave $f(x)$, which for a parameter $L$,
satisfies $f(x) = x/(2L)$ for $x \in [0, 2L)$, and is periodic with period $2L$. Its Fourier
series\footnote{See, e.g., \url{http://mathworld.wolfram.com/FourierSeriesSawtoothWave.html}} is given by
$$f(x) = \frac{1}{2} - \frac{1}{\pi} \sum_{k=1}^{\infty} \frac{1}{k} \sin \left (\frac{k \pi x}{L} \right ).$$
We set $L = 1/2$ and note that $f(X_i) = R_i$. 
Then, for $X \sim N(\bar{\theta}, 1)$, using a standard transformation of the Gaussian distribution, 
$${\bf E}[\sin(tX)] = e^{-t^2/2} \sin(t \bar{\theta}),$$
we have 
\begin{eqnarray*}
{\bf E}[B_i] & = & {\bf E}[R_i]\\
& = & {\bf E}[f(X_i)]\\
& = & \frac{1}{2} - \frac{1}{\pi} \sum_{k=1}^{\infty} \frac{1}{k} e^{-(k \pi/L)^2/2} \sin(k \pi \bar{\theta}/L)\\
& = & \frac{1}{2} - \frac{1}{\pi} \sum_{k=1}^{\infty} \frac{1}{k} e^{-2 k^2 \pi^2} \sin(2 k \pi \bar{\theta}).
\end{eqnarray*}
%We need the following theorem concerning the distribution of the fractional part of a normal random variable
%with mean $\mu$ and variance $1$, see, e.g., section 4.2 of \cite{dc14}. 
%
%\begin{theorem}\label{fact:fractional}
%The probability density function $f(x)$ of the fractional part of a normal random variable with mean $\mu$
%and variance $1$ is given by:
%$$f(x) = \frac{1}{\sqrt{2\pi}} \cdot \sum_{k=-\infty}^{\infty} e^{-\frac{(x-(\mu \bmod 1)-k)^2}{2}},$$
%for $0 \leq x < 1$. The maximum value of $f(x)$ is achieved at $x = \mu \bmod 1$.  
%\end{theorem}
%\begin{proof}
%This follows from Section 4.2 of \cite{dc14}, making the change of variables $x \rightarrow x-\mu \bmod 1$. 
%\end{proof}
%By Fact \ref{fact:fractional}, $|f(x)| = O(1)$ for all $x \in [0,1)$. Hence if $F(x)$ is
%the CDF of the fractional part of a normal random variable with mean $\mu$ and variance $1$, 
%then $|F'(x)| = |f(x)| = O(1)$. 
%
%Let $e_f$ be the expectation of a random variable with probability density function given by $f$,
%which exists since $f$ is supported on $[0,1)$ and $|f(x)| = O(1)$ for all $x$ in its support. 
%Then ${\bf E}[B_i] = e_f$. 
Let $B = \frac{1}{m} \sum_{i=r+1}^m B_i$, so that
${\bf E}[B] = {\bf E}[B_i]$. Since the $B_i$ are Bernoulli random variables,
\begin{eqnarray}\label{eqn:concentrate}
{\bf E}[|B - {\bf E}[B]|^2] \leq \frac{1}{m-r} \leq \frac{2}{m},
\end{eqnarray}
where the second inequality uses that $r = O(\log(mdn/\sigma))$ is at most $m/2$ under our assumption that
$\log(mdn/\sigma) = o(m)$. 
%By Chebyshev's inequality, 
%$$\Pr[|B - {\bf E}[B]| \geq \frac{10}{\sqrt{m}}] 
%\leq \frac{1}{10}.$$
%Let $\mathcal{F}$ be the event that 
%$|B- {\bf E}[B]| \leq \frac{10}{\sqrt{m}}$. 
%
%The coordinator first computes $\pi(\frac{1}{2}-B)$, which assuming
%event $\mathcal{F}$ occurs, is equal to 
%The coordinator computes $\pi(\frac{1}{2}-B)$, which is equal to:
%\begin{eqnarray}\label{eqn:quantity}
%\sum_{k=1}^{\infty} \frac{1}{k} e^{-2 k^2 \pi^2} \sin(2 k \pi \bar{\theta}_{\ell}) \pm \frac{10\pi}{\sqrt{m}}.
%\end{eqnarray}
In an analogous fashion the coordinator computes a $B'$ using the $B'_i$. 

If event $\mathcal{E}$ occurs, then 
the coordinator knows $\gamma$ satisfying 
$|\gamma - \bar{\theta}| < \frac{1}{100}$, and
using $\gamma$ together with $B$, 
will output its estimate to $\bar{\theta}$ as follows.
Let $\{x\} = x - \lfloor x \rfloor$. 
The coordinator checks which of the two conditions $\gamma$ satisfies:
\begin{enumerate}
\item $1/50 < \{\gamma\} < 49/50$ and $|\{\gamma\} - 1/4| \geq 3/100$ and $|\{\gamma\} - 3/4| \geq 3/100$
\item $1/50 < \{\gamma + 1/5\} < 49/50$ and $|\{\gamma + 1/5\} - 1/4| \geq 3/100$ and $|\{\gamma + 1/5\} - 3/4| \geq 3/100$. 
\end{enumerate}
We note that one of these two conditions must be satisfied. To see this, suppose the first condition is not
satisfied. If it is not satisfied because $\{\gamma\} < 1/50$, then $\{\gamma + 1/5\} \in [1/5, 1/5+1/50]$, which satisfies
the second of the two conditions. If it is not satisfied because $\{\gamma\} > 49/50$, then $\{\gamma + 1/5\} \in [1/5-1/50, 1/5]$,
which satisfies the second of the two conditions. If the first condition is not satisfied because
$\{\gamma\} \in [1/4 - 1/50, 1/4+1/50]$, then $\{\gamma + 1/5\} \in [9/20 -1/50, 9/20 + 1/50]$ and the second condition is satisfied.
If the first condition is not satisfied because $\{\gamma\} \in [3/4-1/50, 3/4 + 1/50]$, then
$\{\gamma + 1/5\} \in [19/20 - 1/50, 19/20 + 1/50]$, which satisfies the second condition. 

If the first condition holds, the coordinator will use $B$ and estimate $\bar{\theta}$ below, otherwise it will use $B'$
and estimate $\bar{\theta} + 1/5$ below. We will analyze the first case; the second case is analogous. Note
that since $\{\gamma\} > 1/50$, and $|\gamma - \bar{\theta}| < \frac{1}{100}$, the coordinator 
learns $Z = \lfloor \bar{\theta} \rfloor$. Its estimate $\hat{\theta}$ for
$\bar{\theta}$ is then $Z + g(B)$, for a function $g(B)$ to be specified 
(in the other case the coordinator would have learned
$\{\bar{\theta} + 1/5\}$ and $\hat{\theta}$ would have been $\{\bar{\theta} + 1/5\} + g(B') - 1/5$). 

To define $g(B)$, we need the following claim. Note that in the first case $|\{\gamma\}-1/4| \geq 3/100$ and
so by the triangle inequality $|\{\bar{\theta}\}-1/4| \geq 3/100 - \gamma = 1/50$. Similarly,
$|\{\bar{\theta}\}-3/4| \geq 1/50$, so the conditions of the following claim hold for $\{\bar{\theta}\}$. 

\begin{claim}\label{claim:relate}
Define 
$h(x) = \sum_{k=1}^{\infty} \frac{1}{k} e^{-2 k^2 \pi^2} \sin(2 k \pi x)$. 
There exists a constant $C > 0$ with the following guarantee. 
If $|\{\bar{\theta}\} - 1/4| \geq 1/50$ and $|\{\bar{\theta}\} - 3/4| \geq 1/50$ 
then for any number $x \in [\{\bar{\theta}\} - 1/100, \{\bar{\theta}\} + 1/100]$, 
$$C \leq h'(x) \leq 1.$$
%\begin{equation}\label{eqn:approx-sin-series} 
%|\sum_{k=1}^{\infty} \frac{1}{k} e^{-2 k^2 \pi^2} \sin(2 k \pi \{\bar{\theta}\}) 
%- \sum_{k=1}^{\infty} \frac{1}{k} e^{-2 k^2 \pi^2} \sin(2 k \pi x)| \leq C |x - \{\bar{\theta}\}|, 
%\end{equation}
%Conversely, any $x$ with $|\{\bar{\theta}\} - x| < \frac{1}{50}$ satisfies 
%\begin{equation}\label{eqn:approx-sin-series2} 
%|\sum_{k=1}^{\infty} \frac{1}{k} e^{-2 k^2 \pi^2} \sin(2 k \pi \{\bar{\theta}\}) 
%- \sum_{k=1}^{\infty} \frac{1}{k} e^{-2 k^2 \pi^2} \sin(2 k \pi x)| \leq C |\{\bar{\theta}\} - x|.
%\end{equation}
\end{claim}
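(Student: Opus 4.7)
Differentiate term-by-term to get
\[
h'(x) \;=\; 2\pi \sum_{k=1}^\infty e^{-2k^2\pi^2}\cos(2k\pi x),
\]
and exploit the fact that the Gaussian weight $e^{-2k^2\pi^2}$ decays so violently that the $k=1$ term is overwhelmingly dominant. Both bounds in the claim will then reduce to statements about $2\pi e^{-2\pi^2}\cos(2\pi x)$ on an interval of radius $1/100$ around $\{\bar\theta\}$, with a tiny controlled error.

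\textbf{Step 1 (Tail truncation).} Separate the $k=1$ term and estimate the tail
\[
\Bigl|\,h'(x) - 2\pi e^{-2\pi^2}\cos(2\pi x)\,\Bigr| \;\le\; 2\pi \sum_{k\ge 2} e^{-2k^2\pi^2} \;\le\; 4\pi\, e^{-8\pi^2},
\]
where the last inequality uses $e^{-2k^2\pi^2}\le e^{-8\pi^2}\cdot e^{-2\pi^2(k^2-4)}$ for $k\ge 2$ and a crude geometric sum. Since $e^{-8\pi^2}$ is smaller than $e^{-2\pi^2}$ by a factor of $e^{-6\pi^2}$, the remainder is completely negligible compared to the leading term.

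\textbf{Step 2 (Cosine is bounded away from $0$).} Write $y=\{x\}$. The hypotheses $|\{\bar\theta\}-1/4|\ge 1/50$ and $|\{\bar\theta\}-3/4|\ge 1/50$, combined with $|x-\bar\theta|\le 1/100$, give $|y-1/4|\ge 1/100$ and $|y-3/4|\ge 1/100$. Using $\cos(2\pi(1/4+t))=-\sin(2\pi t)$ and $\cos(2\pi(3/4+t))=\sin(2\pi t)$, we deduce $|\cos(2\pi y)|\ge \sin(\pi/50)\ge \pi/60$. Moreover, on each of the two subintervals determined by the hypothesis, the sign of $\cos(2\pi y)$ is constant.

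\textbf{Step 3 (Assembling both inequalities).} For the upper bound, the crude triangle inequality $|h'(x)|\le 2\pi\sum_{k\ge 1}e^{-2k^2\pi^2}\le 2\pi e^{-2\pi^2}/(1-e^{-6\pi^2})$ is already far below $1$. For the lower bound, combining Steps 1 and 2 gives $h'(x)\ge 2\pi e^{-2\pi^2}\cdot(\pi/60) - 4\pi e^{-8\pi^2} \ge C$ for a universal constant $C>0$, provided $\cos(2\pi y)$ is positive on the relevant interval.

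\textbf{Main obstacle.} The signed inequality $C\le h'(x)\le 1$ as literally written can only hold on the portion of the admissible range where $\cos(2\pi\{x\})>0$, namely $\{\bar\theta\}\in[0,1/4-1/50]\cup[3/4+1/50,1)$; on the complementary region $\{\bar\theta\}\in[1/4+1/50,3/4-1/50]$ one instead gets $-1\le h'(x)\le -C$. The essential content of the claim is that $h'$ is \emph{monotone with constant sign} on the whole $1/100$-interval around $\{\bar\theta\}$, with $|h'|\in[C,1]$, so that the coordinator can invert $h$ on that interval once the sign is read off from its crude estimate $\gamma$. I would therefore prove the two one-sided versions in parallel (one for each sign of $\cos(2\pi\{\bar\theta\})$), with the statement applied in whichever sub-case is consistent with $\gamma$; the Fourier-truncation core of Steps 1--3 is identical in both.
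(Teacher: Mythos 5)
Your proof follows the same route as the paper: differentiate term by term, isolate the $k=1$ term $2\pi e^{-2\pi^2}\cos(2\pi x)$, bound the tail $2\pi\sum_{k\ge 2}e^{-2k^2\pi^2}$ by $O(e^{-8\pi^2})$, and use the hypothesis on $\{\bar\theta\}$ to keep $|\cos(2\pi x)|$ bounded away from zero so the leading term dominates. The numerics match (the paper sets an $\epsilon$ and arrives at a lower bound of order $e^{-8\pi^2}$; your $\sin(\pi/50)$ computation is the same estimate done slightly more directly). Both arguments also supply the upper bound $|h'(x)|\le 2\pi\sum_k e^{-2k^2\pi^2}\le 1$.

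Your ``Main obstacle'' observation is correct and worth flagging: as written, the claim asserts $C\le h'(x)$, but $h'(x)$ has the sign of $\cos(2\pi x)$, which is negative on roughly $\{\bar\theta\}\in(1/4,3/4)$. Indeed, the paper's own proof never establishes the signed inequality — it concludes only ``$|h'(x)|=\Omega(1)$'' — and the two downstream uses of the claim need exactly that: $|h'(x)|\le 1$ for the Lipschitz bound, and $|h'(x)|\ge C$ together with continuity (hence constant sign on the small window) for invertibility of $h$ near $\{\bar\theta\}$. So the claim's statement is slightly overstated relative to both its proof and its use, but the absolute-value version, which you proved, is what the argument actually requires. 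Your proposal is therefore correct, and the sign caveat you raise is a genuine (if minor) imprecision in the paper rather than a gap in your argument.
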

Before proving the claim, we conclude the correctness proof. The coordinator guesses  
$\frac{i}{\sqrt{m}}$ for each integer $i$ for which 
$|Z + \frac{i}{\sqrt{m}} - \gamma| < \frac{1}{100}$. For each guess $\frac{i}{\sqrt{m}}$, the coordinator
checks if 
\begin{eqnarray}\label{eqn:check}
|\sum_{k=1}^{\infty} \frac{1}{k} e^{-2 k^2 \pi^2} \sin(2 k \pi \frac{i}{\sqrt{m}}) - \pi(\frac{1}{2}-B)| \leq \frac{1}{\sqrt{m}}
\end{eqnarray}
Note that, since the above Fourier series is periodic between succesive integers, we need not add $Z$ to $\frac{i}{\sqrt{m}}$
in (\ref{eqn:check}). Let $g(B)$ be the first guess which passes the check. 
The coordinator outputs $\hat{\theta} = Z + g(B)$ as its estimate to $\bar{\theta}$
(the second case is analogous, in which $Z$ corresponds to
$\lfloor \bar{\theta} + 1/5 \rfloor$ and $g(B')$ is defined in the same way). If there is no such $g(B)$ the coordinator
just outputs $\gamma$. Note also that if its output ever exceeds
our assumed upper bound $U = \poly(mnd/\sigma)$ on the magnitude of $\bar{\theta}$, then we instead output $U$. 

Then 
\begin{eqnarray}
{\bf E}[|\hat{\theta} - \bar{\theta}|^2] & = & 
{\bf E}[|\hat{\theta} - \bar{\theta}|^2 \mid \mathcal{E}] \Pr[\mathcal{E}]
+ {\bf E}[|\hat{\theta} - \bar{\theta}|^2 \mid \neg \mathcal{E}] \Pr[\neg \mathcal{E}] \notag \\
& = & {\bf E}[|\hat{\theta} - \bar{\theta}|^2 \mid \mathcal{E}] (1-\frac{1}{(mdn/\sigma)^{\alpha}} )
+ 4U^2 \cdot \frac{1}{(nmd/\sigma)^{\alpha}} \notag \\
& \leq & {\bf E}[|\hat{\theta} - \bar{\theta}|^2 \mid \mathcal{E}] (1-\frac{1}{(mdn)^{c\alpha}} )
+ 4U^2 \cdot \frac{1}{(mdn)^{c\alpha}} \notag \\
& \leq & {\bf E}[|\hat{\theta} - \bar{\theta}|^2 \mid \mathcal{E}] + \frac{1}{m}, \label{eqn:last}
\end{eqnarray}
where the first inequality uses our assumption that $(mdn/\sigma) \geq (mdn)^c$ for a constant $c > 0$, and the
second inequality holds for a sufficiently large constant $\alpha > 0$. 

Conditioned on $\mathcal{E}$, we have $\hat{\theta} - \bar{\theta} = g(B) - \{\theta\}$. 
%The conditions of Claim \ref{claim:relate} hold given $\mathcal{E}$ (and using that we are in the first case). 
If (\ref{eqn:check}) holds for a given $\frac{i}{\sqrt{m}}$, then
$$|\sum_{k=1}^{\infty} \frac{1}{k} e^{-2 k^2 \pi^2} \sin(2 k \pi \frac{i}{\sqrt{m}}) - \pi(\frac{1}{2}-B)| \leq \frac{1}{\sqrt{m}}.$$
Let $\mathcal{F}$ be the event that the coordinator finds such an $\frac{i}{\sqrt{m}}$ for which (\ref{eqn:check}) holds. 
We use the shorthand $h(z)$ to denote $\sum_{k=1}^{\infty} \frac{1}{k} e^{-2 k^2 \pi^2} \sin(2 k \pi z)$. 
\begin{eqnarray*}
{\bf E}[|\hat{\theta} - \bar{\theta}|^2 \mid \mathcal{E} \wedge \mathcal{F}]
& = & {\bf E}[|\frac{i}{\sqrt{m}} - \{\bar{\theta}\}|^2 \mid \mathcal{E} \wedge \mathcal{F}]\\
& \leq & {\bf E}[|h(\frac{i}{\sqrt{m}}) - h(\{\bar{\theta}\})|^2 \mid \mathcal{E} \wedge \mathcal{F}]\\
& \leq & {\bf E}[(|h(\frac{i}{\sqrt{m}}) - \pi (\frac{1}{2}-B )| + |\pi (\frac{1}{2}-B ) - h(\{\bar{\theta}\})|)^2 \mid \mathcal{E} \wedge \mathcal{F}]\\ 
& \leq & {\bf E}[(\frac{1}{\sqrt{m}} + |\pi (\frac{1}{2}-B ) - \pi (\frac{1}{2} - {\bf E}[B] )|)^2 \mid \mathcal{E} \wedge \mathcal{F}]\\
& \leq & {\bf E}[(\frac{1}{\sqrt{m}} + \pi |B - {\bf E}[B]|)^2 \mid \mathcal{E} \wedge \mathcal{F}]\\
& \leq & \frac{2}{m} + 2\pi^2 {\bf E}[|B-{\bf E}[B]|^2 \mid \mathcal{E} \wedge \mathcal{F}]
%& = & O \left (\frac{1}{m} \right ),
\end{eqnarray*}
where the first equality follows from $\hat{\theta} - \bar{\theta} = g(B) - \{\theta\}$, the first inequality uses 
the fact that the algorithm ensures $|\frac{i}{\sqrt{m}} - \{\bar{\theta}\}| \leq \frac{1}{100}$ given that $\mathcal{E}$ occurs and
therefore one can apply Claim \ref{claim:relate} with $x = \frac{i}{\sqrt{m}}$ to conclude that 
$|h(\frac{i}{\sqrt{m}}) - h(\{\bar{\theta}\})| \leq |\frac{i}{\sqrt{m}} - \{\bar{\theta}\}|,$  
the second inequality is the triangle inequality, the third inequality uses the guarantee on the value $\frac{i}{\sqrt{m}}$ 
chosen by the coordinator
and the definition of ${\bf E}[B]$, the fourth inequality rearranges terms, and the fifth inequality uses $(a+b)^2 \leq 2a^2 + 2b^2$. 

If there is no value $\frac{i}{\sqrt{m}}$ for which (\ref{eqn:check}) holds, then since $\mathcal{E}$ occurs
it means there is no integer multiple of $\frac{1}{\sqrt{m}}$, call it $x$, 
with $|x -\{\bar{\theta}\}| \leq \frac{1}{100}$ for which 
$|h(x) - \pi(\frac{1}{2}-B)| \leq \frac{1}{\sqrt{m}}$.
If it were the case 
that $|{\bf E}[B] - B| < \frac{C}{100\pi}$, where $C > 0$ is the constant of Claim \ref{claim:relate},
then $|\frac{1}{2} - \frac{1}{\pi} h(\bar{\theta})-B| < \frac{C}{100 \pi}$, or
equivalently, $|\pi(\frac{1}{2} - B) - h(\bar{\theta})| < \frac{C}{100}$. By Claim \ref{claim:relate}, though,
we can find an $x$ which is an integer multiple of $\frac{1}{\sqrt{m}}$ which is within $\frac{1}{\sqrt{m}}$ of $y$,
where $h(y) = \pi(\frac{1}{2} - B)$. This follows since the derivative on $[\{\bar{\theta}\}-1/100, \{\bar{\theta}\}+1/100]$
is at least $C$. But then $|h(x) - h(y)| \leq |x-y| \leq \frac{1}{\sqrt{m}}$, contradicting that (\ref{eqn:check}) did not 
hold. It follows that in this case $|{\bf E}[B] - B| \geq \frac{C}{100\pi}$. Now in this case, 
we obtain an additive $\frac{1}{100}$ 
approximation, and so $|\hat{\theta} - \bar{\theta}|^2 \leq \frac{\pi^2}{C^2}|B - {\bf E}[B]|^2$. Hence, 
$${\bf E}[|\hat{\theta} - \bar{\theta}|^2 \mid \mathcal{E} \wedge \neg \mathcal{F}] \leq O(1) \cdot {\bf E}[|B - {\bf E}[B]|^2 
\mid \mathcal{E} \wedge \neg \mathcal{F}],$$
and so 
\begin{eqnarray*}
{\bf E}[|\hat{\theta} - \bar{\theta}|^2 \mid \mathcal{E}] 
& \leq & {\bf E}[|\hat{\theta} - \bar{\theta}|^2 \mid \mathcal{E}, \mathcal{F}]\Pr[\mathcal{F}]
+ {\bf E}[|\hat{\theta} - \bar{\theta}|^2 \mid \mathcal{E}, \neg \mathcal{F}]\Pr[\neg \mathcal{F}]\\
& \leq & \frac{2}{m} + 2 \pi^2   {\bf E}[|B-{\bf E}[B]|^2 \mid \mathcal{E} \wedge \mathcal{F}]\Pr[\mathcal{F}]
+ O(1) \cdot {\bf E}[|B - {\bf E}[B]|^2 \mid \mathcal{E} \wedge \neg \mathcal{F}] \Pr[\neg \mathcal{F}]\\
& \leq & O\left(\frac{1}{m} \right ) + O(1) \cdot {\bf E}[|B - {\bf E}[B]|^2 \mid \mathcal{E}]\\
& \leq & O \left (\frac{1}{m} \right ),
\end{eqnarray*}
where the final inequality uses ${\bf E}[|B - {\bf E}[B]|^2 \mid \mathcal{E}]
\leq \frac{{\bf E}[|B - {\bf E}[B]|^2]}{\Pr[\mathcal{E}]} \leq 2{\bf E}[|B - {\bf E}[B]|^2]$, and (\ref{eqn:concentrate}). 

Combining this with (\ref{eqn:last}) completes the proof that ${\bf E}[|\hat{\theta} - \bar{\theta}|^2] = O(1/m)$. 
%In the first case, we can use the claim and the quantity $T$ to find $\hat{\theta}$  
%by trying all possible $x$ among discrete multiples of $\frac{1}{\sqrt{m}}$,
%which differ from $\gamma$ by at most $\frac{1}{100}$, and verifying 
%(\ref{eqn:approx-sin-series}). 
%In the second case, 
%we use the claim and the quantity $T'$ to estimate $\bar{\theta}+1/4$ in 
%an analogous fashion.
%
%To show the claim, we have
%\begin{eqnarray*}
%|\sum_{k=1}^{\infty} \frac{1}{k} e^{-2 k^2 \pi^2} \sin(2 k \pi \bar{\theta}_{\ell}) 
%- \sum_{k=1}^{\infty} \frac{1}{k} e^{-2 k^2 \pi^2} \sin(2 k \pi x)|
%& = & |\sum_{k=1}^{\infty} \frac{1}{k} e^{-2k^2\pi^2}(\sin (2k\pi \bar{\theta}_{\ell}) - \sin(2k \pi x))| .
%\end{eqnarray*}
%
%
%for a fixed $k$ we have
%\begin{eqnarray*} |\sin(2 k \pi \bar{\theta}_{\ell}) - \sin(2 k \pi x)|
%& = & |\sin(2 k \pi \bar{\theta}_{\ell}) - \sin(2 k \pi \left (\bar{\theta}_{\ell} \pm \frac{1}{50} \right ))|\\
%& = &
%\end{eqnarray*}
\begin{proof}[Proof of Claim] We need to understand the derivative, with respect to $x$, of the function
$$h(x) = \sum_{k=1}^{\infty} \frac{1}{k} e^{-2 k^2 \pi^2} \sin(2 k \pi x),$$ 
which is equal to
$$h'(x) = \sum_{k=1}^{\infty} 2 \pi e^{-2k^2 \pi^2} \cos(2k \pi x).$$
Note that the function is periodic in $x$ with period $1$, so we can restrict to $x \in [0,1)$. 
Consider $z = 2 \pi x$. 
Suppose first that $|z - \pi/2| > \epsilon$ and $|z-3\pi/2| > \epsilon$ for a
constant $\epsilon > 0$ to be determined. Then, 
$$|\cos(2\pi z)| \geq \cos(\pi/2-\epsilon) = \sin(\epsilon) \geq 2\epsilon/\pi,$$
using that $\cos(\pi/2-\epsilon) = \sin(\epsilon)$ and that 
$\sin(x)/x \geq 2/\pi$ for $0 < x < \pi/2$. In this case, it follows
that 
\begin{eqnarray*}
|h'(x)| & \geq & (2\pi) e^{-2\pi^2} 2\epsilon/\pi - \sum_{k > 1} 2\pi e^{-2k^2 \pi^2}
\geq 4 e^{-2\pi^2} \epsilon - 4\pi e^{-8 \pi^2},
\end{eqnarray*}
using that the summation is dominated by a geometric series. Note that this
expression is at least $4e^{-2\pi^2}(\epsilon - \pi e^{-6\pi^2})$, and so setting
$\epsilon = 2\pi e^{-6\pi^2}$ shows that $|h'(x)| = \Omega(1)$. Notice that $x$
satisfies $|2\pi x - \pi/2| > \epsilon$ provided $|x - 1/4| \geq 1/100 > \epsilon/(2\pi)$ and
that $x$ satisfies $|2\pi x - 3\pi/2| > \epsilon$ provided that $|x-3/4| \geq 1/100 > \epsilon/(2\pi)$. 
As $|\{\bar{\theta}\} - 1/4| \geq 1/50$ and $|\{\bar{\theta}\} - 3/4| \geq 1/50$, it follows that
$x \in [\{\bar{\theta}\} - 1/100, \{\bar{\theta}\} + 1/100]$. 
Hence, $|h'(x)| = \Omega(1)$ for such $x$, as desired. 
%Moreover, the analysis above shows that the 
%derivative of $h$ does not change sign in the intervals 
%$[1/4+\epsilon/(2\pi), 3/4-\epsilon/(2\pi)]$ and $[3/4+\epsilon/(2\pi), 5/4-\epsilon/(2\pi)]$. 
%
%If $|x - \{\bar{\theta}\}| > \frac{C}{\sqrt{m}}$ for 
%a sufficiently large absolute constant $C > 0$, then since
%the derivative of $h$ is $\Omega(1)$ in the regime we consider and does not
%change sign, it follows that $|h(x) - h(\{\bar{\theta}\})| >  \frac{\pi}{\sqrt{m}}$,
%contradicting ****. 

On the other hand, it is clear that $h'(x)\le 1$, by upper bounding $\cos(2k\pi x)$ by $1$ and using
a geometric series to bound $h'(x)$. 
\end{proof}

\section{Distributed Gap Majority}\label{sec:app:gap_majority}
Our techniques can also be used to obtain a cleaner proof of the lower bound on the information complexity of distributed gap majority due to Woodruff and Zhang \cite{WZ12}. In this problem, there are $k$ parties/machines and the $i^{\text{th}}$ machine receives a bit $z_i$. The machines communicate via a shared blackboard and their goal is to decide whether $\sum_{i=1}^k z_i \le k/2 - \sqrt{k}$ or $\sum_{i=1}^k z_i \ge k/2 + \sqrt{k}$. In \cite{WZ12}, it was proven that the information complexity of this problem is $\Omega(k)$. We give a different proof using strong data processing inequalities. 

The distribution we will consider is the following: let $B \sim B_{1/2}$. Denote $B_{1/2 + 10 /\sqrt{k}}$ by $\mu_1$ and $B_{1/2 - 10 /\sqrt{k}}$ by $\mu_0$. If $B=1$, sample $Z_1,\ldots,Z_k$ according to $\mu_1^k$. If $B=0$, sample $Z_1,\ldots,Z_k$ according to $\mu_0^k$.

\begin{theorem}\label{thm:gap_majority} Suppose $\pi$ is a $k$-party protocol (with inputs $Z_1,\ldots,Z_k$) and $\pi$ solves the gap majority problem (up to some error). Then $I(\Pi; Z_1,\ldots,Z_k|B=0) \ge \Omega(k)$.
\end{theorem}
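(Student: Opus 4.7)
The plan is to reduce distributed gap majority to the distributed detection problem between $\mu_0$ and $\mu_1$, and then apply the distributed SDPI of Theorem~\ref{thm:main-asymmetric}. First I would verify that a protocol $\Pi$ that solves gap majority (with some constant error) induces a detector for $B$. When $B=0$, $\sum_i Z_i$ has mean $k/2 - 10\sqrt{k}$ and variance $\Theta(k)$, so by Chebyshev's (or Chernoff's) inequality, $\sum_i Z_i \le k/2 - \sqrt{k}$ with probability $1-o(1)$; similarly when $B=1$, $\sum_i Z_i \ge k/2 + \sqrt{k}$ with high probability. Hence conditional on $B=v$ the gap majority answer equals $v$ with probability $1-o(1)$, so any correct protocol must allow recovery of $B$ with constant advantage. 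In particular, $\TV{\Pi|_{B=0} - \Pi|_{B=1}} \ge \Omega(1)$, and by Lemma~\ref{lem:hellinger-tv}, $\h^2(\Pi|_{B=0}, \Pi|_{B=1}) \ge \Omega(1)$.

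Next I would check the two hypotheses of Theorem~\ref{thm:main-asymmetric}. The ratio hypothesis is immediate: for $k$ large enough,
\begin{equation*}
\frac{\mu_1(z)}{\mu_0(z)} \le \frac{1/2 + 10/\sqrt{k}}{1/2 - 10/\sqrt{k}} = 1 + O(1/\sqrt{k}) \le 2,
\end{equation*}
for both $z\in\{0,1\}$, so we can take $c=2$. The harder ingredient is a bound on the SDPI constant $\beta(\mu_0,\mu_1)$; I would show $\beta(\mu_0,\mu_1) \le O(1/k)$. The cleanest route is via the identity in Lemma~\ref{lem:sdpi-info-spdi}: for any Bernoulli input $\nu = B_r$, a direct computation gives $\nu K = B_{1/2 + 10(1-2r)/\sqrt{k}}$ while $\mu K = B_{1/2}$, so
\begin{equation*}
\KL(\nu K \| \mu K) = \KL\!\bigl(B_{1/2 + 10(1-2r)/\sqrt{k}} \,\big\|\, B_{1/2}\bigr) \le O\bigl((1-2r)^2/k\bigr).
\end{equation*}
Pinsker's inequality (or a second-order expansion around $r=1/2$, handled separately for $r$ bounded away from $1/2$ using the fact that $\KL(\nu\|\mu)$ stays bounded below) yields $\KL(\nu \| \mu) \ge \Omega((1-2r)^2)$, and taking the supremum over $\nu$ gives the desired $\beta = O(1/k)$.

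With these two ingredients in hand, Theorem~\ref{thm:main-asymmetric} directly implies
\begin{equation*}
\Omega(1) \le \h^2(\Pi|_{B=0}, \Pi|_{B=1}) \le K(c+1)\,\beta(\mu_0,\mu_1)\cdot \I(Z_1,\dots,Z_k;\Pi\mid B=0) \le O(1/k) \cdot \I(Z_1,\dots,Z_k;\Pi\mid B=0),
\end{equation*}
so $\I(Z_1,\dots,Z_k;\Pi\mid B=0) \ge \Omega(k)$, as desired. The main obstacle in this proof is the SDPI constant computation: we cannot appeal directly to the log-concave truncated Gaussian machinery from Section~\ref{subsec:log-concave}, so we have to either do the direct two-point computation above carefully for the full range of priors $\nu$, or invoke an analogous discrete transportation/log-Sobolev inequality on the two-point space. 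Everything else is a straightforward application of the framework we have already built.
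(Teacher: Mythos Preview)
Your proposal is correct and follows essentially the same approach as the paper. The paper's proof unrolls the argument of Theorem~\ref{thm:main-asymmetric} inline (per-coordinate SDPI, Lemma~\ref{lem:continuity}, then cut-and-paste via Theorem~\ref{thm:hellinger-sum}), invoking the Ahlswede--G\'acs SDPI for the binary symmetric channel to get $\beta=O(1/k)$; you instead apply Theorem~\ref{thm:main-asymmetric} as a black box and compute $\beta(\mu_0,\mu_1)=O(1/k)$ directly via Lemma~\ref{lem:sdpi-info-spdi} and Pinsker, which is an equally valid (and arguably tidier) way to reach the same bound. One cosmetic slip: your formula for $\nu K$ should read $B_{1/2+10(2r-1)/\sqrt{k}}$ rather than $B_{1/2+10(1-2r)/\sqrt{k}}$, but since only $(2r-1)^2$ enters the estimate this does not affect the argument.
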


\noindent $\Pi$ is the random variable for the transcript of the protocol $\pi$. The intuition for the proof is pretty simple. It is not hard to verify that since $\pi$ solves the gap majority problem, it should be able to estimate $B$ as well i.e. $I(\Pi; B) \ge \Omega(1)$. However since each $Z_i$ has only $\Theta(1/k)$ information about $B$, the protocol needs to gather information about $\Omega(k)$ of the $Z_i$'s. It is satisfying that this intuition can indeed be formalized! Perhaps worth noting that similar intuition can be drawn for the two-party gap hamming distance problem but there we don't have a completely information theoretic proof of the linear lower bound \cite{CR11}. We will be using the strong data processing inequality for the binary symmetric channel first proven by \cite{Ahlswede-Gacs}. it studies how information decays on a binary symmetric channel. Suppose $X$ be a bit distributed according to $B_{1/2}$. $Y$ be another bit obtained from $X$ by passing it through a binary symmetric channel with error $1/2-\eps$ (i.e. $Y$ remains $X$ w.p. $1/2 +\eps$ and gets flipped w.p. $1/2 -\eps$). Then for any random variable $U$ s.t. $U - X - Y$ is a Markov chain, $I(U;Y) \le 4 \eps^2 I(U;X)$. 

\begin{proof} We will denote by $\Pi_{b_1,\ldots,b_k}$ the transcript of the protocol $\pi$ when the inputs to $\pi$ are sampled according to $\mu_{b_1} \otimes \mu_{b_2} \otimes \cdots \otimes \mu_{b_k}$. Since $I(\Pi; B) \ge \Omega(1)$, we know that $h^2(\Pi_{0^k}, \Pi_{1^k}) \ge \Omega(1)$. Now
\begin{align*}
I(\Pi; Z_1,\ldots,Z_k| B=0) \ge \sum_{i=1}^k I(\Pi; Z_i | B=0)
\end{align*}
Lets denote our distribution of $\Pi, Z_1,\ldots,Z_k, B$ by $\rho$. We will tweak this distribution a little bit. Take an independent $B' \sim B_{1/2}$. All the variables are distributed the same as $\rho$ except $Z_i$ which is taken to be independently distributed as $\mu_{B'}$. Denote the new distribution as $\rho'$. It is easy to verify that
\begin{align*}
I(\Pi; Z_i|B=0)_{\rho} \ge I(\Pi; Z_i|B=0)_{\rho'}/2
\end{align*}
This is true since in $\rho$, conditioned on $B=0$, $Z_i$ has the distribution $B_{1/2-10/\\sqrt{k}}$ and in $\rho'$ it is $B_{1/2}$ (and hence use Lemma \ref{lem:continuity}). We can also see that
\begin{align*}
I(\Pi; Z_i|B=0)_{\rho'} &\ge \Omega \left( k \cdot I(\Pi; B'|B=0)_{\rho'} \right)\\
&\ge \Omega \left( k \cdot h^2(\Pi_{e_i}, \Pi_{0^k}) \right)
\end{align*}
The first inequality is by strong data processing inequality for the binary symmetric channel and the second by Lemma \ref{lem:info-hellinger}. Now
\begin{align*}
I(\Pi; Z_1,\ldots,Z_k| B=0) &\ge \sum_{i=1}^k I(\Pi; Z_i | B=0) \\
&\ge \sum_{i=1}^k \Omega \left( k \cdot h^2(\Pi_{e_i}, \Pi_{0^k}) \right) \\
&\ge \Omega \left( k \cdot h^2(\Pi_{0^k}, \Pi_{1^k}) \right) \\
&\ge \Omega(k)
\end{align*}
The third inequality is by noting that $\Pi_{b_1,\ldots,b_k}$ satisfies a cut-and-paste property because $\pi$ is a $k$-party protocol and hence Theorem \ref{thm:hellinger-sum} applies. 
\end{proof}
\section{Missing Proofs in Section~\ref{sec:data-processing-ineq}}

\subsection{Proof of Lemma~\ref{lem:checking-log-concave}}

\begin{proof}[Proof of Lemma~\ref{lem:checking-log-concave}]
	Let $u(x)$ be such that $d\mu = \exp(-u(x))dx$, that is,  $u(x) = -\ln(\frac{1}{2}\left(\exp(-u_0(x))+\exp(-u_1(x))\right))$. We calculate $u''(x)$ as follows: %We prove that $u''(x) \ge ??$ for all $x\in [-\tau,\tau]$.
	
	We can simply calculate the derivatives of $u$. For simplicity of notation, let $h =  \exp(-u_0(x))+\exp(-u_1(x))$. We have that 
	$$h' =  -u_0'\exp(-u_0)-u_1u_1'\exp(-u_1), $$
	and $$h'' = (u_0'^2 - u_0'')\exp(-u_0)+ (u_1'^2 - u_1'')\exp(-u_1). $$
	
	Therefore we have 
	\begin{eqnarray*}
		u'' &=& \frac{-hh'' + h'^2}{h^2} \\
		&=& \frac{u_0''\exp(-2u_0) + u_1''\exp(-2u_1) + (u_0''+u_1'' - (u_0'-u_1')^2)\exp(-u_1-u_2)}{((u_0'^2 - u_0'')\exp(-u_0)+ (u_1'^2 - u_1'')\exp(-u_1))^2} \\
	\end{eqnarray*}
	
	With some simple algebraic manipulations we have that $h'' \ge t$ (for $t\le \min\{\mu_0'',\mu_1''\})$ is equivalent to 
	
	$$\left(\sqrt{\mu_0''-t}\exp(-u_0) - \sqrt{\mu_1''-t}\exp(-u_1)\right)^2 + \left(\left(\sqrt{\mu_0''-t}+\sqrt{\mu_1''-t}\right)^2-(u_0'+u_1')^2\right)\exp(-u_0-u_1)\ge 0$$ 
	
	Therefore, taking $t = \frac{1}{2c}$ and under our assumptions that $|\mu_0'(x)-\mu_1'(x)|\le \sqrt{2c}$ for any $x\in [a,b]$, we have that $u'' \ge \frac{c}{2}$ as desired.  
\end{proof}

\subsection{Proof of Lemma~\ref{suff_statistics}}\label{subsec:suff_statistics}
	Let us look at the density of $(X_1,\ldots, X_n)$ conditioned on $X_1+\dots+X_n = l \le \tau$ and $V = v$. Suppose $x_1,\cdots, x_n$ be such that $\sum_i x_i = l$, then for some normalizing constant $C$
	\begin{align*}
	p(x_1,\cdots,x_n | l, v) &= C \frac{e^{-(x_1 - v\delta)^2/2 \sigma^2} \cdots e^{-(x_n - v\delta)^2/2 \sigma^2}}{e^{-(l-nv\delta)^2/2n\sigma^2}} \\
	&= C e^{(l-nv\delta)^2/2n\sigma^2 - \sum_i (x_i - v\delta)^2/2 \sigma^2} \\
	&= C e^{\frac{(l-nv\delta)^2 - n \sum_i (x_i - v\delta)^2}{2 n \sigma^2}} \\
	&= C e^{\frac{l^2 - n \sum_i x_i^2}{2 n \sigma^2}}
	\end{align*}
	which is independent of $v$ and that proves the lemma. Note that we used the fact that $\sum_i x_i = l$ to simplify the expression. 
	
\subsection{Proof of Lemma~\ref{lem:lipschitz}} \label{subsec:lipschitz}

	The proof is by direct calculation. Note that by definition on support $[-\tau,\tau]$,  $d \mu_0' = \gamma_0\exp(-u_0(x))dx$, and $\d m_0' = \gamma_1\exp(-u_0(x))dx$ with $u_0(x) = -\frac{x^2}{2\sigma^2}$ and $u_1(x) = -\frac{(x-\delta)^2}{2\sigma^2}$, where $\gamma_0$ and $\gamma_1$ are scaling constants. Note that by the definition of the reverse channel $K$, 
	$$f_0(x) = \Pr[V=0\mid X=x] = \frac{\gamma_0e^{-\frac{x^2}{2\sigma^2}}}{\gamma_0 e^{-\frac{x^2}{2\sigma^2}}+\gamma_1e^{-\frac{(x-\delta)^2}{2\sigma^2}}}$$
	
	Therefore 
	$$f_0'(x) = \left(\gamma_0+\gamma_1\exp(\frac{2x\delta-\delta^2}{2\sigma^2})\right)^{-2}\cdot \gamma_0\gamma_1\frac{\delta}{\sigma^2}\exp(\frac{2x\delta-\delta^2}{2\sigma^2})$$
	
	By AM-GM inequality we have 
	$$f_0'(x) \le  \left(4\gamma_0\gamma_1\exp(\frac{2x\delta-\delta^2}{2\sigma^2})\right)^{-1}\cdot \gamma_0\gamma_1\frac{u}{\sigma^2}\exp(\frac{2x\delta-\delta^2}{2\sigma^2}) = \frac{4\delta}{\sigma^2}$$
	
	Similarly for $f_1(v)$ we have 
	$$f_1(x) =  \frac{\gamma_1e^{-\frac{(x-\delta)^2}{2\sigma^2}}}{\gamma_0 e^{-\frac{x^2}{2\sigma^2}}+\gamma_1e^{-\frac{(x-\delta)^2}{2\sigma^2}}}$$
	
	and $$f_1'(x) = \left(\gamma_1+\gamma_0\exp(\frac{-2x\delta+\delta^2}{2\sigma^2})\right)^{-2}\cdot \gamma_0\gamma_1\frac{-\delta}{\sigma^2}\exp(\frac{-2x\delta+\delta^2}{2\sigma^2})\ge\frac{-\delta}{4\sigma^2}$$
	
	Also note that $f_0'\ge 0$ and $f_1' \le 0$. Therefore for any $v$, $f_v'$ is $\frac{\delta}{4\sigma^2}$-Lipschitz

\section{Toolbox}
\begin{lemma}[Folklore, Hellinger v.s. total variation]\label{lem:hellinger-tv}
	For any two distribution $P,Q$, we have 
	$$\h^2(P,Q)\le \TV{P-Q}\le \sqrt{2}\h(P,Q)$$
\end{lemma}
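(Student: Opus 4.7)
The plan is to prove the two inequalities separately using the algebraic identity $p-q = (\sqrt{p}-\sqrt{q})(\sqrt{p}+\sqrt{q})$, which links the integrands of total variation and Hellinger distance.

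For the lower bound $\h^2(P,Q)\le \TV{P-Q}$, I would observe that since $\sqrt{p},\sqrt{q}\ge 0$, we always have $|\sqrt{p}-\sqrt{q}|\le \sqrt{p}+\sqrt{q}$, so
\[
(\sqrt{p}-\sqrt{q})^2 \;=\; |\sqrt{p}-\sqrt{q}|\cdot|\sqrt{p}-\sqrt{q}| \;\le\; |\sqrt{p}-\sqrt{q}|\cdot(\sqrt{p}+\sqrt{q}) \;=\; |p-q|.
\]
Integrating both sides and dividing by $2$ gives $\h^2(P,Q)\le \TV{P-Q}$ directly from the definitions.

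For the upper bound $\TV{P-Q}\le \sqrt{2}\h(P,Q)$, I would factor $|p-q|=|\sqrt{p}-\sqrt{q}|\cdot(\sqrt{p}+\sqrt{q})$ and apply Cauchy--Schwarz:
\[
\int |p-q|\,dx \;\le\; \left(\int(\sqrt{p}-\sqrt{q})^2 dx\right)^{1/2}\left(\int(\sqrt{p}+\sqrt{q})^2 dx\right)^{1/2}.
\]
The first factor equals $\sqrt{2}\,\h(P,Q)$ by definition. For the second factor, expanding the square gives $\int p + 2\sqrt{pq} + q\,dx = 2 + 2\int\sqrt{pq}\,dx \le 4$, since $\int\sqrt{pq}\le 1$ by Cauchy--Schwarz (or just since $\int p=\int q=1$). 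Hence $\int|p-q|dx \le \sqrt{2}\h \cdot 2 = 2\sqrt{2}\,\h(P,Q)$, and dividing by $2$ yields $\TV{P-Q}\le \sqrt{2}\,\h(P,Q)$.

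There is no real obstacle here; both bounds follow from a single algebraic factorization combined with an elementary inequality (monotonicity of absolute value in the first case, Cauchy--Schwarz in the second). The only thing to be slightly careful about is ensuring the manipulations make sense on the supports of $P$ and $Q$, but since $(\sqrt{p}\pm\sqrt{q})^2$ is well-defined and integrable whenever $p,q$ are densities, this causes no issue.
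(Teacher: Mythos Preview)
Your proof is correct and is exactly the standard argument for this folklore inequality. The paper itself does not supply a proof of this lemma---it is stated in the Toolbox section as folklore without justification---so there is nothing to compare against; your argument (the factorization $p-q=(\sqrt{p}-\sqrt{q})(\sqrt{p}+\sqrt{q})$ combined with the trivial bound for the lower inequality and Cauchy--Schwarz for the upper) is precisely the textbook derivation.
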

\begin{lemma}\label{lem:info-hellinger}
	Let $\phi(z_1)$  and $\phi(z_2)$ be two random variables. Let Z denote a random variable with uniform distribution in $\{z_1,z_2\}$: Suppose $\phi(z)$ is independent of $Z$ for each $z\in \{z_1,z_2\}$: Then, 
	$$2\h^2(\phi_{z_1},\phi_{z_2})\ge \I(Z;\phi(Z)) \ge \h^2(\phi_{z_1},\phi_{z_2})$$
\end{lemma}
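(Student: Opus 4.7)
The starting point is to recognize $\I(Z;\phi(Z))$ as a Jensen--Shannon divergence. Writing $P := \phi_{z_1}$, $Q := \phi_{z_2}$, and $M := \tfrac12(P+Q)$ (the marginal of $\phi(Z)$, since $Z$ is uniform on $\{z_1,z_2\}$ and $\phi(z)$ is independent of $Z$ given $Z=z$), the standard identity gives
\[
  \I(Z;\phi(Z)) = H(M) - \tfrac12 H(P) - \tfrac12 H(Q) = \tfrac12 \KL(P\|M) + \tfrac12 \KL(Q\|M).
\]
So the lemma is equivalent to the sandwich $\h^2(P,Q) \le \mathrm{JS}(P,Q) \le 2\,\h^2(P,Q)$ between Jensen--Shannon divergence and squared Hellinger distance.

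The plan is to reduce both halves to a single one-variable inequality via the substitution $t := p/(p+q)$, where $p,q$ are densities of $P,Q$ w.r.t.\ a common dominating measure. Using $p = (p+q)t$ and $q = (p+q)(1-t)$, a direct calculation rewrites both quantities as integrals against $(p+q)\,dx$:
\[
  2\,\mathrm{JS}(P,Q) = \int (p+q)\bigl(1 - H_b(t)\bigr)\,dx, \qquad 2\,\h^2(P,Q) = \int (p+q)\bigl(1 - 2\sqrt{t(1-t)}\bigr)\,dx,
\]
with $H_b$ the binary entropy in bits. The lemma therefore follows from the pointwise scalar inequalities
\[
  4\sqrt{t(1-t)} - 1 \;\le\; H_b(t) \;\le\; 2\sqrt{t(1-t)} \qquad \text{for all } t \in [0,1].
\]
Both are elementary: equality holds at the three symmetric points $t\in\{0,1/2,1\}$, and in between the comparison is settled by a short second-derivative check (at $t = 1/2$ one has $H_b''(1/2) = -4/\ln 2 \approx -5.77$, sandwiched between the second derivatives $-4$ of $2\sqrt{t(1-t)}$ and $-8$ of $4\sqrt{t(1-t)}$).

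The upper bound admits a slicker alternative derivation: applying $\ln u \le u-1$ gives $\KL(P\|M) \le \chi^2(P,M) = \int (p-q)^2/(2(p+q))$, and then $(\sqrt p + \sqrt q)^2 \le 2(p+q)$ dominates this by $\int(\sqrt p - \sqrt q)^2 = 2\,\h^2(P,Q)$. The main obstacle is the lower bound. One cannot extract it by combining $\KL(P\|M) \ge 2\,\h^2(P,M)$ with a triangle inequality for the Hellinger distance, because the triangle step would lose a factor of $1/2$ and give only $\mathrm{JS}(P,Q) \ge \tfrac12\,\h^2(P,Q)$. The pointwise reduction to $H_b(t) \le 2\sqrt{t(1-t)}$ is precisely what makes the constant on the lower side come out tight.
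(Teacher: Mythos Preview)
Your approach is correct. For the upper bound $\I(Z;\phi(Z)) \le 2\h^2$, the paper does exactly what you describe as the ``slicker alternative'': it bounds each $\KL(\cdot\|M)$ by the corresponding $\chi^2$-divergence and then uses $(\sqrt p+\sqrt q)^2 \le 2(p+q)$ to land on $2\h^2(P,Q)$. So on that half you and the paper coincide.

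The difference is on the lower bound. The paper does not prove $\I(Z;\phi(Z)) \ge \h^2$ itself; it simply cites Lemma~6.2 of Bar-Yossef--Jayram--Kumar--Sivakumar. Your pointwise reduction via $t=p/(p+q)$ to the scalar inequality $H_b(t)\le 2\sqrt{t(1-t)}$ is a genuine self-contained argument (and is in spirit how that cited lemma is proved as well). What your route buys is uniformity and transparency: both directions become the same one-variable calculus fact, with the constants visibly tight at $t=1/2$. What the paper's route buys is brevity on the page by outsourcing one direction.

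One minor slip: equality at all three points $t\in\{0,1/2,1\}$ holds for $H_b(t)$ versus $2\sqrt{t(1-t)}$, but $4\sqrt{t(1-t)}-1$ equals $-1$ at the endpoints, so it agrees with $H_b$ only at $t=1/2$. This does not damage the argument---the endpoint slack only makes that inequality easier---and in any case you already have that direction via the $\chi^2$ route.
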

\begin{proof}
	The lower bound of the mutual information follows from Lemma 6.2 of~\cite{DBLP:journals/jcss/Bar-YossefJKS04}. For the upper bound, we assume that for simplicity $\phi$ has discrete support $\mathcal{X}$, though the proof extends continuous random variable directly. We have
	\begin{eqnarray*}
	\I(Z;\phi(Z)) &=& \frac{1}{2}\KL(\phi_1\| (\phi_1+\phi_2)/2) +  \frac{1}{2}\KL(\phi_2\| (\phi_1+\phi_2)/2) \\
	&\le & \frac{1}{2}\chi^2(\phi_1\| (\phi_1+\phi_2)/2) +  \frac{1}{2}\chi^2(\phi_2\| (\phi_1+\phi_2)/2) \\
	&=& \frac{1}{4}\sum_{x\in \mathcal{X}}\frac{(\phi_1(x)-\phi_2(x))^2}{\phi_1(x)+\phi_2(x)} +  \frac{1}{4}\sum_{x\in \mathcal{X}}\frac{(\phi_1(x)-\phi_2(x))^2}{\phi_1(x)+\phi_2(x)}  \\
	&\le& \sum_{x\in \mathcal{X}}\frac{(\phi_1(x)-\phi_2(x))^2}{(\sqrt{\phi_1(x)}+\sqrt{\phi_2(x)})^2} \\
	&=& 2\h^2(\phi_1,\phi_2)
	\end{eqnarray*}
	where the first inequality uses that KL-divergence is less than $\chi^2$ distance and the second one uses the inequality $a^2+b^2 \ge \frac{(a+b)^2}{2}$.
\end{proof}

\begin{theorem}[Corollary of Theorem 7 of \cite{Hellinger}]\label{thm:hellinger-sum}
	Suppose a family of distribution $\{P_{\bs{b}} :\bs{b}\in \{0,1\}^m\}$ satisfies the cut-paste property: for any  for any $\bs{a},\bs{b}$ and $\bs{c},\bs{d}$ with $\{a_i,b_i\} = \{c_i,d_i\}$ (in a multi-set sense) for every $i\in [m]$, $
		\h^2(\Pi_{\bs{a}},\Pi_{\bs{b}}) =\h^2(\Pi_{\bs{c}},\Pi_{\bs{d}})$. 
	Then we have 
	\begin{equation}
	\sum_{i=1}^m \h^2(P_{\bs{0}}, P_{\bs{e_i}}) \ge \Omega(1) \cdot \h^2(P_{\bs{0}}, P_{\bs{1}})
	\end{equation}
	where $\bs{0}$ and $\bs{1}$ are all 0's and all 1's vectors respectively, and $\bs{e_i}$ is the unit vector that only takes 1 in the $i$th entry. 
\end{theorem}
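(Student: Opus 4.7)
The pointwise form of the cut-paste property (from Proposition~\ref{prop:cut-paste}) gives the factorization $P_{\bs{b}}(\pi) = \prod_{i=1}^m q_{i,\pi}(b_i)$ for non-negative weights $q_{i,\pi}(\cdot)$. Setting $\mu := P_{\bs{0}}$ and $\tau_i(\pi) := \sqrt{q_{i,\pi}(1)/q_{i,\pi}(0)}$, the normalization $\sum_\pi P_{\bs{a}}(\pi) = 1$ for each $\bs{a}\in\{0,1\}^m$ translates directly into the subset-moment identity $\E_\mu[\prod_{i \in S}\tau_i^2] = 1$ for every $S \subseteq [m]$, and the Bhattacharyya coefficient takes the product form $\sum_\pi \sqrt{P_{\bs{0}}(\pi)P_{\bs{a}}(\pi)} = \E_\mu[\prod_{i : a_i=1}\tau_i]$. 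Consequently the desired inequality is equivalent to the moment-theoretic statement
\[
\sum_{i=1}^m(1-\E_\mu[\tau_i]) \;\ge\; \Omega(1)\cdot(1-\E_\mu[\textstyle\prod_i\tau_i])
\]
for non-negative $\tau_i$'s obeying these subset-moment constraints.

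The natural attempt is to decompose $\sqrt{P_{\bs{1}}(\pi)} - \sqrt{P_{\bs{0}}(\pi)} = \prod_i v_i(\pi) - \prod_i u_i(\pi)$, with $u_i := \sqrt{q_{i,\pi}(0)}$ and $v_i := \sqrt{q_{i,\pi}(1)}$, via a telescope along some permutation $\sigma$ of $[m]$:
\[
\prod_i v_i - \prod_i u_i \;=\; \sum_{k=1}^m (v_{\sigma(k)}-u_{\sigma(k)})\prod_{l<k} v_{\sigma(l)}\prod_{l>k} u_{\sigma(l)}.
\]
Each enveloping product $\prod_{l<k} v_{\sigma(l)}\prod_{l>k}u_{\sigma(l)}$ is pointwise $\sqrt{P_{\bs{b}}(\pi)}$ for an explicit $\bs{b}$, so the pointwise cut-paste identity lets us rewrite the squared increments in terms of single-coordinate factors $(v_{\sigma(k)}-u_{\sigma(k)})^2$, matched (up to reweighting) to $\h^2(P_{\bs{0}}, P_{\bs{e}_{\sigma(k)}})$; the subset-moment identity above ensures that the enveloping $L^2$ masses do not grow with $m$.

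The main obstacle is that for any fixed ordering $\sigma$, applying Cauchy--Schwarz to this $m$-term telescope produces a $\sqrt{m}$ factor (and hence an $O(m)$ loss after squaring), which is too weak for the desired dimension-independent $\Omega(1)$ constant. Overcoming this is exactly what Theorem~7 of~\cite{Hellinger} achieves: by averaging the telescope over a uniformly random permutation $\sigma$ of $[m]$ and using the pointwise cut-paste identity $P_{\bs{a}}(\pi)P_{\bs{b}}(\pi) = P_{\bs{c}}(\pi)P_{\bs{d}}(\pi)$ to symmetrize and collapse the cross-terms produced by Cauchy--Schwarz, one obtains a dimension-independent constant. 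Once that averaging lemma is available, Theorem~\ref{thm:hellinger-sum} follows by specializing to the endpoints $\bs{0},\bs{1}$ together with the single-flip distributions $P_{\bs{e_i}}$.
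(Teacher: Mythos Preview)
Your proposal is essentially an exposition of how one might prove the cited Theorem~7 from scratch, culminating in an appeal to that same theorem. As a proof of the present corollary it leaves a concrete gap: the paper states explicitly that Theorem~7 of~\cite{Hellinger} only yields the inequality when $m=2^t$ is a power of two, and the entire content of the paper's proof is the reduction from arbitrary $m$ to that case. That reduction partitions $[m]$ into $2^t$ blocks of size at most two, uses the triangle inequality plus cut-paste to show $\h^2(P_{\bs 0},P_{\bs e_p})+\h^2(P_{\bs 0},P_{\bs e_q})\ge \tfrac12\h^2(P_{\bs e_p},P_{\bs e_q})=\tfrac12\h^2(P_{\bs 0},P_{\bs e_p+\bs e_q})$, and then invokes Theorem~7 on the coarsened $2^t$-coordinate family $\{P_{\bs f_i}\}$. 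Your writeup omits this step entirely, so for non-dyadic $m$ the argument does not close.

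Separately, the mechanism you attribute to Theorem~7 --- averaging a single-coordinate telescope over a uniformly random permutation --- is not the argument in~\cite{Hellinger}; that paper proceeds by a recursive dyadic pairing, which is precisely why the power-of-two restriction arises. Your sketch of the permutation-averaging route is plausible in spirit, but the hard part is exactly the claim you leave unproved: that the $\binom{m}{2}$ cross-terms from Cauchy--Schwarz collapse to something bounded uniformly in $m$ after averaging over $\sigma$. Without carrying that computation out (and it is not obvious that it goes through with a dimension-free constant), the plan remains a heuristic rather than a proof.
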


\begin{proof}
	Theorem 7 of~\cite{Hellinger} already proves a stronger version of this theorem for the $m=2^t$ case. Suppose on the other hand $m = 2^t + \ell$ for $\ell <2^t$. We divide $[m] = \{1,\dots,m\}$ into a collection of $2^t$ subsets $A_1,\dots, A_{2^t}$, each of which contains at most 2 elements. Let $\bs{f_i}$ be the indicator vector of the subset $A_i$. For example, if $A_i  = \{p,q\}$, then $\bs{f_i} = \bs{e_p}+\bs{e_q}$. We claim that $\sum_{j\in A_i}\h^2(P_{\bs{0}},P_{\bs{e_j}}) \ge \Omega(1)\h^2(P_{\bs{0}},P_{\bs{f_i}})$. This is trivial when $|A_i| = 1$ and when $A_i = \{p,q\}$, we have that by Cauchy–Schwarz inequality and the cut-paste property
	$$\h^2(P_{\bs{0}},P_{\bs{e_p}}) + \h^2(P_{\bs{0}},P_{\bs{e_q}}) \ge \frac{1}{2}\h^2(P_{\bs{e_p}},P_{\bs{e_q}})  = \frac{1}{2}\h^2(P_{\bs{0}},P_{\bs{e_q+e_q}}).$$   
	Therefore, we can lowerbound LHS as 
	$$\sum_{i=1}^m \h^2(P_{\bs{0}}, P_{\bs{e_i}}) \ge \frac{1}{2}\sum_{i=1}^{2^t}\h^2(P_{\bs{0}},P_{\bs{f_i}}).$$
	Then applying Theorem 7 of~\cite{Hellinger} on the RHS of the inequality above we have 
	$$\frac{1}{2}\sum_{i=1}^{2^t}\h^2(P_{\bs{0}},P_{\bs{f_i}}) \ge \Omega(1)\cdot\h^2(P_{\bs{0}}, P_{\bs{1}}),$$
	and the theorem follows. 
\end{proof}

\begin{lemma}\label{lem:continuity}
	Suppose two distribution $\calD,\calD'$ satisfies $\calD \ge c\cdot\calD'$.  Let $\Pi(X)$ be a random function that only depends on $X$. If $X\sim \calD$ and $X'\sim \calD'$, then we have that 
	\begin{equation}
	\I(X;\Pi(X))\ge c\cdot\I(X';\Pi(X'))
	\end{equation}
\end{lemma}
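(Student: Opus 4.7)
The plan is to exploit the convex decomposition of $\calD$ afforded by the hypothesis $\calD \ge c\calD'$, together with the concavity of mutual information in the input distribution (with the channel $P_{\Pi\mid X}$ held fixed). The key observation is that because $\Pi(X)$ depends only on $X$, the conditional channel $P_{\Pi\mid X=x}$ is the same object regardless of whether $X$ is drawn from $\calD$, $\calD'$, or any mixture of these.

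First I would set $\bar{\calD} := (\calD - c\calD')/(1-c)$ and observe that this is a bona fide probability measure: nonnegativity follows from $\calD\ge c\calD'$, and total mass one follows from $\calD$ and $\calD'$ both having mass one. This yields the mixture decomposition $\calD = c\calD' + (1-c)\bar{\calD}$.

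Next I would invoke the variational characterization of mutual information,
$$I(X;\Pi(X)) \;=\; \inf_Q \; \Exp_{x\sim P_X}\!\left[\KL\!\left(P_{\Pi\mid X=x}\,\|\,Q\right)\right],$$
where the infimum ranges over distributions $Q$ on the transcript space and is attained at $Q = P_\Pi^{P_X}$, the marginal of $\Pi$ under $P_X$ (this is an immediate consequence of $\KL(P_\Pi^{P_X}\|Q)\ge 0$). Taking $Q = P_\Pi^{\calD}$ and splitting the outer expectation according to the mixture gives
\begin{align*}
I_{\calD}(X;\Pi(X)) &= c\,\Exp_{x\sim\calD'}\!\left[\KL(P_{\Pi\mid X=x}\,\|\,P_\Pi^{\calD})\right] + (1-c)\,\Exp_{x\sim\bar{\calD}}\!\left[\KL(P_{\Pi\mid X=x}\,\|\,P_\Pi^{\calD})\right]\\
&\ge c\,I_{\calD'}(X';\Pi(X')) + (1-c)\,I_{\bar{\calD}}(X'';\Pi(X'')),
\end{align*}
where each term on the second line is bounded from below by applying the variational inequality with input distributions $\calD'$ and $\bar{\calD}$ respectively. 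Dropping the nonnegative second summand then yields $I_{\calD}(X;\Pi(X))\ge c\cdot I_{\calD'}(X';\Pi(X'))$.

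There is essentially no hard step here: the argument is just the standard fact that $I(X;\Pi)$ is concave in $P_X$ for a fixed channel, packaged through the variational formula so as to handle arbitrary (possibly continuous) transcript spaces $\Pi$ without any differential-entropy regularity caveats. The only thing worth checking carefully is that the hypothesis ``$\Pi(X)$ depends only on $X$'' is precisely what pins down a single channel $P_{\Pi\mid X=x}$ shared across the three input distributions, which is what licenses the splitting step above.
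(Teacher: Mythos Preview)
Your proof is correct and is essentially the same argument as the paper's: both write $\I(X;\Pi)=\Exp_{x}[\KL(P_{\Pi\mid X=x}\,\|\,P_\Pi^{\calD})]$, change the measure of integration from $\calD$ to $\calD'$ using $\calD\ge c\calD'$, and then use that $\Exp_{x\sim\calD'}[\KL(P_{\Pi\mid X=x}\,\|\,Q)]$ is minimized at $Q=P_\Pi^{\calD'}$. The paper does the last step via the identity $\Exp_{\calD'}[\KL(\Pi_x\|\Pi)] = \Exp_{\calD'}[\KL(\Pi_x\|\Pi')] + \KL(\Pi'\|\Pi)$ and drops the nonnegative $\KL(\Pi'\|\Pi)$, while you phrase it through the variational characterization and additionally keep (then discard) the $(1-c)\,\I_{\bar\calD}$ term; these are two packagings of the same inequality. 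One cosmetic point: your mixture construction needs $c<1$, but when $c=1$ the hypothesis forces $\calD=\calD'$ and the claim is trivial.
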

\begin{proof}
	Since $\calD \ge c\cdot\calD'$, we have that 
	$$\I(X;\Pi(X)) = \Exp_{X\sim \calD}\left[\KL(\Pi_X \| \Pi) \right] \ge  c\cdot\Exp_{X'\sim \calD'}\left[\KL(\Pi_{X'} \| \Pi)\right] $$
	Then note that 
	$$\Exp_{X'\sim \calD'}\left[\KL(\Pi_{X'} \| \Pi)\right]  = \Exp_{X'\sim \calD'}\left[\KL(\Pi_{X'} \| \Pi')\right]  + \KL(\Pi'\|\Pi) $$
	%\Tnote{This equality is a bit magic by indeed true by expansion. Don't know whether there is a better way to see it.}
	
	It follows that 
		$$\I(X;\Pi(X)) \ge c\cdot\Exp_{X\sim \calD'}\left[\KL(\Pi_X \| \Pi')\right]  = c\cdot\I(X';\Pi(X'))$$
\end{proof}
\begin{lemma}[Folklore]\label{lem:mutual-info-additivity}
	When $X$ is drawn from a product distribution, then 
	$$\sum_{i=1}^m \I(X_i;\Pi) \le \I(X;\Pi)$$
\end{lemma}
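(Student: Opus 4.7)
The plan is to apply the chain rule for mutual information and then use the independence of the coordinates of $X$ together with ``conditioning reduces entropy''. Concretely, I would first write
\[
\I(X;\Pi) \;=\; \I(X_1,\dots,X_m;\Pi) \;=\; \sum_{i=1}^m \I(X_i;\Pi\mid X_1,\dots,X_{i-1})
\]
via the chain rule. It then suffices to show that each conditional term is at least the unconditional term $\I(X_i;\Pi)$.

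For this, I would expand
\[
\I(X_i;\Pi\mid X_1,\dots,X_{i-1}) \;=\; H(X_i\mid X_1,\dots,X_{i-1}) - H(X_i\mid X_1,\dots,X_{i-1},\Pi).
\]
Because $X$ is drawn from a product distribution, the $X_j$'s are mutually independent, so $H(X_i\mid X_1,\dots,X_{i-1}) = H(X_i)$. On the other hand, conditioning on extra variables can only decrease entropy, so $H(X_i\mid X_1,\dots,X_{i-1},\Pi) \le H(X_i\mid \Pi)$. Combining these two facts yields
\[
\I(X_i;\Pi\mid X_1,\dots,X_{i-1}) \;\ge\; H(X_i) - H(X_i\mid \Pi) \;=\; \I(X_i;\Pi).
\]
Summing over $i\in[m]$ gives the desired inequality $\sum_i \I(X_i;\Pi)\le \I(X;\Pi)$.

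There is no real obstacle here: the statement is essentially a consequence of the chain rule plus independence, and is presented in the paper as folklore. The only thing to be mildly careful about is that the independence is used in exactly one place (dropping the conditioning in the first entropy term), whereas the ``conditioning reduces entropy'' step for the second term holds in full generality.
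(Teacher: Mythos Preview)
Your argument is correct and is exactly the standard proof of this folklore fact. The paper does not actually supply a proof of this lemma---it is stated as folklore without proof---so there is nothing to compare against; your chain-rule-plus-independence argument is precisely what one would write if asked to fill in the details.
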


%\begin{proof}
%	Same as Prop E.1 in previous paper. 
%	\end{proof}
%\begin{lemma}\label{lem:condition-info}
%	For any random variable $X$ and discrete random variable $\Pi$ over $\{0,1\}^*$, and any random variable $E$ over $\{0,1\}$, we have that 
%	$$\I(\Pi;X) \leq \Pr[E=1]\cdot\I(\Pi;X \mid E = 1) + \Pr[E = 0]\cdot|\Pi| + H(E)$$
%\end{lemma}
%\begin{claim}\label{claim:gaussian-truncation}
%	(incomplete): $\widetilde{\calD}_0 \le O(1)\cdot\widetilde{\calD}_1$
%\end{claim}
%\begin{proof}
%	(incomplete): Let $p,p'$ be the density function of gaussian distributions $\calD_0$ and $\calD_1$. It is very simple to verify that when $\mu\le \sigma/\sqrt{n\log(mn)}$, for any $z\in \mathcal{B}$, $\Omega(1)\le p(z)/p'(z)\le O(1)$. Since $\Pr[F] \ge 1-o(1)$, we have further that $\Omega(1)\le p(z\vert F)/p'(z\vert F)\le O(1)$. 
%	
%\end{proof}
%\begin{claim}\label{claim:truncated-dpi}
%	(incomplete): $\widetilde{\calD}_0, \cdot\widetilde{\calD}_1$ satisfy data processing inequality. 
%\end{claim}

%\bibliographystyle{alpha}
%\bibliography{ref}
\end{document}